\newcommand{\comment}[1]{}
\newcommand{\SET}[1]{\mathbf{#1}}
\renewcommand{\v}{v}				
\newcommand{\V}{\mathbf{V}}		
\newcommand{\E}{\mathbf{E}}
\renewcommand{\line}{\;\rule[.4ex]{.9em}{.5pt}\;}
\newcommand{\Rcup}{\sqcup}
\begin{document}


\title{Fast Causal Orientation Learning in Directed Acyclic Graphs}

\author{\name Ramin Safaeian  \email ramin.safaeian@ee.sharif.edu \\
       \addr Department of Electrical Engineering\\
       Sharif University of Technology\\
       Tehran, Iran
       \AND
       \name Saber Salehkaleybar \email saleh@sharif.edu \\
       \addr Department of Electrical Engineering\\
       Sharif University of Technology\\
       Tehran, Iran
       \AND
       \name Mahmoud Tabandeh \email tabandeh@sharif.edu \\
       \addr Department of Electrical Engineering\\
       Sharif University of Technology, \\ Tehran, Iran}


\maketitle

\begin{abstract}
Causal relationships among a set of variables are commonly represented by a directed acyclic graph. The orientations of some edges in the causal DAG can be discovered from observational/interventional data. Further edges can be oriented by iteratively applying so-called Meek rules. Inferring edges' orientations from some previously oriented edges, which we call Causal Orientation Learning (COL), is a common problem in various causal discovery tasks. In these tasks, it is often required to solve multiple COL problems and therefore applying Meek rules could be time consuming. Motivated by Meek rules, we introduce Meek functions that can be utilized in solving COL problems. In particular, we show that these functions have some desirable properties, enabling us to speed up the process of applying Meek rules. In particular, we propose a dynamic programming (DP) based method to apply Meek functions. Moreover, based on the proposed DP method, we present a lower bound on the number of edges that can be oriented as a result of intervention. We also propose a method to check whether some oriented edges belong to a causal DAG. Experimental results show that the proposed methods can outperform previous work in several causal discovery tasks in terms of running-time.

\end{abstract}

\begin{keywords}
  Causal Discovery, Structural Causal Models, Meek Rules, Causal Orientation Learning,  Experiment Design
\end{keywords}

\section{Introduction}

Recovering causal relationships from data is one of the ultimate goal in the empirical sciences. The behavior of a complex system is not fully understood unless one can infer how different variables in the system influence each other. Without such complete understanding, it might be infeasible to predict how the system behaves in response to some target interventions.

In the literature, the causal relationships among a set of variables in a system are commonly represented by a directed acyclic graph (DAG) where there is a directed edge from variable $X$ to variable $Y$ if $X$ is the direct cause of $Y$. From observational data, under causal sufficiency and faithfulness assumptions, the true causal graph can be identified up to a Markov equivalence class (MEC) which is the set of all DAGs representing the same set of conditional independences among
the variables. It has been shown that all DAGs in an MEC has the same skeleton\footnote{Two directed graphs have the same skeleton
	if they have the same set of vertices and edges regardless of
	their orientations.} and the set of v-structures\footnote{Three variables $X,Y,$ and $Z$ form a v-structure in a graph if $X$ and $Y$ are the direct cause of $Z$ and they are not connected to each other.} \citep{Pearl92}. Given the skeleton and set of v-structures, \cite{Meek95} proposed four rules, called Meek rules, to orient further edges in the graph by making no directed cycles or new v-structures.  These rules are applied iteratively on the causal graph until no further edges can be oriented by applying any of these rules.  However, some edges might be remained undirected even after applying Meek rules. The resulted graph is commonly called the essential graph which represented a summary of all DAGs in MEC in the sense that if there is a directed edge from $X$ to $Y$ in the essential graph, the same orientation holds in all DAGs in MEC. Otherwise, there exist at least two DAGs with the opposite directions of edge between $X$ and $Y$.

 In order to identify the whole causal structure, the golden standard procedure is to perform interventions in the system. For instance, by performing perfect randomized intervention on a variable, we are able to recover orientations of all edges incident with the intervened variable \citep{He08}. Based on these new oriented edges, we can apply Meek rules again to discover further edges' orientations. 

In both scenarios mentioned above (obtaining essential graph from observational data or discovering orientation of undirected edges from interventional data), we try to solve the problem of inferring new edges' orientations based on our current knowledge about the causal structure which might come from observational data, interventional data, or some prior knowledge. The discovered orientations should not make any new cycle or v-structure in the causal graph. We call this problem, ``causal orientation learning (COL)" problem. Meek rules can be utilized as a basis to solve COL problem. However, in various applications of causal structure learning (see the related work), it is required to solve several COL problems. Unfortunately, executing Meek rules for solving multiple COL problems might be time-consuming since these rules are applied for each problem separately in an iterative manner. Moreover, in some of the applications, the COL problems have structural similarities and the solution for one of the problems can be reused in other ones. In this paper, our main goal is to devise a solution to solve multiple COL problems efficiently in time. Moreover, the proposed solution can be used as a subroutine in various settings for causal structure learning. In the next part, we review some of these settings in which COL problem arises along with the solutions that have been proposed in the literature. Two main problems that we review here, are counting size of MEC and experiment design.

First, we consider the problem of counting size of MEC. The number of DAGs in an MEC can give an indication of the ``causal complexity" of the underlying true causal structure. For instance, in order to determine the range of possible causal strengths of  variables on a target variable, in one of the algorithms in \citep{Maathuis2009}, it has been proposed to enumerate all DAGs in an MEC and estimate the causal strength of each parent of the target variable in the corresponding DAG. Herein, the size of MEC shows the time complexity of the estimation procedure. As another example, \cite{He08} proposed multiple scores, based on the possible sizes of an MEC after performing intervention on each variable. The variable with the highest score is selected for doing intervention (we will discuss this problem in more detail in the next part). In fact, remaining too many DAGs in the resulted MEC after performing an intervention indicates the candidate variable is not suitable to be intervened on.
	
	\cite{He15} proposed the first efficient algorithm for counting the size of MEC instead of enumerating all DAGs. They first presented five closed-form formulas that can be used to calculate size of MECs having some specific relations between the number of nodes and edges. Next, they showed that counting size of an MEC can be partitioned into smaller sub-classes and the size of each of them can be computed in an iterative manner. In particular, the components corresponding to sub-classes (which are commonly called chain components) are obtained by removing the directed edges from the essential graph of MEC. In order to find the size of each sub-class, it is required to consider each variable in the corresponding component to be root and solve a COL problem to obtain further chain components in it. This procedure should be done recursively on each variable in the resulted components. Thus, we need to solve several COL problems and it may become time-consuming as the number of nodes increases. \cite{He15} proposed an efficient algorithm to solve each COL problem separately  which resembles applying Meek rules carefully according to properties of sub-classes. However, this solution still needs to solve multiple COL problems and does not reuse the results of a COL problem in the another one. \cite{He16} introduced a special structure of ``core graphs" and showed that the size of MEC is a polynomial of number of vertices given its core graph. Moreover, they proposed an algorithm in order to obtain this polynomial. Inside this algorithm, similar to their previous work, it is required to solve multiple COL problems in order to find chain components in the essential graphs. \cite{Saleh19} and \cite{Talviti19} proposed a dynamic programming based solution in order to compute the size of MEC. In this solution, whenever size of a chain component is computed, it is stored in memory and will be used later if we encounter the same chain component. In both works, it is necessary to solve multiple COL problems to obtain chain components and clique trees were utilized to efficiently orient edges in each COL problem.
	
Another important problem related to COL problem is experiment design problem. The whole causal structure can be recovered with sufficient number of interventions if it is feasible to intervene in the system. However, in most applications, performing interventions is too time-consuming or costly. The problem of experiment design is to devise a set of experiments where each experiment consists of multiple interventions performed simultaneously. The maximum number of interventions in each experiment is usually set to a fixed value (in some previous work, this value is equal to one \citep{Hauser14,He08,Saleh18}). The experiment design problem can be studied in two settings of passive or active learning. In the active setting, the experiments are performed sequentially and the result of an experiment is used in designing latter ones. More specifically, it is commonly assumed that the orientations of all edges incident with an intervened variable can be recovered by performing perfect randomized interventions. Based on these newly oriented edges, we can solve a COL problem to find resulted chain components after doing intervention. The goal is to recover the whole causal structure with minimum number of interventions.
	 In contrary, in the passive setting, we have a limited budget for performing experiments. All the experiments are designed based on the MEC obtained from observational data and performed in parallel. According to the results of experiments, we solve a COL problem to orient as much edges as possible. Here, the goal is to design a set of limited number of experiments (the number of interventions in each experiment is usually set to one) to recover the most part of the causal structure. 
	 
	 In the passive setting, \cite{He08} proposed a naive approach to enumerate all DAGs in an MEC and select a set of variables for interventions with minimum size such that all possible underlying DAGs can be recovered from the results of interventions after solving a COL problem. \cite{Saleh18} proposed a greedy algorithm to select the target interventions by sampling DAGs from the MEC and estimating the average number of edges oriented as a result of intervention. Again, in the sampling procedure, it is required to solve several COL problems similar to counting size of MECs. \cite{Kocaoglu17} proposed an algorithm for experiment design where there exists a cost for intervening each variable. They showed that the optimal solution can be obtained in polynomial time in the case of causal strucutre being a tree or a clique tree.

	 In the active setting, \cite{He08} introduced an entropy based score to select the intervened variable in each step. In this score, it is required to compute the size of MEC for any possible orientations of edges incident with a given variable. \cite{Hauser14} proposed a minimax criterion in order to select the target intervention. 
	 First, for any given variable, it is required to obtain the maximum number of undirected edges in the essential graphs that are consistent with different possible orientations of edges incident with the given variable. Next, from these variables, the one that minimizes this criterion is selected. In order to compute this criterion, \cite{Hauser14} presented an algorithm where, for any feasible orientations of edges connected to a given variable, the number of oriented edges as a result of these newly oriented edges is obtained by solving a COL problem through running LexBFS algorithm. In fact, one can utilize LexBFS algorithm to find an ordering over a chain component and orient edges based on this ordering without making directed cycles or v-structures. \cite{Shanmugam15} proposed a score based on graph coloring and separating system in order to select the node for the intervention. After observing the result of an intervention, a COL problem is solved to orient further edges by simply applying Meek rules. \cite{Ness17} focused on Bayesian approach to learn the causal structure based on a prior probability distribution on the underlying graph. They introduced an information gain to select the variables for performing interventions. 
	  Similarly, \cite{Uhler19} studied the problem of experiment design from Bayesian perspective considering limited number of samples. They proposed a greedy algorithm with guarantee on approximation quality. \cite{Kristjan19} also considered a Bayesian approach with the assumption of all chain components being trees. They proposed an algorithm with an approximation ratio of $2$, in the sense that the number of interventions is at most twice the minimum achievable number. \cite{Teshnizi20} proposed an efficient algorithm, called LazyIter, for iterating over MECs for all possible orientations of edges incident with a variable. Hence, it is required to solve multiple COL problems simultaneously. The authors exploited the similarities among possible essential graphs to avoid the recalculation of edges' orientations for each of them separately. The proposed algorithm has been utilized to solve problems of counting size of MEC and experiment design.
Besides counting size of MEC and experiment design problems, there exist some other related works to COL problem. For instance, \cite{Chickering1995} proposed an algorithm to recover oriented edges in the MEC of a DAG, given its skeleton and v-structures. \cite{Ramsey2016} considered a limited version of faithfulness assumption and presented a search algorithm to considerably speed up the process of finding the essential graph from the samples,  at the expense of allowing graphs to violate Markov factorization, i.e., for the conditional dependence and independence relations estimated from the samples. They also evaluated this model in large graphs with multi-million nodes.


The contributions of this paper are threefold:
\begin{itemize}
	\item We introduce rigorous mathematical representations of Meek rules, which we call Meek functions. Equipped with these functions, we present key properties of these functions that enable us to execute them separately on a causal graph and aggeragate their outputs in order to obtain the final result. Based on these properties, we can speed up the process of applying Meek rules for solving a COL problem.
	\item As mentioned before, in some cases, we are required to solve multiple COL problems with similar causal structures (such as COL problems in the experiment design problem). By exploiting Meek functions' properties, we can keep the results of some computations and reuse them in other similar COL problem. Based on this idea, we propose a method to solve experiment design problem in the active setting in an efficient manner. 
	\item We present a method to check efficiently whether a set of edges' orientations (which might come from some prior knowledge) are consistent in the sense that there is a DAG with the same orientations for the given edges. 
	\item We propose an efficient method to compute a lower bound on the number of oriented edges as a result of intervention. This lower bound can be utilized to select variables for intervention according to the minimax criterion \citep{Hauser14}. Experiments show that the gap between the lower bound and the true value is small and the same set of variables are selected if we use lower bounds instead of true ones.  
\end{itemize}

The rest of this paper is organized as follows: We begin with some background and terminologies in Section \ref{sec:Background}. We then introduce a rigorous representation of Meek rules and present some their key properties in Section \ref{sec:Meek rules}. In Section \ref{sec:applications}, based on these properties, we present methods for applying Meek rules on causal graphs and checking the consistency of some edges' orientations in a DAG. We also propose a lower bound on the number of oriented edges as a result of intervention. In Section \ref{sec:Experiments}, we report our experimental results for the proposed methods and the lower bound. Finally, in Section \ref{sec:Conclusion}, we conclude the paper and discuss some possible directions for future research.


\section{Background and Terminology}
\label{sec:Background}
A graph $G = (\V,\E)$ is represented by a set of nodes $\V$ and a set of edges $\E$. We denote the  undirected edge between a pair of nodes $\v_1,\v_2 \in \V$, by $\v_1 \line \v_2$. 
 A directed edge from node $\v_1$ to node $\v_2$ is also denoted by $\v_1 \rightarrow \v_2$. We assume that there exists at most one edge either directed or undirected  between each two nodes. We denote the set of all nodes that are connected to node $\v$ by $neigh(\v)$. We also define $Neigh(\v) = neigh(\v) \cup \{\v\}$. We call a subset of nodes of an undirected graph as  clique if every two nodes in that subset are adjacent. A clique is maximal if it is not a subset of a larger clique. We denote the set of maximal cliques in the neighborhood of node $\v$ by $\SET{C}(\v)$.

We say a graph $G$ is partially directed acyclic graph (PDAG) if it does not have any directed cycle \citep{Peters2017}. A graph $G$ is called directed acyclic graph (DAG) if it is PDAG and all the edges in $G$ are directed. We say that node $v_1$ is a parent of node $v_2$ if there is a directed edge from $v_1$ to $v_2$ in graph $G$. Moreover, the descendants of a node $v_1$ is the set of nodes that there are directed paths from $v_2$ to them. An induced sub-graph of graph $G$ containing vertices $\SET{T} \subseteq \V$  is defined to be a set of edges that contains all those edges in $\E$ with both end points in $\SET{T}$ and it is denoted by $\E[\SET{T}]$. We call an induced sub-graph in the form of $\v_1 \rightarrow \v_2 \leftarrow \v_3$ as a v-structure. If a graph has no partially directed cycle, it is called a chain graph. Removing all directed edges in a chain graph leaves some undirected disjoint chain graphs. These chain graphs are called, chain components. An undirected graph is said to be chordal if there exists a cord in any cycle with length more than three. 


Consider the set of variables $\mathcal{X}=\{X_1,\cdots,X_n\}$ in the system. Causal relationships between these variables are usually modeled by a DAG $G$, where each variable in $\mathcal{X}$ is mapped to one of the nodes in $G$, and an arrow between two nodes, like $\v_1 \rightarrow \v_2 $, shows the corresponding variable of $v_1$ in  $\mathcal{X}$ as a direct cause of the corresponding variable of $v_2$ in $\mathcal{X}$. We call this DAG, which represents causal relationships between variables, as a ``causal DAG". We say that a joint distribution $P$ over $\mathcal{X}$ satisfies Markov property with respect to $G$ if any variable of $G$ is independent of its non-descendants
given its parents. Under causal sufficiency and faithfulness
assumptions, any conditional independence in $P$ can be
inferred by Markov property \citep{Spirtes00}.
The set of all DAGs that encode the same conditional independence assertions are called Markov equivalence class (MEC). An essential graph corresponding to an MEC is a graph that has the same skeleton as all DAGs in that MEC and an edge is directed in essential graph if that edge has the same direction in those all DAGs.  It can be shown that the essential graph is a chain graph with chordal chain components, where each chain component is an undirected and connected chordal graph (UCCG for short) \citep{Andersson}.  We denote chain components of a graph $G$ by $\mathcal{C}(G)$. Note that the chain components are obtained by removing all the directed edges from the essential graph, where whole v-structures are identified. Thus, in any chain component, there is no undiscovered v-structure. In this manuscript, we use the term UCCG for denoting one of the chain components of an essential graph.

From the observational distribution $P$, the essential graph of underlying ground truth DAG can be recovered by performing conditional independence tests \citep{Spirtes00}. For further orienting the undirected edges, we need to intervene in the system. We use the same notion of hard intervention as in \cite{Eberhardt05} and \cite{Pearl09}. The procedure of intervention on a random variable $X$ is to force this variable to get its values from an independent randomized distribution, regardless of the values of its parents. For a set of interventions $I$, two DAGs $G_1$ and $G_2$ are called $\mathcal{I}$-Markov equivalent if they are statistically indistinguishable under interventions in $I$. Interventional MEC ($\mathcal{I}$-MEC) is the set of all DAGs that are $\mathcal{I}$-Markov equivalent. Moreover, the summary of all DAGs in an $\mathcal{I}$-MEC can be presented by an essential graph, which we call $\mathcal{I}$-essential graph.


In this paper, we assume that infinite samples from any observational or interventional distributions are available. Thus, the true essential graph of underlying causal model is available.

\section{Causal Edge Orientation}
\label{sec:Meek rules}
In this section, we first describe Meek rules \citep{Meek95}. These rules enable us to orient further edges since we know the underlying causal graph is a DAG. Then, in the next section, we present a mathematical representation for Meek rules. Specifically, we define Meek functions that help us to formulate the Meek rules in more rigorous way. In the last part, we present some properties of Meek functions. As we will see later, these properties can be utilized to solve COL problems in an efficient manner.

\subsection{Meek rules}

As we mentioned in the previous section, the essential graph is a partially directed acyclic graph that will be identified from the observational distribution \citep{Spirtes00}.
 Note that the skeleton and all the v-structures of the essential graph are the same as the skeleton and v-structures of underlying causal DAG and they can be identified by performing some conditional independence tests \citep{Pearl92}. Additional edges in the essential graph can be oriented based on these two facts: (a) there is no more undiscovered v-structure, and (b) the underlying causal DAG is acyclic. Every such additional edges can be identified by repeatedly applying Meek rules \citep{Meek95}. It can be shown that Meek rules are complete and sound in recovering the essential graph \citep{Meek95}. 
 
 There are four Meek rules that are given in Table \ref{table:Meek}. In this table, we illustrate how an edge will be oriented after applying each Meek rule. Each column corresponds to one of the Meek rules. The graph in the first row in each column will be converted to the graph in the second row in that column after applying the corresponding Meek rule. Thus, some undirected edges in the graph in the first row have been oriented in the graph in the second row. By considering all possible orientations for corresponding sub-graphs of Meek rules 3 and 4 , it can be shown that there is no need to consider orientations of dashed lines for applying these Meek rules as long as the dashed lines do not belong to a v-structure.
\usetikzlibrary{decorations.markings,arrows.meta}
\tikzset
{midarrow/.style={decoration={markings,mark=at position 0.6 with
			{\arrow[xshift=2pt]{Latex[length=8pt,#1]}}},postaction={decorate}}
}

\begin{table}[ht]
	\begin{center}
	\caption{Meek rules}
	\begin{tabular}{|c|c|c|c|c|}
		\hline
		Rule &Meek Rule 1 & Meek Rule 2 & Meek Rule 3 & Meek Rule 4 \\ 
		\hline
		\multirow{5}{*}{}
    	&
		\multirow{5}{*}{
		\begin{tikzpicture}[thick, scale=1.3]
		\node ($\v_i$) at (0,1.2) {$\v_i$};
		\node ($\v_k$) at (0,-.2) {$\v_k$};
		\node ($\v_j$) at (1,-.2) {$\v_j$};
		\draw[fill=black] (0,1) circle (1.2pt);
		\draw[fill=black] (0,0) circle (1.2pt);
		\draw[fill=black] (1,0) circle (1.2pt);
		\draw[midarrow] (0,1) -- (0,0);
		\draw[]			(0,0) -- (1,0);
		\end{tikzpicture}}
		&		
		\multirow{5}{*}{
		\begin{tikzpicture}[thick, scale=1.3]
		\node ($\v_i$) at (0,1.2) {$\v_i$};
		\node ($\v_k$) at (0,-.2) {$\v_k$};
		\node ($\v_j$) at (1,-.2) {$\v_j$};
			\draw[fill=black] (0,1) circle (1.2pt);
			\draw[fill=black] (0,0) circle (1.2pt);
			\draw[fill=black] (1,0) circle (1.2pt);
			\draw[midarrow] (0,1) -- (0,0);
			\draw[midarrow] (0,0) -- (1,0);
			\draw[]  		(0,1) -- (1,0);
			\end{tikzpicture}
			}
			&
		\multirow{5}{*}{
				\begin{tikzpicture}[thick, scale=.8]
			\node ($\v_i$) at (1,2.3) {$\v_i$};
			\node ($\v_k$) at (1,-.3) {$\v_k$};
			\node ($\v_l$) at (-.3,1) {$\v_l$};
			\node ($\v_j$) at (2.3,1) {$\v_j$};
			\draw[fill=black] (1,2) circle (1.5pt);
			\draw[fill=black] (1,0) circle (1.5pt);
			\draw[fill=black] (0,1) circle (1.5pt);
			\draw[fill=black] (2,1) circle (1.5pt);
			\draw[dashed]     (1,2) -- (2,1);
			\draw[dashed]     (1,2) -- (0,1);
			\draw[]			  (1,2) -- (1,0);
			\draw[midarrow]   (2,1) -- (1,0);
			\draw[midarrow]   (0,1) -- (1,0);
			\end{tikzpicture}
			}
			& 
			
		\multirow{5}{*}{
			\begin{tikzpicture}[thick, scale=.8]
			\node ($\v_i$) at (1,2.3) {$\v_i$};
			\node ($\v_k$) at (1,-.3) {$\v_k$};
			\node ($\v_l$) at (-.3,1) {$\v_l$};
			\node ($\v_j$) at (2.3,1) {$\v_j$};
			\draw[fill=black] (1,2) circle (1.5pt);
			\draw[fill=black] (1,0) circle (1.5pt);
			\draw[fill=black] (0,1) circle (1.5pt);
			\draw[fill=black] (2,1) circle (1.5pt);
			\draw[dashed]     (1,2) -- (2,1);
			\draw[]           (1,2) -- (0,1);
			\draw[dashed]	  (1,2) -- (1,0);
			\draw[midarrow]	  (2,1) -- (1,0);
			\draw[midarrow]   (1,0) -- (0,1);
			\end{tikzpicture}
			} 
		\\ &&&& 
		\\ Corresponding &&&& 
		\\ sub-graph &&&& 
		\\ &&&& 
		\\ &&&& 
		\\
		\hline
		\multirow{5}{*}{}
		&
				\multirow{5}{*}{
			\begin{tikzpicture}[thick, scale=1.3]
		\node ($\v_i$) at (0,1.2) {$\v_i$};
		\node ($\v_k$) at (0,-.2) {$\v_k$};
		\node ($\v_j$) at (1,-.2) {$\v_j$};
		\draw[fill=black] (0,1) circle (1.2pt);
		\draw[fill=black] (0,0) circle (1.2pt);
		\draw[fill=black] (1,0) circle (1.2pt);
		\draw[midarrow] (0,1) -- (0,0);
		\draw[midarrow] (0,0) -- (1,0);
		\end{tikzpicture}
		}
			&
				\multirow{5}{*}{
			\begin{tikzpicture}[thick, scale=1.3]
		\node ($\v_i$) at (0,1.2) {$\v_i$};
		\node ($\v_k$) at (0,-.2) {$\v_k$};
		\node ($\v_j$) at (1,-.2) {$\v_j$};
			\draw[fill=black] (0,1) circle (1.2pt);
			\draw[fill=black] (0,0) circle (1.2pt);
			\draw[fill=black] (1,0) circle (1.2pt);
			\draw[midarrow] (0,1) -- (0,0);
			\draw[midarrow] (0,0) -- (1,0);
			\draw[midarrow] (0,1) -- (1,0);
			\end{tikzpicture}	
			}
		& 
				\multirow{5}{*}{
				\begin{tikzpicture}[thick, scale=.8]
			\node ($\v_i$) at (1,2.3) {$\v_i$};
			\node ($\v_k$) at (1,-.3) {$\v_k$};
			\node ($\v_l$) at (-.3,1) {$\v_l$};
			\node ($\v_j$) at (2.3,1) {$\v_j$};
			\draw[fill=black] (1,2) circle (1.5pt);
			\draw[fill=black] (1,0) circle (1.5pt);
			\draw[fill=black] (0,1) circle (1.5pt);
			\draw[fill=black] (2,1) circle (1.5pt);
			\draw[dashed]     (1,2) -- (2,1);
			\draw[dashed]     (1,2) -- (0,1);
			\draw[midarrow]	  (1,2) -- (1,0);
			\draw[midarrow]   (2,1) -- (1,0);
			\draw[midarrow]   (0,1) -- (1,0);
			\end{tikzpicture}
			}
			&  
					\multirow{5}{*}{
		\begin{tikzpicture}[thick, scale=.8]
		\node ($\v_i$) at (1,2.3) {$\v_i$};
		\node ($\v_k$) at (1,-.3) {$\v_k$};
		\node ($\v_l$) at (-.3,1) {$\v_l$};
		\node ($\v_j$) at (2.3,1) {$\v_j$};
		\draw[fill=black] (1,2) circle (1.5pt);
		\draw[fill=black] (1,0) circle (1.5pt);
		\draw[fill=black] (0,1) circle (1.5pt);
		\draw[fill=black] (2,1) circle (1.5pt);
		\draw[dashed]     (1,2) -- (2,1);
		\draw[midarrow]	  (1,2) -- (0,1);
		\draw[dashed]     (1,2) -- (1,0);
		\draw[midarrow]	  (2,1) -- (1,0);
		\draw[midarrow]   (1,0) -- (0,1);
		\end{tikzpicture}
		}
		\\ After applying &&&& 
		\\ Meek rule on &&&& 
		\\ corresponding &&&& 
		\\ sub-graph &&&& 
		\\ &&&& 
		\\
		\hline	
		
		\multirow{5}{*}{}
    	&
		\multirow{5}{*}{
		\begin{tikzpicture}[thick, scale=1.3]
		\node ($\v_i$) at (0,1.2) {$\v_i$};
		\node ($\v_k$) at (0,-.2) {$\v_k$};
		\node ($\v_j$) at (1,-.2) {$\v_j$};
		\draw[fill=black] (0,1) circle (1.2pt);
		\draw[fill=black] (0,0) circle (1.2pt);
		\draw[fill=black] (1,0) circle (1.2pt);
		\draw[midarrow] (0,1) -- (0,0);
		\draw[]			(0,0) -- (1,0);
		\end{tikzpicture}}
		&		
		\multirow{5}{*}{
		\begin{tikzpicture}[thick, scale=1.3]
		\node ($\v_i$) at (0,1.2) {$\v_i$};
		\node ($\v_k$) at (0,-.2) {$\v_k$};
		\node ($\v_j$) at (1,-.2) {$\v_j$};
			\draw[fill=black] (0,1) circle (1.2pt);
			\draw[fill=black] (0,0) circle (1.2pt);
			\draw[fill=black] (1,0) circle (1.2pt);
			\draw[midarrow] (0,1) -- (0,0);
			\draw[midarrow] (0,0) -- (1,0);
			\draw[]  		(0,1) -- (1,0);
			\end{tikzpicture}
			}
			&
		\multirow{5}{*}{
				\begin{tikzpicture}[thick, scale=.8]
			\node ($\v_i$) at (1,2.3) {$\v_i$};
			\node ($\v_k$) at (1,-.3) {$\v_k$};
			\node ($\v_l$) at (-.3,1) {$\v_l$};
			\node ($\v_j$) at (2.3,1) {$\v_j$};
			\draw[fill=black] (1,2) circle (1.5pt);
			\draw[fill=black] (1,0) circle (1.5pt);
			\draw[fill=black] (0,1) circle (1.5pt);
			\draw[fill=black] (2,1) circle (1.5pt);
			\draw[dashed]     (1,2) -- (2,1);
			\draw[dashed]     (1,2) -- (0,1);
			\draw[]			  (1,2) -- (1,0);
			\draw[midarrow]   (2,1) -- (1,0);
			\draw[midarrow]   (0,1) -- (1,0);
			\end{tikzpicture}
			}
			& 
			
		\multirow{5}{*}{
			\begin{tikzpicture}[thick, scale=.8]
			\node ($\v_i$) at (1,2.3) {$\v_i$};
			\node ($\v_k$) at (1,-.3) {$\v_k$};
			\node ($\v_l$) at (-.3,1) {$\v_l$};
			\node ($\v_j$) at (2.3,1) {$\v_j$};
			\draw[fill=black] (1,2) circle (1.5pt);
			\draw[fill=black] (1,0) circle (1.5pt);
			\draw[fill=black] (0,1) circle (1.5pt);
			\draw[fill=black] (2,1) circle (1.5pt);
			\draw[dashed]     (1,2) -- (2,1);
			\draw[]	          (1,2) -- (0,1);
			\draw[dashed]	  (1,2) -- (1,0);
			\draw[midarrow]	  (2,1) -- (1,0);
			\draw[dashed]     (1,0) -- (0,1);
			\end{tikzpicture}
			} 
		\\ Corresponding &&&& 
		\\ candidate &&&& 
		\\ sub-graph &&&& 
		\\ &&&& 
		\\ &&&& 
		\\
		\hline	
	\end{tabular}
		\label{table:Meek}
	\end{center}
\end{table}


In addition to applying Meek rules to obtain essential graph, we can utilize them to orient further edges after performing an intervention on a node to obtain  $\mathcal{I}$-essential graph.
 In this case, after performing a perfect randomized intervention, the orientations of all edges between the intervened node and its neighbors will be recovered \citep{He08}. Applying Meek rules on the graph with these new oriented edges, results in orienting  additional edges. This procedure will be continued till no edge can be discovered. The resulted graph would be the $\mathcal{I}$-essential graph after the intervention.

To wrap up, applying Meek rules on a graph with some new directed edges (for instance, oriented edges in the neighborhood of a node after performing an intervention), may recover the orientations of further edges in the graph. 
Having  more directed edges means that more causal relations have been revealed from the underlying causal graph. 
Thus, Meek rules can be seen as a tool to infer new causal orientations based on some prior knowledge about the causal graph.
%
 We will present a more formal representation of these rules in the next part. 

\subsection{Meek Functions}
In this part, we define Meek functions. These functions take a set of edges as their inputs and return another set containing already directed edges in the input set and some new edges that are oriented as a result of repeatedly applying one of the Meek rules. 
Furthermore, we define a ``candidate sub-graph'' for each of these functions. Candidate sub-graph is an induced sub-graph, with minimal number of nodes, that applying particular Meek rule on that sub-graph, results in orienting one or more edges inside that sub-graph. We give a concise definition of candidate sub-graphs for each of the Meek functions in the following. We first provide some definitions.

\begin{definition}[Mixed-edge union operator] Let $\SET{A}$ and $\SET{B}$ be sets of some directed and undirected edges. We define $\SET{A} \sqcup \SET{B}$ as a set of edges with the following property:
\begin{align*}
\SET{A} \Rcup \SET{B} = \SET{C} \backslash \{v_i \line v_j | v_i \rightarrow v_j \in \SET{C} \mbox{ or } v_j \rightarrow v_i \in \SET{C}, \forall i,j \in \V  \},
\end{align*}
where $\SET{C} = \SET{A} \cup \SET{B}$. In other words, $\SET{A} \sqcup \SET{B}$ keeps the directed edge $v_i \rightarrow v_j$ if it exists in $\SET{A}$ or $\SET{B}$. Moreover, it keeps the undirected edge $v_i \line v_j$ if this edge has not been oriented in either $\SET{A}$ or $\SET{B}$.

\end{definition}
\begin{definition}[Skeleton of a graph] Let $G = (\V,\E)$ be a PDAG, $\V$ be the set of nodes and $\E$ be the set of edges. Assume that we replace all directed edges in this partially directed graph with the undirected ones. We denote the set of all edges in this undirected graph with $\overline{\E}$. Thus, we have:
	\label{def:Undir} 
		\begin{align*}
		&\overline{\E} = \left \{\v_i \line \v_j | v_i \rightarrow\v_j \in \E \text{ or }  \v_i \line \v_j \in \E, \forall i,j \in \V \right \}.
		\end{align*}
\end{definition}
\begin{definition}[Set of directed edges] Let $G = (\V,\E)$ be a PDAG. We denote the set of all directed edges in $\E$ by $\overrightarrow{\E}$:
		\begin{align*}
		&\overrightarrow{\E} = \left \{\v_i \rightarrow \v_j | \v_i \rightarrow\v_j \in \E, \forall i,j \in \V \right \}.
		\end{align*}
\end{definition}

Now, we define a candidate sub-graph for each Meek rule. These candidate sub-graphs are depicted in the third row in Table \ref{table:Meek}. Examples for Meek candidate sub-graphs are shown in Figure \ref{Fig:candidate}. The candidate sub-graphs for Meek rules 1, 2 and 3 are as the same of their corresponding  sub-graphs, while Meek rule 4 candidate sub-graph deviates from its corresponding sub-graph. In the following, we show that applying Meek rules are sound and complete. The proof of all lemmas and theorems are given in the appendix.

\begin{theorem}[Orientation soundness]
	\label{thm:Orientationsoundness}
	Considering candidate sub-graphs in Table \ref{table:Meek}, the four orientation rules are sound.
\end{theorem}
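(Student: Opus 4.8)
The plan is to argue directly from the definition of soundness by contradiction. Recall that an orientation rule is \emph{sound} if, whenever it orients an undirected edge of a PDAG $H$ that admits a consistent extension, the edge receives the prescribed orientation in \emph{every} consistent extension of $H$ --- i.e. in every DAG that has the same skeleton and the same v-structures as $H$ and that contains all of $H$'s directed edges; equivalently, performing the orientation deletes no DAG from the set of consistent extensions. So I fix one of the four rules, an occurrence of its candidate sub-graph (third row of Table~\ref{table:Meek}) inside such an $H$, and an arbitrary consistent extension $D$; I assume toward a contradiction that $D$ orients the edge in question opposite to the rule, and I derive that $D$ then contains either a directed cycle (impossible, $D$ is a DAG) or a v-structure not occurring in $H$ (impossible, $D$ and $H$ share their v-structures). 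Since the candidate sub-graphs of Rules~1--3 coincide with the sub-graphs literally stated in those rules, these three cases reproduce the classical arguments of \cite{Meek95}; the only new content is Rule~4, whose candidate sub-graph deviates from the literal one.

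For Rule~1 (candidate sub-graph $\v_i\rightarrow\v_k$, $\v_k\line\v_j$, $\v_i\not\sim\v_j$, orient $\v_k\rightarrow\v_j$): if $D$ had $\v_j\rightarrow\v_k$, then $\v_i\rightarrow\v_k\leftarrow\v_j$ would be a v-structure of $D$ (as $\v_i\not\sim\v_j$) but not of $H$, where $\v_k\line\v_j$ is undirected. For Rule~2 (candidate sub-graph $\v_i\rightarrow\v_k\rightarrow\v_j$, $\v_i\line\v_j$, orient $\v_i\rightarrow\v_j$): if $D$ had $\v_j\rightarrow\v_i$, then $\v_i\rightarrow\v_k\rightarrow\v_j\rightarrow\v_i$ would be a directed cycle. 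For Rule~3 (candidate sub-graph $\v_i\line\v_j$, $\v_i\line\v_l$, $\v_i\line\v_k$, $\v_j\rightarrow\v_k$, $\v_l\rightarrow\v_k$, $\v_j\not\sim\v_l$, orient $\v_i\rightarrow\v_k$): if $D$ had $\v_k\rightarrow\v_i$, then acyclicity along $\v_j\rightarrow\v_k\rightarrow\v_i$ forces $\v_j\rightarrow\v_i$ in $D$ (else $\v_i\rightarrow\v_j\rightarrow\v_k\rightarrow\v_i$ is a cycle), and symmetrically $\v_l\rightarrow\v_i$, so $\v_j\rightarrow\v_i\leftarrow\v_l$ is a v-structure of $D$ (as $\v_j\not\sim\v_l$) absent from $H$.

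The main obstacle is Rule~4, whose candidate sub-graph only requires the undirected edge $\v_i\line\v_l$, the arc $\v_j\rightarrow\v_k$, and the edges $\v_i\sim\v_j$, $\v_i\sim\v_k$, $\v_k\sim\v_l$ present in $H$ with unconstrained orientations and with none of them lying on a v-structure of $H$, together with $\v_j\not\sim\v_l$; the rule orients $\v_i\rightarrow\v_l$. The argument I would give has two steps. First, recover the missing arc: if $D$ contained $\v_l\rightarrow\v_k$, then $\v_j\rightarrow\v_k\leftarrow\v_l$ would be a v-structure of $D$, hence of $H$, placing the edge $\v_k\sim\v_l$ on a v-structure of $H$ and contradicting the hypothesis on the candidate sub-graph; so $D$ contains $\v_k\rightarrow\v_l$. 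Second, assume toward a contradiction that $D$ contains $\v_l\rightarrow\v_i$ and split on the orientation of $\v_i\sim\v_k$ in $D$: if $\v_i\rightarrow\v_k$, then $\v_i\rightarrow\v_k\rightarrow\v_l\rightarrow\v_i$ is a directed cycle; if $\v_k\rightarrow\v_i$, then, exactly as in Rule~3, acyclicity along $\v_j\rightarrow\v_k\rightarrow\v_i$ forces $\v_j\rightarrow\v_i$, and $\v_j\rightarrow\v_i\leftarrow\v_l$ is then a v-structure of $D$ (as $\v_j\not\sim\v_l$) that is absent from $H$, where $\v_i\line\v_l$ is undirected. Both branches contradict $D$ being a consistent extension, so $\v_i\rightarrow\v_l$ holds in every such $D$. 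I expect the delicate points to be the first step --- correctly invoking the stipulation that the dashed edges of a candidate sub-graph do not lie on v-structures of $H$ --- and, more generally, checking for each rule that the non-adjacency and no-v-structure stipulations attached to its candidate sub-graph are precisely what is needed for the contradiction to close (including the degenerate situations in which a dashed edge is already directed in $H$).
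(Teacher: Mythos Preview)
Your argument is correct. The route, however, differs from the paper's. The paper does not re-derive Meek's classical cases: for Rules~1--3 it simply observes that the candidate sub-graphs coincide with Meek's original sub-graphs and cites \cite{Meek95} for soundness; for Rule~4 it proves a small reduction lemma showing that applying Rule~1 to the candidate sub-graph (using $\v_j\rightarrow\v_k$, $\v_j\not\sim\v_l$, and the no-v-structure stipulation on the dashed edge $\v_k\sim\v_l$) forces $\v_k\rightarrow\v_l$, thereby recovering the original Rule~4 sub-graph, and then again cites \cite{Meek95}. Your Step~1 is exactly the semantic content of that reduction lemma (phrased at the level of consistent extensions rather than as ``Rule~1 fires''), and your Step~2 is the Meek argument the paper merely cites. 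So you have unpacked what the paper black-boxes: your proof is self-contained and makes explicit where each hypothesis (non-adjacency, dashed-edges-not-on-v-structures) is consumed, at the price of being longer; the paper's version is shorter but leans on the reader already knowing the classical soundness proofs.
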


Based on the above definitions, we are now ready to define Meek functions. 
More specifically, let $G=(\V,\E)$ be a PDAG with set of nodes $\V$ and set of edges  $\E$.
 \begin{definition} [Meek functions]
 We define $M_i$ as a function that takes $\E$ as input and returns the set of edges $\E_{M_i}$ as the result of applying Meek rule i repeatedly on the corresponding candidate sub-graphs of Meek rule i in $\E$ until no candidate sub-graph can be found and as a result no more edge can be oriented by this rule. In other words, we have: $M_i(\E) = \E_{M_i}$.
 \end{definition}
\begin{remark}
  Since there exists no candidate sub-graph for Meek rule i 
  in the result of  $M_i(\E)$, we have: $M_i(M_i(\E)) = M_i(\E)$. 
\end{remark}
  






Furthermore, in some parts of the paper, we overload the notation of Meek function $M_1$ with the second argument $\E_f$, which is called as ``forbidden edges". The overloaded function is in the form of $M_1(\E,\E_f)$. Forbidden edges are the edges that Meek function $M_1$ does not allow to orient those edges. We use this type of function $M_1$ for the recovering essential graphs in Section \ref{sec:Experiments}. In the following, we give examples of both types of function $M_1$.

\begin{example}
	\label{Ex:exampleMeekFunctionM1}
    Given a graph $G=(\V,\E)$, where $\V = \{\v_1,\v_2,\v_3,\v_4\}$ and $\E =\{ \v_1 \rightarrow \v_2,\v_2 \line \v_3,\v_3 \line \v_4 \}$, we have:
    	\begin{align*}
&M_{1} (\E) = \{ \v_1 \rightarrow \v_2 ,\v_2 \rightarrow \v_3,\v_3 \rightarrow \v_4 \}\\
    &M_{1} (\E,\{\v_2 \line \v_3\}) = \{ \v_1 \rightarrow \v_2,\v_2 \line \v_3,\v_3 \line \v_4 \}
	\end{align*}
\end{example}

\begin{figure}
    \begin{center}
		\begin{tabular}{ccccc}
				\begin{tabular}{c}
				\begin{tikzpicture}[thick, scale=.8]
				\node (a) at (1,2.3) {$v_a$};
				\node (b) at (1,-.3) {$v_b$};
				\node (c) at (-.3,1) {$v_c$};
				\node (d) at (2.3,1) {$v_d$};
				\node (e) at (3,2.3) {$v_e$};
				\draw[fill=black] (1,2) circle (1.5pt);
				\draw[fill=black] (1,0) circle (1.5pt);
				\draw[fill=black] (0,1) circle (1.5pt);
				\draw[fill=black] (2,1) circle (1.5pt);
				\draw[fill=black] (3,2) circle (1.5pt);
				\draw[midarrow]           (2,1) -- (1,2);
				\draw[midarrow]	  (0,1) -- (1,2);
				\draw[]			  (1,2) -- (1,0);
				\draw[midarrow]	  (1,0) -- (2,1) ;
				\draw[midarrow]   (0,1) -- (1,0);
				\draw[]   (1,2) -- (3,2);
				\draw[midarrow]   		  (2,1) -- (3,2);
				\end{tikzpicture}
				
			\end{tabular}		
		
		&
				\begin{tabular}{c}
				\begin{tikzpicture}[thick, scale=.8]
				
				\node (a) at (1,2.3) {$v_a$};
				\node (c) at (-.3,1) {$v_c$};
				\node (e) at (2.5,2.3) {$v_e$};
				
				\draw[fill=black] (1,2) circle (1.5pt);
				\draw[fill=black] (0,1) circle (1.5pt);
				\draw[fill=black] (2.5,2) circle (1.5pt);
				\draw[midarrow]	  (0,1) -- (1,2);
				\draw[]   (1,2) -- (2.5,2);
				\end{tikzpicture}
			\end{tabular}	
			&
						\begin{tabular}{c}
				\begin{tikzpicture}[thick, scale=.8]

				\node (a) at (1,2.3) {$v_a$};
				\node (b) at (1,-.3) {$v_b$};
				\node (d) at (2.3,1) {$v_d$};
				\draw[fill=black] (1,2) circle (1.5pt);
				\draw[fill=black] (1,0) circle (1.5pt);
				\draw[fill=black] (2,1) circle (1.5pt);
				\draw[midarrow]           (2,1) -- (1,2);
				\draw[]			  (1,2) -- (1,0);
				\draw[midarrow]	  (1,0) -- (2,1) ;
				\end{tikzpicture}
				
			\end{tabular}	
			&				\begin{tabular}{c}
				\begin{tikzpicture}[thick, scale=.8]
				
				\node (a) at (1,2.3) {$v_a$};
				\node (b) at (1,-.3) {$v_b$};
				\node (c) at (-.3,1) {$v_c$};
				\node (d) at (2.3,1) {$v_d$};

				\draw[fill=black] (1,2) circle (1.5pt);
				\draw[fill=black] (1,0) circle (1.5pt);
				\draw[fill=black] (0,1) circle (1.5pt);
				\draw[fill=black] (2,1) circle (1.5pt);
				\draw[midarrow]           (2,1) -- (1,2);
				\draw[midarrow]	  (0,1) -- (1,2);
				\draw[]			  (1,2) -- (1,0);
				\draw[midarrow]	  (1,0) -- (2,1) ;
				\draw[midarrow]   (0,1) -- (1,0);
				\end{tikzpicture}
				
			\end{tabular}		
		
		&
		\begin{tabular}{c}
				\begin{tikzpicture}[thick, scale=.8]
				\node (a) at (1,2.3) {$v_a$};
				\node (b) at (1,-.3) {$v_b$};
				\node (d) at (2.3,1) {$v_d$};
				\node (e) at (3,2.3) {$v_e$};
				\draw[fill=black] (1,2) circle (1.5pt);
				\draw[fill=black] (1,0) circle (1.5pt);
				\draw[fill=black] (2,1) circle (1.5pt);
				\draw[fill=black] (3,2) circle (1.5pt);
				\draw[midarrow]           (2,1) -- (1,2);
				\draw[]			  (1,2) -- (1,0);
				\draw[midarrow]	  (1,0) -- (2,1) ;
				\draw[]   (1,2) -- (3,2);
				\draw[midarrow]   		  (2,1) -- (3,2);
				\end{tikzpicture}
				
			\end{tabular}	
			\\
			(a) & (b) & (c) & (d) & (e)
			\\
		\end{tabular}
	\caption{(a): A partially directed acyclic graph ($G$). (b): Induced candidate sub-graph of $G$ for Meek rule 1. (c): Induced candidate sub-graph of $G$ for Meek rule 2. (d): Induced candidate sub-graph of $G$ for Meek rule 3. (e): Induced candidate sub-graph of $G$ for Meek rule 4.}
	\label{Fig:candidate}
    \end{center}
\end{figure}
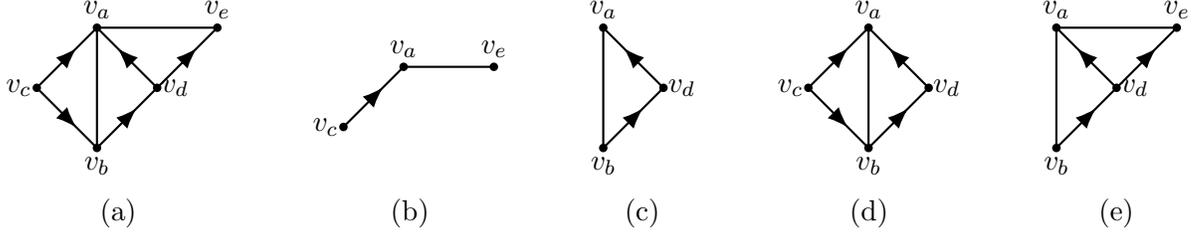

In practice, we need to apply multiple Meek functions to discover more edges' orientation. Any combination of Meek rules can be considered. For example, we define $M_{123}(\E)=\E_{M_{123}}$ as a function that takes set of edges $\E$ as input and returns the set of edges $\E_{M_{123}}$ as the result, by applying Meek rules 1, 2, and 3 in any order, repeatedly, till no new edge can be discovered by applying any of these Meek rules. Again, the function $M_{123}$ has the property that $M_{123}(M_{123}(\E)) = M_{123}(\E)$.


Now, we define minimal PDAG (for short MPDAG), which is the minimal graph, in terms of number of oriented edges, for representing equivalence class of a DAG.

\begin{definition}[Minimal PDAG] A minimal PDAG is a PDAG which is obtained by replacing some directed edges in a DAG with unidrected one, those edges that do not participate in a v-structure.
We denote a minimal PDAG by $G_o = (\V,\E)$.
\end{definition}

The following two theorems guarantee that applying Meek functions on the corresponding candidate sub-graphs are complete. One of these theorems is on the completeness of  applying Meek functions on MPDAG, and the other one is on the completeness of applying Meek functions on MPDAG which has the some further oriented edge, which are selected from one of the DAGs in the corresponding MEC.

\begin{definition}[Maximally oriented graph] A PDAG $G=(\V,\E)$ in the MEC of a MPDAG $G_o$ is a maximally oriented graph such that for each undirected edge $\v_i \line \v_j$ in $G$, if there exist DAGs $D1=(\V,\E_1)$ and $D_2=(\V,\E_2)$ in the MEC of $G_o$ such that we have $\v_i \rightarrow \v_j \in \E_1$ and $\v_j \rightarrow \v_i \in \E_2$, where $\overrightarrow{\E} \subseteq \E_1$ and $\overrightarrow{\E} \subseteq \E_2$.
\end{definition}

\begin{theorem}[Orientation completeness on MPDAG]
	\label{thm:MPDAGcompleteness}
	Let $G_o=(\V,\E)$ be a MPDAG. The result of applying $M_{123}$ on $\E$ is a maximally oriented graph.
\end{theorem}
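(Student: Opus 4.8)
The plan is to show that the graph $H$ whose edge set is $M_{123}(\E)$ meets the two requirements in the definition of a maximally oriented graph: (i) the directed part $\overrightarrow{M_{123}(\E)}$ is contained in the edge set of \emph{every} DAG in the MEC of $G_o$, so that $H$ is a PDAG lying in that MEC; and (ii) for every edge left undirected in $H$ there are two DAGs in the MEC, both containing $\overrightarrow{M_{123}(\E)}$, that orient this edge oppositely. Part (i) is immediate from Theorem~\ref{thm:Orientationsoundness}: since $G_o$ is obtained from a DAG by un-orienting only non-collider edges, every DAG in the MEC of $G_o$ has the skeleton and v-structures of $G_o$ and contains its directed edges; orientation soundness says each edge oriented by a Meek rule keeps that orientation (and no new v-structure or cycle is created) in all such DAGs, and iterating this along the derivation of $M_{123}(\E)$ gives $\overrightarrow{M_{123}(\E)} \subseteq \E_D$ for every DAG $D$ in the MEC, as well as $H$ and $G_o$ having the same v-structures. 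The substance is in part (ii).

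First I would establish structural facts about $H$: it is a chain graph (no partially directed cycle), and after deleting its directed edges each chain component is an undirected connected chordal graph. Absence of partially directed cycles follows from part (i) (a consistent PDAG admits a DAG extension and hence has no partially directed cycle), and chordality of the chain components follows from closure under Meek rules~1 and~3 together with the fact that all v-structures are already oriented, essentially as in \cite{Meek95} and \cite{Andersson}. I would also record the local consequence of closure under Meek rule~1 that drives the rest of the argument: if $a \rightarrow \v$ is a directed edge of $H$ and $\v \line w$ is undirected in $H$, then $a$ and $w$ are adjacent in $H$ (otherwise rule~1 would orient $\v \rightarrow w$).

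Next I would use a standard orientation lemma for an undirected connected chordal graph $C$: for any edge $\v_i \line \v_j$ of $C$ there is an acyclic orientation of $C$ with no v-structure among the edges of $C$ in which $\v_i \rightarrow \v_j$ --- run LexBFS from $\v_i$, orient every edge from the earlier to the later vertex, and note that chordality makes the LexBFS-earlier neighbours of any vertex a clique, so no v-structure arises, while starting at $\v_i$ forces all of $\v_i$'s edges to be outgoing. To build $D_1$: fix a topological order of the chain components of $H$; orient the component $C^{\star}$ containing the target edge $\v_i \line \v_j$ by the lemma so that $\v_i \rightarrow \v_j$, orient every other chain component by any such moral acyclic orientation, and keep all directed edges of $H$. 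Then $D_1$ has the skeleton of $G_o$; it is acyclic because any directed cycle would have to stay inside a single chain component (inter-component arrows follow the topological order), contradicting the acyclic orientation chosen there; and it has no new v-structure, by a short case analysis on a putative $a \rightarrow \v \leftarrow b$ with $a \not\sim b$: if both arrows already lay in $H$ it is a v-structure of $G_o$; if, say, $b \line \v$ was undirected in $H$, then either $a$ is in the same chain component, where the chosen moral orientation forbids the v-structure, or $a \rightarrow \v$ was in $H$, and then the rule~1 consequence forces $a \sim b$. Since every v-structure of $G_o$ sits among its directed edges and is preserved, $D_1$ is a DAG in the MEC of $G_o$ with $\overrightarrow{H} \subseteq \E_{D_1}$ and $\v_i \rightarrow \v_j \in \E_{D_1}$. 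The symmetric construction with $\v_j$ as LexBFS root gives $D_2$ with $\v_j \rightarrow \v_i$. As $\v_i \line \v_j$ was an arbitrary undirected edge of $H$, part (ii) holds, and with part (i) we conclude that $M_{123}(\E)$ is the maximally oriented graph.

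I expect the main obstacle to be the structural step --- proving that $H$ is a chain graph with chordal chain components directly from closure under rules~1--3, without presupposing that $H$ is the CPDAG --- together with making the ``no new v-structure'' case analysis in the gluing fully rigorous, especially for arrows pointing into a chain component and interacting with the chosen moral orientation. The UCCG orientation lemma and the topological-order acyclicity argument are routine once chordality is in hand.
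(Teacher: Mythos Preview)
Your proposal is correct but takes a genuinely different route from the paper. The paper's own proof is essentially a two-line citation: it observes that the candidate sub-graphs for Meek rules 1, 2 and 3 coincide with Meek's original sub-graphs (only rule~4 was modified), so the functions $M_1,M_2,M_3$ act exactly as Meek's rules, and then invokes the completeness theorem of \cite{Meek95} directly. You, by contrast, give a self-contained reconstruction of that completeness result --- proving that $H$ is a chain graph with chordal chain components, then using a LexBFS-based moral orientation of each component to exhibit the two witness DAGs $D_1,D_2$ for every remaining undirected edge. Your approach buys independence from the external reference and makes the structural reasons for completeness explicit; the paper's approach buys brevity and avoids re-deriving known facts. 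The obstacles you flag (chordality of the chain components from closure under rules 1--3, and the ``no new v-structure'' gluing analysis) are exactly the nontrivial parts of Meek's and Andersson--Madigan--Perlman's original arguments, so you have correctly identified where the real work lies; but since the paper is content to cite those results, none of that work is needed here.
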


\begin{theorem}[Orientation completeness]
	\label{thm:GeneralMeekcompleteness}
	Let $G=(\V,\E)$ be a MPDAG, including some further directed edges, which are selected from one of the DAGs in MEC of its MPDAG. The result of applying $M_{1234}$ on $E$ is a maximally oriented graph with respect to $\overrightarrow{\E}$.
\end{theorem}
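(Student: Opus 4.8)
The plan is to follow the structure of Meek's original completeness argument: use soundness (Theorem~\ref{thm:Orientationsoundness}) for the trivial direction and an \emph{extension lemma} for the substantive one. Write $H = M_{1234}(\E)$. By the idempotency remark, $H$ is closed under all four Meek functions, and since Meek rules only orient edges (they neither add nor delete edges) $H$ has the skeleton of $G_o$. By hypothesis $\overrightarrow{\E}$ is a sub-graph of some DAG $D^\ast$ in the MEC of $G_o$, so $D^\ast$ extends $\overrightarrow{\E}$; by soundness every edge that $M_{1234}$ orients points the same way as in every DAG $D$ in the MEC of $G_o$ with $\overrightarrow{\E}\subseteq\overrightarrow{\E_D}$, in particular as in $D^\ast$. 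Hence $H$ is an acyclic PDAG (its directed part embeds in the acyclic $D^\ast$), $\overrightarrow{\E}\subseteq\overrightarrow{\E_H}$, the v-structures of $H$ equal those of $\E$ and hence those of $G_o$, and the family of DAGs in the MEC extending $\overrightarrow{\E_H}$ coincides with the family extending $\overrightarrow{\E}$. It therefore remains to prove the maximality part: for every undirected edge $\v_i\line\v_j$ of $H$ there exist $D_1=(\V,\E_1)$ and $D_2=(\V,\E_2)$ in the MEC of $G_o$ with $\overrightarrow{\E}\subseteq\overrightarrow{\E_1},\overrightarrow{\E_2}$, $\v_i\rightarrow\v_j\in\E_1$ and $\v_j\rightarrow\v_i\in\E_2$.

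For this I would isolate the following extension lemma: if $K$ is an acyclic PDAG with the skeleton and v-structures of $G_o$, with $\overrightarrow{\E}\subseteq\overrightarrow{\E_K}$, closed under $M_{1234}$, and if $\v_u\line\v_w$ is an undirected edge of $K$, then applying $M_{1234}$ to the edge set obtained from $K$ by replacing $\v_u\line\v_w$ with $\v_u\rightarrow\v_w$ again yields an acyclic PDAG with the skeleton and v-structures of $G_o$ containing $\overrightarrow{\E}$. Granting the lemma, the theorem follows by iteration: starting from $H$, repeatedly pick an undirected edge, orient it in either direction, and re-apply $M_{1234}$; the lemma preserves all the invariants, the number of undirected edges strictly decreases, so after finitely many steps we reach a DAG, which by the invariants lies in the MEC of $G_o$ and contains $\overrightarrow{\E}$. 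Performing the first orientation as $\v_i\rightarrow\v_j$ gives $D_1$ and as $\v_j\rightarrow\v_i$ gives $D_2$, establishing maximality.

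The crux, and the step I expect to be the main obstacle, is the extension lemma, i.e., showing that forcing $\v_u\rightarrow\v_w$ into the Meek-closed graph $K$ and propagating the rules can create neither a directed cycle nor a v-structure outside those of $G_o$. I would argue by contradiction: among the orientations produced by the propagation, take the first one (in the order they are forced) that closes a directed cycle or completes a new v-structure, pick a minimal witnessing configuration, and case-split on which Meek rule forced the offending orientation. In each case, tracing back along the minimal cycle or v-structure and using (i) that $K$ has no v-structure beyond those of $G_o$, (ii) chordality of the undirected sub-graph inside each chain component of $K$, and (iii) that $\v_u\line\v_w$ was undirected in $K$, one exhibits a candidate sub-graph of one of the four Meek rules already present in $K$, contradicting that $K$ is $M_{1234}$-closed. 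This is precisely where Meek rule~4 is needed: it is absent from Theorem~\ref{thm:MPDAGcompleteness} because, with no background knowledge, no minimal witness has the asymmetric shape that rule~4's candidate sub-graph resolves, whereas the added directed edges in $\overrightarrow{\E}$ can produce exactly such a shape. Verifying that every minimal witness indeed exposes an applicable candidate sub-graph, together with the attendant bookkeeping, is the technically heaviest part.

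Finally, I would note that the extension lemma is equivalent to the known structural fact that an acyclic, $M_{1234}$-closed PDAG with chordal chain components can always be completed to a DAG of its restricted equivalence class; invoking it would shorten the exposition, but a self-contained proof still requires the same case analysis, so I would place the full argument in the appendix next to the proof of Theorem~\ref{thm:MPDAGcompleteness} and reuse as much of that analysis as possible.
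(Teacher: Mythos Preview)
Your proposal is correct in spirit---the extension-lemma route is exactly Meek's original argument---but it takes a much longer path than the paper does. The paper's proof is three lines: it simply \emph{cites} \cite{Meek95} for the completeness of the four Meek rules on a PDAG with background knowledge, and then observes that the paper's ``Meek functions'' (which act on the \emph{candidate} sub-graphs of Table~\ref{table:Meek}) compute the same closure as the classical Meek rules (which act on the \emph{corresponding} sub-graphs). For rules~1,~2,~3 the candidate and corresponding sub-graphs coincide, and for rule~4 Lemma~\ref{lem:correspondingcandidateMeekrule4} shows that applying $M_1$ followed by $M_4$ on the candidate sub-graph reproduces the effect of the original rule~4. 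That is the entire content of the paper's proof.

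So the only genuinely new step the theorem requires---and the one your write-up does not mention---is bridging the gap between candidate and corresponding sub-graphs for rule~4. Your self-contained extension-lemma argument would of course subsume this, but it re-derives Meek's 1995 result rather than invoking it. If you want to match the paper, replace the long case analysis with a one-line citation to \cite{Meek95} and add a sentence explaining why $M_{1234}$ on candidate sub-graphs equals the closure under the original four rules; if you prefer to keep your version, flag it as an alternative, self-contained proof that avoids the external reference.
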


In the next part, we present some key properties of these functions.

\subsection{Meek functions Properties}

In the previous part, we defined four Meek functions inspired by Meek rules \citep{Meek95}. In this part, we describe key  properties of these functions. By exploiting these properties, we can accelerate executing Meek functions and utilize it as tools to design efficient algorithms for the problem of experiment design or checking the consistency of prior knowledge with the observational or interventional data (see Section \ref{sec:applications}). Before presenting the properties, we define the partially directed and connected chordal graph (for short PCCG). In fact, PCCG is a representation of one of the chain components of an essential graph, where some of its edges are oriented according to one of the DAGs in the corresponding MEC.

\begin{definition}[Partially directed and connected chordal graph] A PDAG $G=(\V,\E)$ is a PCCG  if:
\begin{itemize}
    \item Skeleton of $G$ is a UCCG.
    \item $\overrightarrow{\E}$ in not empty.
    \item 
    There is a DAG in the MEC of $G$  with essential graph equal to skeleton of $G$, which is also consistent with $\overrightarrow{\E}$\footnote{ As the PCCG is a representation for partially oriented chain component, this constraint enforces to consider only DAGs with consistent skeleton and no v-structure.}.
    
\end{itemize}
\end{definition}

 In the following lemma, we will provide some properties of Meek functions on PCCGs.

\begin{lemma}[Meek functions properties on PCCGs]
	\label{lem:MeekPropertiesPCCG} 
     Let $G=(\V,\E)$ be a PCCG. The following properties hold for Meek functions:
	\begin{enumerate}
	
		\item For any $\SET{T} \subseteq \V$, we have: $M_{1234}(\E{[\SET{T}]}) = M_{124}(\E{[\SET{T}]})
 		$
		\item For $M_{i} \in \{M_{1}, M_{4}, M_{14}\}$, we have: $M_{i}(\E) =  \underset{\v_i \rightarrow \v_j \in \overrightarrow{\E}}{\bigsqcup} M_{i}( \{\v_i \rightarrow \v_j\} \Rcup \overline{\E} )$.
	\item $M_{124} \left ( \E \right )= M_{14}(M_{2}(\E))$.
		\item For any subset $\SET{S} = \{ \v_i \rightarrow \v_k,\v_k \rightarrow \v_j,\v_i \line \v_j\} \subseteq \SET{\E}$, we have:
		\begin{align*}
		M_{14}(M_2(\SET{S}) \sqcup \E\backslash \SET{S})
		= M_{14}( \E ) \Rcup M_{2}(\SET{S}).
		\end{align*}
	\end{enumerate}
\end{lemma}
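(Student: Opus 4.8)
The plan is to prove the four items in order, since the later ones build on the earlier. Throughout I will use freely that each $M_S$ is idempotent ($M_S(M_S(\E)) = M_S(\E)$), that $M_S$ is monotone in the sense that it only adds orientations and never removes or flips an edge (this follows from soundness, Theorem \ref{thm:Orientationsoundness}, applied inside the PCCG so that no contradictory orientation can ever be forced), and that the final output never contains a directed cycle or a new v-structure. A convenient reformulation I would set up first: for a set $\SET{A}$ of oriented/undirected edges on a fixed skeleton, write $\SET{A} \preceq \SET{B}$ if $\SET{B}$ agrees with $\SET{A}$ on every directed edge of $\SET{A}$; then each $M_S$ is a closure operator with respect to $\preceq$, and $\sqcup$ is the join that keeps a directed edge iff some operand directs it. With this language the identities become statements about commuting closure operators, which is the right level of abstraction.

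For item 1, I would argue that on an induced subgraph $\E[\SET{T}]$ of a PCCG, Meek rule 3 is redundant once rules 1, 2, 4 are available. The key observation is the third bullet of the PCCG definition: there is a DAG in the MEC consistent with $\overrightarrow{\E[\SET{T}]}$ and with no extra v-structures, so every orientation that rule 3 could produce is already forced by that DAG, hence (by completeness, Theorem \ref{thm:GeneralMeekcompleteness}) by $M_{1234}$. It then suffices to show any rule-3 firing can be simulated by rules 1, 2, 4. Inspect the rule-3 candidate subgraph $\v_l \to \v_k \leftarrow \v_j$, $\v_i \line \v_k$, $\v_i \line \v_l$, $\v_i \line \v_j$: I would trace, using chordality of the skeleton (so $\v_l$ and $\v_j$ are adjacent or the triangle $\v_i\v_l\v_j$ closes appropriately) and the no-new-v-structure constraint, that the edge $\v_i \line \v_k$ gets oriented $\v_i \to \v_k$ by a short chain of rule 1/2 applications on the already-present arrows. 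This is the item I expect to be the main obstacle: the case analysis needs chordality in an essential way, and one must be careful that the "missing" edges in the rule-3 picture really are present in a chordal skeleton, or else handled by the dashed-edge remark in Table \ref{table:Meek}.

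For item 2, the content is that $M_i$ for $i \in \{1,4,14\}$ "distributes over the seed arrows": applying $M_i$ to the whole edge set equals the $\sqcup$-aggregate of applying $M_i$ to each single seed arrow $\v_i \to \v_j$ laid over the bare skeleton $\overline{\E}$. The $\supseteq$ direction is monotonicity: each $\{\v_i\to\v_j\} \sqcup \overline{\E} \preceq \E$, so $M_i$ of it is $\preceq M_i(\E)$, and the join stays $\preceq M_i(\E)$. For $\subseteq$, I would show the right-hand side is closed under rule $i$: rules 1 and 4 each have exactly one directed edge in their candidate subgraph as the trigger, so any firing of rule 1 or rule 4 in $M_i(\E)$ is "witnessed" by a single pre-existing arrow; that arrow, together with the skeleton, already sits inside one of the summands, so the firing already happened there and survives the join. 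Induct on the number of orientations added. The reason this fails for $M_2$ (and why rule 2 is singled out in item 3) is that rule 2's trigger is a directed path $\v_i\to\v_k\to\v_j$ needing two arrows, which may come from two different summands.

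For items 3 and 4 I would proceed as follows. Item 3, $M_{124}(\E) = M_{14}(M_2(\E))$: the inclusion $M_{14}(M_2(\E)) \preceq M_{124}(\E)$ is immediate since $M_2$ and $M_{14}$ are each sub-steps of $M_{124}$. For the reverse, I claim $M_{14}(M_2(\E))$ is already closed under rule 2, whence closed under all of $1,2,4$, whence contains $M_{124}(\E)$; the point is that a rule-2 configuration $\v_i\to\v_k\to\v_j, \v_i\line\v_j$ appearing after running $M_{14}$ on $M_2(\E)$ must already have had both arrows $\v_i\to\v_k$ and $\v_k\to\v_j$ in $M_2(\E)$ — because rules 1 and 4, by item 2's witness analysis, only orient an edge when forced by a single prior arrow via a path argument that one checks cannot manufacture a fresh length-2 directed path closing a triangle without that path's arrows already being present — so $M_2$ would have oriented $\v_i\line\v_j$ already. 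Finally item 4 is the "locality" statement: freezing a single rule-2 gadget $\SET{S} = \{\v_i\to\v_k, \v_k\to\v_j, \v_i\line\v_j\}$, precomputing $M_2(\SET{S}) = \SET{S}\cup\{\v_i\to\v_j\}$, splicing it back, and running $M_{14}$ gives the same thing as running $M_{14}$ on all of $\E$ and then $\sqcup$-ing in $M_2(\SET{S})$. I would prove $\preceq$ both ways: left-to-right because $M_2(\SET{S})\sqcup \E\backslash\SET{S} \preceq M_2(\E) \preceq M_{124}(\E) = M_{14}(M_2(\E))$ and the left side's $M_{14}$-closure is $\preceq$ that; right-to-left by observing $\E \preceq M_2(\SET{S})\sqcup\E\backslash\SET{S}$ so $M_{14}(\E)$ is dominated, and the extra arrow $\v_i\to\v_j$ of $M_2(\SET{S})$ is explicitly re-added by the $\sqcup$, so the join on the right is $\preceq$ the left. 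The delicate point in item 4, which I would flag, is making sure that re-running $M_{14}$ after splicing in just the one new arrow $\v_i\to\v_j$ does not leave anything further to orient that $M_{14}(\E)\sqcup M_2(\SET{S})$ is missing — i.e. that $M_{14}$ applied to $M_{14}(\E)$ with one added arrow equals $M_{14}$ of that arrow's single-seed contribution joined on, which is exactly item 2 specialized to the added arrow plus idempotence.
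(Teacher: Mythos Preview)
Your approach to item 2 is essentially the paper's, and is fine. The other three items have issues.

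\textbf{Item 1.} You are working much harder than needed. The candidate sub-graph for Meek rule~3 contains a v-structure $\v_l \to \v_k \leftarrow \v_j$ (with $\v_l,\v_j$ nonadjacent). A PCCG has no v-structures---its skeleton is a UCCG and its directed edges are consistent with a DAG in the MEC of that UCCG---and the same holds for any induced subgraph $\E[\SET{T}]$. Hence rule~3 simply never fires, and $M_{1234}=M_{124}$ trivially. There is no need to simulate rule~3 by rules 1, 2, 4; that route would also work in principle, but the chordality case analysis you outline is unnecessary.

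\textbf{Item 3.} The crux is your parenthetical ``one checks cannot manufacture a fresh length-2 directed path closing a triangle without that path's arrows already being present.'' This is precisely the statement that applying $M_{14}$ cannot create a new candidate sub-graph for rule~2, and it is the main technical content of the proof---the paper isolates it as a separate lemma (Lemma~\ref{lem:cand3M13}) and proves it by an exhaustive ten-case analysis (Table~\ref{Nocyclelemma}), using chordality and the no-v-structure constraint in each case. Your item-2 ``witness analysis'' does not give this: item~2 says each single firing of rule~1 or rule~4 traces back to a single seed arrow, but it does not prevent two such firings (possibly from different seeds) from assembling a fresh triangle $\v_i\to\v_k\to\v_j$, $\v_i\line\v_j$. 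Ruling that out is exactly the work you are skipping.

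\textbf{Item 4.} Your right-to-left inclusion is correct. Your left-to-right argument is not: bounding the left side by $M_{14}(M_2(\E))$ does not help, since $M_{14}(M_2(\E))$ is in general strictly larger than the right side $M_{14}(\E)\sqcup M_2(\SET{S})$. Your final paragraph is on the right track---by item~2 you do get
\[
M_{14}\bigl(M_2(\SET{S})\sqcup\E\backslash\SET{S}\bigr)=M_{14}\bigl(\E\sqcup\{\v_i\to\v_j\}\bigr)=M_{14}(\E)\sqcup M_{14}\bigl(\{\v_i\to\v_j\}\sqcup\overline{\E}\bigr),
\]
but the right side of the lemma is $M_{14}(\E)\sqcup\{\v_i\to\v_j\}$ (the other two arrows of $M_2(\SET{S})$ already lie in $\E$). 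So what remains is
\[
M_{14}\bigl(\{\v_i\to\v_j\}\sqcup\overline{\E}\bigr)\backslash\{\v_i\to\v_j\}\ \subseteq\ M_{14}(\E),
\]
equivalently $DP[\v_i\to\v_j]\backslash\{\v_i\to\v_j\}\subseteq DP[\v_i\to\v_k]\sqcup DP[\v_k\to\v_j]$. This is \emph{not} ``item~2 specialized to the added arrow plus idempotence''---item~2 decomposes $M_{14}$ over a set of seeds, it does not compare the single-seed closures of different seeds. The paper establishes this inclusion by a separate six-case analysis (Table~\ref{Table:Property5}), again leaning on chordality; without it your argument for item~4 is incomplete.
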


In the following, we provide some intuitions of the above properties. Based on property 1, function $M_3$ does not change the result when it is applied on a PCCG. This is due to the fact that there will be no v-structure in these type of graphs, and as a result there will be no candidate sub-graph for Meek rule 3. Thus, there is no need to apply this function on the mentioned graphs.

The property 2 states that all the information in order to reveal additional edges' orientations as a result of applying one of the functions $M_1$,$M_4$ or $M_{14}$ are in the $\overrightarrow{\E}$. In other words, if an edge is oriented after applying Meek functions $M_1$,$M_4$ or $M_{14}$ on set $\E$, consequently, this edge will be oriented by applying Meek functions $M_1$,$M_4$ or $M_{14}$ on $e \sqcup \overline{\E}$, for some $e \in \overrightarrow{\E}$.
This property has two appealing consequences. First, the result of applying mentioned Meek functions can be computed once and used multiple times such as in the problem of experiment design (see Section 4).
 Second, we can devise a dynamic programming method to apply Meek functions which  reduces the running times (see Section 4).

By applying Meek function $M_{124}$ on set of edges $\E$, some undirected edges will be oriented in $\E$. 
Property 3 asserts that we can recover all these edges, by consecutively applying function $M_{14}$ after function $M_2$ on $\E$. 

A candidate sub-graph for Meek rule 2 is in the form of $\{\v_i \rightarrow \v_k, \v_k \rightarrow \v_j, \v_i \line \v_j\}$. After applying function $M_2$ on this candidate sub-graph, $\v_i \line \v_j$ edge will be oriented as $\v_i \rightarrow \v_j$. Property 4 states that discovering this edge's orientation cannot help in orienting further edges, even we apply function $M_{14}$ on the set of edges including this directed edge.

In the next lemma, we express two properties for MPDAGs.

\begin{lemma}[Meek functions properties on MPDAGs]
	\label{lem:MeekPropertiesObserved} Let $G_o=(\V_o,\E_o)$ be a MPDAG. The following properties hold for Meek functions:
	\begin{enumerate}
	
		\item $M_{1234}(\E_o) =  M_{12}(M_{3}(\E_o))$.
		\item 
		$M_{1}(\E_o) =  \underset{\v_i \rightarrow \v_j \in \overrightarrow{\E_o}}{\bigsqcup} M_{1}( \{\v_i \rightarrow \v_j\} \Rcup \overline{\E_o}, \overline{\E_o} \backslash \{\v_i \line \v_j\} )$.
	\end{enumerate}
\end{lemma}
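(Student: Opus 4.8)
The plan is to prove the two parts separately, each time exploiting the structure of an MPDAG $G_o$: all of its directed edges participate in v-structures, and the undirected part of its skeleton decomposes into chordal chain components (UCCGs), on each of which no v-structure is hidden.

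For part 1, $M_{1234}(\E_o) = M_{12}(M_3(\E_o))$, I would argue that Meek rule 4 never fires when applied after rules 1, 2 and 3 are exhausted on an MPDAG. The key observation is that a candidate sub-graph for rule 4 forces $\v_l \rightarrow \v_k$ once the configuration $\v_i \rightarrow \v_k$, $\v_k \rightarrow \v_l$ wait — more carefully, rule 4's candidate sub-graph (third row, last column of Table \ref{table:Meek}) contains the directed chain $\v_i \rightarrow \v_k \rightarrow \v_l$ together with $\v_i \line \v_j$ and $\v_j \line \v_k$; I would show that in this situation rule 1 applied to $\v_i \rightarrow \v_k$ and $\v_j \line \v_k$ (or rule 2 applied to the triangle $\v_i, \v_k, \v_j$, depending on which edges are present) already orients the edge that rule 4 would orient, so $M_4$ is redundant once $M_{12}$ is saturated. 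Then, because $M_3$ only creates directed edges and never removes the candidate sub-graphs of rules 1 and 2 except by orienting their target edges, applying $M_3$ first and then $M_{12}$ recovers everything; combined with Theorem \ref{thm:MPDAGcompleteness} (which says $M_{123}$ already yields the maximally oriented graph, so in particular $M_{1234} = M_{123}$ on an MPDAG) this reduces to showing $M_{123}(\E_o) = M_{12}(M_3(\E_o))$, i.e. that a single outermost pass of $M_3$ followed by $M_{12}$ suffices. For this last point I would use the fact that $M_3$'s candidate sub-graph requires two undirected edges $\v_l \line \v_i$, $\v_j \line \v_i$ meeting at a common node, and that once rules 1 and 2 have been applied these configurations cannot be regenerated in a way that newly enables rule 3 — an argument about which undirected edges survive, using that the remaining undirected subgraph stays chordal with no new v-structures.

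For part 2, $M_1(\E_o) = \bigsqcup_{\v_i \rightarrow \v_j \in \overrightarrow{\E_o}} M_1(\{\v_i \rightarrow \v_j\} \Rcup \overline{\E_o},\, \overline{\E_o} \backslash \{\v_i \line \v_j\})$, the idea mirrors property 2 of Lemma \ref{lem:MeekPropertiesPCCG} but with the forbidden-edge bookkeeping needed because in an MPDAG the directed edges come from v-structures. The inclusion $\supseteq$ is easy: each term on the right only orients edges that $M_1$ applied to all of $\E_o$ would also orient, since $\{\v_i\rightarrow\v_j\}\Rcup\overline{\E_o}$ has fewer directed edges than $\E_o$ and the forbidden set only removes orientations, so monotonicity of $M_1$ gives the containment after the $\sqcup$. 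For $\subseteq$ I would trace a derivation: if $M_1$ orients some edge $e$ starting from $\E_o$, consider the first rule-1 application in the chain leading to $e$; it uses some genuinely directed edge $\v_i \rightarrow \v_j \in \overrightarrow{\E_o}$ as its tail edge, and I would show the whole remainder of that derivation can be replayed inside the single-seed run $M_1(\{\v_i\rightarrow\v_j\}\Rcup\overline{\E_o}, \cdot)$ — the forbidden set $\overline{\E_o}\backslash\{\v_i\line\v_j\}$ is exactly what prevents this single-seed run from "cheating" by re-deriving orientations that properly belong to other seeds, which is what makes the $\sqcup$ (rather than an unrestricted union) correct and non-redundant. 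The subtle point, and the place I expect the real work, is handling rule-1 chains whose intermediate edges were themselves oriented in $\E_o$: I need each such edge to be reachable from its own v-structure seed, which is where the defining property of an MPDAG (every directed edge lies on a v-structure, hence is "already there" and need not be re-derived) is used.

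The main obstacle I anticipate is the redundancy/non-interference argument underlying the $\sqcup$ in part 2 and the elimination of $M_4$ in part 1 — in both cases one must show that a seemingly global iterative process (Meek-rule saturation) can be localized, and the delicate direction is proving that localizing does not lose any orientation. I would lean heavily on the completeness theorems already proved (Theorems \ref{thm:MPDAGcompleteness} and \ref{thm:GeneralMeekcompleteness}) and on the chordality/no-hidden-v-structure structure of MPDAGs to control which undirected configurations can ever arise, rather than attempting a purely combinatorial induction on derivation length from scratch.
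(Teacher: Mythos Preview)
Your Part 1 argument takes a detour around the paper's two-line proof. The paper observes that (i) $M_4$ is unnecessary on an MPDAG simply by Theorem~\ref{thm:MPDAGcompleteness} ($M_{123}$ is already complete), and (ii) the candidate sub-graph for rule~3 contains a v-structure $\v_j\rightarrow\v_k\leftarrow\v_l$; since Meek rules are sound they never create a new v-structure, so every rule-3 candidate that can ever appear already appears in $\E_o$, and a single initial pass of $M_3$ handles them all. Your plan to show rule~4 is subsumed by rules 1/2 misreads the rule-4 candidate (it does \emph{not} contain the directed chain $\v_i\rightarrow\v_k\rightarrow\v_l$; only $\v_j\rightarrow\v_k$ is directed there), and your argument for why rule~3 needs only one pass focuses on which undirected edges survive and on chordality, whereas the decisive feature is the v-structure requirement --- once you use that, the rest is immediate.

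Your Part 2 plan is workable but substantially longer than the paper's, which simply notes that the proof of Property~2 in Lemma~\ref{lem:MeekPropertiesPCCG} never used chordality or the absence of v-structures, so it applies verbatim here. That proof is a short fixed-point argument: set $\E'=\bigsqcup_{\v_i\rightarrow\v_j}\E_{ij}$, observe $\E_o\subseteq\E'\subseteq M_1(\E_o)$ so $M_1(\E_o)=M_1(\E')$, and then show $M_1(\E')=\E'$ by contradiction (any rule-1 candidate in $\E'$ has its directed edge in some single $\E_{qw}$, which is already $M_1$-saturated). Your derivation-tracing for $\subseteq$ can be made to work but is more bookkeeping. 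More importantly, you misdescribe the role of the forbidden set: it is not there to stop a single-seed run from ``re-deriving orientations that belong to other seeds''; it is there (as the paper says explicitly right after the lemma) to prevent the single-seed run from orienting the \emph{other} v-structure edges --- which are undirected in $\overline{\E_o}$ --- in a direction conflicting with their true direction in $\E_o$.
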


Based on the Theorem \ref{thm:MPDAGcompleteness}, we know that Meek function $M_4$ is not necessary to be applied in order to obtain the essential graph . Thus, we can omit this function from the functions that must be applied. Moreover, candidate sub-graph for Meek rule 3 cannot be occurred as a result of applying other Meek functions. This is because there is a v-structure in the candidate sub-graph for Meek rule 3 and applying any of other Meek functions cannot generate a new v-structure. Hence, from property 1, we can imply that it is just needed to first apply function $M_3$ and then apply the other two functions.

Similar to Property 2 in Lemma \ref{lem:MeekPropertiesPCCG}, in Property 2 in Lemma \ref{lem:MeekPropertiesObserved}, we use only directed edges in $\E_o$ for orienting further edges in graph $G_o$. In this property, we take v-structures as the forbidden edges argument of Meek function $M_1$ to avoid orient them in the reverse direction. 

After performing an intervention on an arbitrary node $\v$ in $G$, we can identify the direction of all the edges incident with node $v$ \citep{He08,Eberhardt05}. We know that these edges are enough to recover the $\mathcal{I}$-essential graph \citep{Hauser14}. Moreover, according to Theorem \ref{thm:GeneralMeekcompleteness}, all the recoverable orientations can be identified by applying Meek functions. Furthermore, various possible orientations of incident edges yield different possible $\mathcal{I}$-essential graphs. For instance, \cite{Hauser14} used this idea to compute a score function to select a node for performing an intervention.

In the following, we define intervened and connected chordal graph (for short ICCG). Suppose a node $\v$ in a UCCG $G = (\V,\E)$ has been intervened on. After intervention, all the edges connected to this node,  will be oriented. We denote the sets of nodes that have in-going edges toward and outgoing edges from node $\v$ by $\SET{I}$ and $\SET{O}$, respectively. Furthermore, we have $\SET{O} = neigh(\v) \backslash \SET{I}$.

\begin{definition}[Intervened connected chordal graph] We say PCCG $G=(\V,\E)$ is an ICCG if there exists only one node $v \in \V$ such that $\overrightarrow{\E} = \SET{S}$, where $\SET{S}= \{\v_i \rightarrow \v |  \v_i \in \SET{I} \} \sqcup \{ \v \rightarrow \v_o |  \v_o \in \SET{O} \}$ and $\SET{O} = neigh(\v) \backslash \SET{I}$.
\end{definition}
 An ICCG is a representation for $\mathcal{I}$-essential graph which is obtained after performing intervention on a node in a UCCG. This is because, based on the \cite{Hauser14}, knowing all the in-goring edges toward intervened node, is a complete representation of $\mathcal{I}$-essential graph. According to Proposition 6 in \cite{Hauser14}, set $\SET{I}$ is a subset of a maximal clique in the neighborhood of node $v$. An example of ICCG is shown in Example \ref{Ex:example}.

\begin{remark}
\label{InterveneGraphIsSpecialOfPCCF}
An ICCG is a special case of PCCG. Thus, the properties for PCCGs hold for ICCGs.
\end{remark}

In the Lemma \ref{lem:MeekProperties}, we provide one properties for ICCGs.
\begin{lemma}[Meek function property on ICCGs]
	\label{lem:MeekProperties} 
Let $G=(\V,\E)$ be an ICCG. Denoting all the induced candidate sub-graphs for Meek rule 2 in  $\E [ Neigh(\v)]$ by $\SET{C}_{M_2}(\v)$, we have:
		\begin{align*}
		M_{2}(\E) = \left ( \underset{\SET{C}_i \in \SET{C}_{M_2}(\v)}{\bigsqcup} M_{2}(\SET{C}_i) \right) \sqcup \E .
		\end{align*}
\end{lemma}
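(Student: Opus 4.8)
The plan is to exploit how restricted the directed part of an ICCG is. Recall that for an ICCG $G=(\V,\E)$ with intervened node $\v$ the set $\overrightarrow{\E}$ equals $\{\v_i\rightarrow\v : \v_i\in\SET{I}\}\cup\{\v\rightarrow\v_o : \v_o\in\SET{O}\}$, where $\SET{I}$ and $\SET{O}$ partition $neigh(\v)$ (in particular $\SET{I}\cap\SET{O}=\emptyset$), and every other edge of $\E$ is undirected. First I would show that $\SET{C}_{M_2}(\v)$ is precisely the set of \emph{all} candidate sub-graphs for Meek rule 2 contained in $\E$, not merely those living inside $\E[Neigh(\v)]$. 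A Meek-rule-2 candidate is an induced sub-graph $\{\v_a\rightarrow\v_b,\v_b\rightarrow\v_c,\v_a\line\v_c\}$; its two directed edges lie in $\overrightarrow{\E}$ and so are incident to $\v$, hence the middle node $\v_b$ carries both an in-going and an out-going directed edge. The only node with this property is $\v$ itself, because a node of $\SET{I}$ carries only its out-edge toward $\v$, a node of $\SET{O}$ carries only its in-edge from $\v$, and $\SET{I}\cap\SET{O}=\emptyset$. Therefore every Meek-rule-2 candidate in $\E$ has the form $\{\v_a\rightarrow\v,\v\rightarrow\v_c,\v_a\line\v_c\}$ with $\v_a\in\SET{I}$, $\v_c\in\SET{O}$, $\v_a$ adjacent to $\v_c$, and all three of its vertices lie in $Neigh(\v)$, so it is one of the sub-graphs in $\SET{C}_{M_2}(\v)$.

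Next I would analyse a single sweep of Meek rule 2. Applying $M_2$ to $\SET{C}_i=\{\v_a\rightarrow\v,\v\rightarrow\v_c,\v_a\line\v_c\}$ orients exactly the edge $\v_a\rightarrow\v_c$. Two distinct candidates involve two distinct undirected edges $\v_a\line\v_c$ --- the unordered pair $\{\v_a,\v_c\}$ is fixed by the candidate and cannot recur with the roles of $\SET{I}$ and $\SET{O}$ interchanged since $\SET{I}\cap\SET{O}=\emptyset$ --- so these orientations never conflict, and $\left(\bigsqcup_{\SET{C}_i\in\SET{C}_{M_2}(\v)}M_2(\SET{C}_i)\right)\sqcup\E$ is exactly $\E$ with every such undirected edge $\v_a\line\v_c$ replaced by $\v_a\rightarrow\v_c$; write $\E'$ for this edge set.

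The last step, which I expect to be the main obstacle, is to show that the iteration defining $M_2(\E)$ terminates exactly at $\E'$, i.e.\ that no new Meek-rule-2 candidate is ever created along the way. I would prove by induction on the number of rule applications that the current directed-edge set $F$ always satisfies $\overrightarrow{\E}\subseteq F\subseteq\overrightarrow{\E'}$: whenever $F\subseteq\overrightarrow{\E'}$, the nodes with an in-going $F$-edge lie in $\{\v\}\cup\SET{O}$ and those with an out-going $F$-edge lie in $\SET{I}\cup\{\v\}$, so (again because $\SET{I}\cap\SET{O}=\emptyset$ and $\v\notin\SET{I}\cup\SET{O}$) the only middle vertex available for Meek rule 2 is $\v$, and the only applicable candidates are triangles $\{\v_a\rightarrow\v,\v\rightarrow\v_c,\v_a\line\v_c\}$ with $\v_a\in\SET{I}$, $\v_c\in\SET{O}$, $\v_a$ adjacent to $\v_c$, and $\v_a\line\v_c$ still undirected; orienting such a triangle adds the edge $\v_a\rightarrow\v_c\in\overrightarrow{\E'}$, so $F$ stays inside $\overrightarrow{\E'}$. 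Since each application adds a new directed edge, the iteration terminates, and it can stop only when every such triangle has been oriented, i.e.\ when $F=\overrightarrow{\E'}$; hence $M_2(\E)=\E'$. Combining this with the description of the right-hand side obtained in the second step yields $M_2(\E)=\left(\bigsqcup_{\SET{C}_i\in\SET{C}_{M_2}(\v)}M_2(\SET{C}_i)\right)\sqcup\E$, which is the claim. (Only the structure of $\overrightarrow{\E}$ is used here; the fact from \cite{Hauser14} that $\SET{I}$ lies inside a maximal clique of $neigh(\v)$ is not needed.)
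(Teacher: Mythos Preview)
Your argument is correct and in fact tidier than the paper's. The paper proves the same lemma by a two-case split: first it checks, via a small pictorial case analysis (three neighbours $\v_s,\v_d,\v_j$ of $\v$ and the two possible orientations of $\v\line\v_j$), that orienting an edge $\v_d\rightarrow\v_s$ with $\v_d\in\SET{I}$, $\v_s\in\SET{O}$ cannot create a fresh Meek-rule-2 candidate \emph{inside} $\E[Neigh(\v)]$; then it argues separately, by an infinite-descent style contradiction, that no candidate with an edge \emph{outside} $\E[Neigh(\v)]$ can ever arise. Your single invariant---``throughout the $M_2$ iteration the directed edges stay inside $\overrightarrow{\E'}$, so the only vertex with both an in- and an out-arrow is $\v$''---dispatches both cases at once: since the middle node of any Meek-rule-2 triple must be $\v$, every candidate is automatically one of the original triangles in $\SET{C}_{M_2}(\v)$, and nothing new is ever produced. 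What you gain is uniformity and the elimination of the picture-based case check; what the paper's version makes a bit more explicit is the geometry of how a putative new candidate would sit relative to $Neigh(\v)$, but that is not needed once your invariant is in hand.
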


This property  asserts that, after performing an intervention, the candidate sub-graphs that consists of edges in $\overrightarrow{\E}$ as their directed edges are sufficient to compute result of applying function $M_2$.
Hence, we can obtain the result of this part in two steps: (1) extracting candidate sub-graphs for function $M_2$, (2) apply this function to only extracted candidate sub-graphs instead of applying to set of all edges. This procedure accelerates the execution of function $M_2$.

In some applications of COL problems, such as those mentioned in Section 3, we need to apply function $M_{124}$ on a given ICCG. This is because we do not have any v-structure in those types of graphs and as a result we do not need to apply Meek function $M_3$. The conventional method is to repeatedly apply Meek rules 1, 2 and 4 on sub-graphs corresponding to these rules, till no more edge can be oriented.
The following theorem states one of the key properties of Meek functions where applying function $M_{124}$ can be decomposed to applying function $M_{14}$ and $M_{2}$. This theorem ensures that the result of decomposition method is the same as the conventional one of applying Meek rules.

\begin{theorem}[Decomposition of applying $M_{124}$ on ICCGs]
	\label{thm:decomposition}
Let $G = (\V,\E)$ be an ICCG. Applying function $M_{124}$ on set $\E$ can be decomposed as follows:
		\begin{align*}
		M_{124}(\E) = \left ( \underset{e \in \overrightarrow{\E}}{\bigsqcup} M_{14}( \{e\} \Rcup \overline{\E} ) \right ) \bigsqcup \left ( \underset{\SET{C}_i \in \SET{C}_{M_2}(\v)}{\bigsqcup} M_{2}(\SET{C}_i) \right ),
		\end{align*}
		where $\SET{C}_{M_2}(\v)$ is collection of all induced candidate sub-graphs for Meek rule 2 in the sub-graph $\E [ Neigh(\v)]$.
\end{theorem}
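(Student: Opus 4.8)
The plan is to combine Properties 3 and 4 of Lemma~\ref{lem:MeekPropertiesPCCG}, Property 2 of Lemma~\ref{lem:MeekPropertiesPCCG}, and Lemma~\ref{lem:MeekProperties}, using Remark~\ref{InterveneGraphIsSpecialOfPCCF} to transfer the PCCG properties to the ICCG $G$. First I would apply Property 3 of Lemma~\ref{lem:MeekPropertiesPCCG} to write $M_{124}(\E) = M_{14}(M_2(\E))$. Next I would analyze $M_2(\E)$: by Lemma~\ref{lem:MeekProperties}, $M_2(\E) = \left( \bigsqcup_{\SET{C}_i \in \SET{C}_{M_2}(\v)} M_2(\SET{C}_i) \right) \sqcup \E$, so the new directed edges produced by $M_2$ are exactly those coming from the candidate sub-graphs $\SET{C}_i$, each of which has the form $\{\v_i \rightarrow \v_k, \v_k \rightarrow \v_j, \v_i \line \v_j\}$ with the directed edges already in $\overrightarrow{\E}$.

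The crux is then to show $M_{14}(M_2(\E)) = M_{14}(\E) \sqcup \left( \bigsqcup_{\SET{C}_i} M_2(\SET{C}_i) \right)$, i.e.\ that the edges oriented by $M_2$ contribute nothing further when $M_{14}$ is run, beyond being carried along. This is exactly the content of Property 4 of Lemma~\ref{lem:MeekPropertiesPCCG}, applied iteratively over the candidate sub-graphs in $\SET{C}_{M_2}(\v)$: writing $M_2(\E) = M_2(\SET{S}_1) \sqcup (\E \backslash \SET{S}_1)$ for one such sub-graph $\SET{S}_1$, Property 4 gives $M_{14}(M_2(\SET{S}_1) \sqcup \E\backslash\SET{S}_1) = M_{14}(\E) \sqcup M_2(\SET{S}_1)$; I would then induct, peeling off one candidate sub-graph at a time, to obtain $M_{14}(M_2(\E)) = M_{14}(\E) \sqcup \left( \bigsqcup_{\SET{C}_i \in \SET{C}_{M_2}(\v)} M_2(\SET{C}_i) \right)$. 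Care is needed here to check that the hypotheses of Property 4 remain satisfied at each step (the sub-graphs are disjoint in the undirected edges they orient, and each $\SET{S}_i$ still sits inside the current edge set in the required form), and that the $\sqcup$ operation is associative enough to reorganize the aggregation — this is the step I expect to be the main obstacle.

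Finally, I would rewrite $M_{14}(\E)$ using Property 2 of Lemma~\ref{lem:MeekPropertiesPCCG} with $M_i = M_{14}$: since $G$ is a PCCG (Remark~\ref{InterveneGraphIsSpecialOfPCCF}), $M_{14}(\E) = \bigsqcup_{e \in \overrightarrow{\E}} M_{14}(\{e\} \sqcup \overline{\E})$. Substituting this into the identity from the previous paragraph yields
\begin{align*}
M_{124}(\E) = \left( \bigsqcup_{e \in \overrightarrow{\E}} M_{14}(\{e\} \sqcup \overline{\E}) \right) \bigsqcup \left( \bigsqcup_{\SET{C}_i \in \SET{C}_{M_2}(\v)} M_2(\SET{C}_i) \right),
\end{align*}
which is the claimed decomposition. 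The only remaining bookkeeping is to confirm that $\overline{\E}$ (the skeleton) is the correct undirected backdrop in both aggregations and that no directed edge from one term is undone as an undirected edge in another — which follows from the definition of $\sqcup$, since $\sqcup$ retains a directed edge whenever it appears in either operand.
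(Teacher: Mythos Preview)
Your proposal is correct and follows essentially the same approach as the paper: apply Property~3 of Lemma~\ref{lem:MeekPropertiesPCCG} to get $M_{124}(\E)=M_{14}(M_2(\E))$, replace $M_2(\E)$ via Lemma~\ref{lem:MeekProperties}, use Property~4 to pull the $M_2$-contributions outside $M_{14}$, and finish with Property~2. The paper invokes Property~4 in a single step rather than explicitly inducting over the candidate sub-graphs, but your iterative peeling is the natural way to make that step rigorous.
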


In the next section, we exploit these properties of Meek functions in solving COL problems in different applications.

\section{Applications of COL Problems}
\label{sec:applications}

In this section, we utilize Meek function properties in different problems such as experiment design or checking the edge orientation consistency with a prior knowledge. In particular, in Section \ref{Chapter:Fast discovering}, we introduce a new method that accelerates computing Meek functions. In Section \ref{Chapter:LowerBound}, we propose a lower bound on number of edges that can be oriented as a result of performing an intervention on a variable. This lower bound can be used as a measure for designing experiments. In the last part, given a graph with set of combined directed and undirected edges, we check whether directed edges in the given graph belong to a DAG or not.



\subsection{Fast Computation of $\mathcal{I}$-essential Graphs}
\label{Chapter:Fast discovering}
In this part, we propose a novel method for efficiently discovering the orientations of additional edges that can be identified after performing an intervention\footnote{Similar to the method proposed in this part, we can accelerate the procedure of obtaining the essential graph. We will describe this method in  Section \ref{sec:Experiments}.}. This method uses mentioned properties of Meek functions in the previous section. An $\mathcal{I}$-essential graph is characterized by oriented edges in the neighboring of intervened node \citep{Hauser14}. In order to recover orientations of further edges, Meek functions must be applied. As we mentioned in the previous section, it suffices to apply Meek function $M_{124}$. Based on Theorem \ref{thm:decomposition}, we can decompose the output of Meek function $M_{124}$ computation into two parts: (a) The output of applying Meek function $M_{14}$, (b) The output of applying Meek function $M_2$. Hence, accelerating the computation of Meek functions $M_{14}$ and $M_{2}$ will reduce the running time of executing Meek function $M_{124}$. In the following, we describe the procedure in which we accelerate these functions.

 We first describe our method for applying Meek function $M_{14}$ in an efficient manner. Let $G = (\V,\E)$ be a PCCG. Based on property 2 of Lemma \ref{lem:MeekPropertiesPCCG}, we can imply that Meek function $M_{14}$ can be computed by performing mixed edge union on the results of applying this function on graphs that each of them has only one directed edge.
 We take advantage of this property to accelerate this function. Let $\v_s \line \v_d$ be an undirected edge in the set of edges $\overline{\E}$. For any edge $\v_s \line \v_d \in \overline{\E}$, we define $DP[\v_s \rightarrow \v_d]$ as follows:

\begin{equation}
\label{equ:DP}
DP\left [ \v_s \rightarrow \v_d \right ] = M_{14} ( \{\v_s \rightarrow \v_d\} \sqcup  \overline{\E} ) .
\end{equation}

In the following, we propose a dynamic programming method for computing the above function.
\begin{proposition}
	\label{Prop:DP} The following equation holds for DP function in (\ref{equ:DP}):
\begin{align*}
DP [\v_{s} \rightarrow \v_{d}]
= \left ( \underset{\v_l \rightarrow \v_k \in \overrightarrow{\E^\prime} }{\bigsqcup} DP [ \v_l \rightarrow \v_k ] 
 \right )
\bigsqcup
\left \{ 
\v_{s} \rightarrow \v_{d}
\right \},
\end{align*}
where $\E^\prime = 
 M_{14}( \{ \v_{s} \rightarrow \v_{d} \} \sqcup \overline{\E}[Neigh(\v_d)] ) 
\backslash \left \{ \v_s \rightarrow \v_d  \right \}$.
\end{proposition}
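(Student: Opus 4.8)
The plan is to prove the two set-inclusions $DP[\v_s\rightarrow\v_d] \subseteq R$ and $R \subseteq DP[\v_s\rightarrow\v_d]$, where $R$ denotes the right-hand side $\left(\bigsqcup_{\v_l\rightarrow\v_k\in\overrightarrow{\E'}}DP[\v_l\rightarrow\v_k]\right)\sqcup\{\v_s\rightarrow\v_d\}$. The conceptual content is that when we orient the single edge $\v_s\rightarrow\v_d$ and close under Meek rules 1 and 4, every edge that gets oriented is ``witnessed'' first by some candidate sub-graph lying inside $Neigh(\v_d)$ (this gives $\E'$), and thereafter the orientation process factors through the individual newly-oriented edges $\v_l\rightarrow\v_k\in\overrightarrow{\E'}$, each of which generates exactly $DP[\v_l\rightarrow\v_k]$ by definition. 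I would begin by recalling property 2 of Lemma \ref{lem:MeekPropertiesPCCG}, which already tells us $M_{14}(\E) = \bigsqcup_{e\in\overrightarrow{\E}}M_{14}(\{e\}\sqcup\overline{\E})$; the new ingredient here is to localise the ``first step'' of the closure to the neighbourhood of $\v_d$.

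For the inclusion $R\subseteq DP[\v_s\rightarrow\v_d]$: first, $\{\v_s\rightarrow\v_d\}\subseteq DP[\v_s\rightarrow\v_d]$ trivially since $DP[\v_s\rightarrow\v_d]=M_{14}(\{\v_s\rightarrow\v_d\}\sqcup\overline{\E})$ retains the input directed edge. Second, since $\E'\subseteq M_{14}(\{\v_s\rightarrow\v_d\}\sqcup\overline{\E}[Neigh(\v_d)])\subseteq M_{14}(\{\v_s\rightarrow\v_d\}\sqcup\overline{\E}) = DP[\v_s\rightarrow\v_d]$ (using monotonicity of $M_{14}$ with respect to adding undirected edges, and idempotence), every directed edge $\v_l\rightarrow\v_k\in\overrightarrow{\E'}$ is already oriented in $DP[\v_s\rightarrow\v_d]$; then by property 2 applied with the single directed edge $\v_l\rightarrow\v_k$, $DP[\v_l\rightarrow\v_k]=M_{14}(\{\v_l\rightarrow\v_k\}\sqcup\overline{\E})\subseteq M_{14}(DP[\v_s\rightarrow\v_d])=DP[\v_s\rightarrow\v_d]$ by idempotence. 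So the $\sqcup$ of all these pieces is contained in $DP[\v_s\rightarrow\v_d]$ (one must check the $\sqcup$-operator respects this: a directed edge on the left of $\sqcup$ stays directed, and an undirected edge on the left is undirected in every summand hence undirected in $DP[\v_s\rightarrow\v_d]$).

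For the harder inclusion $DP[\v_s\rightarrow\v_d]\subseteq R$: I would argue by induction on the number of rounds in the iterative application of Meek rules 1 and 4 that produces $M_{14}(\{\v_s\rightarrow\v_d\}\sqcup\overline{\E})$. The key structural claim is that the \emph{first} orientation triggered by the edge $\v_s\rightarrow\v_d$ — whether via Meek rule 1 (a candidate sub-graph $\v_i\rightarrow\v_k\line\v_j$ with no $\v_i,\v_j$ edge, where $\v_i\rightarrow\v_k$ must be $\v_s\rightarrow\v_d$) or Meek rule 4 (whose candidate sub-graph has the form in Table \ref{table:Meek}, column 4, again using $\v_s\rightarrow\v_d$ as the directed edge) — uses only nodes adjacent to $\v_d$, hence produces an edge already present in $\E'=M_{14}(\{\v_s\rightarrow\v_d\}\sqcup\overline{\E}[Neigh(\v_d)])\setminus\{\v_s\rightarrow\v_d\}$. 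Iterating: once $\E'$ is available, the remaining orientations are generated by $\overrightarrow{\E'}$, and by property 2 of Lemma \ref{lem:MeekPropertiesPCCG} the full closure $M_{14}$ over that edge set decomposes as $\bigsqcup_{\v_l\rightarrow\v_k\in\overrightarrow{\E'}}M_{14}(\{\v_l\rightarrow\v_k\}\sqcup\overline{\E})=\bigsqcup_{\v_l\rightarrow\v_k\in\overrightarrow{\E'}}DP[\v_l\rightarrow\v_k]$, plus the retained edge $\v_s\rightarrow\v_d$ itself. The main obstacle I anticipate is the case analysis verifying that \emph{every} edge directed by the first application of rule 1 or rule 4 (starting from $\v_s\rightarrow\v_d$ as the unique directed edge) is incident to $Neigh(\v_d)$ — for rule 1 this is immediate ($\v_k=\v_d$, and $\v_j\in neigh(\v_d)$), but for rule 4 one must inspect the candidate sub-graph carefully to confirm the oriented edge $\v_k\rightarrow\v_l$ has both endpoints in $Neigh(\v_d)$ once we know the directed edge of the sub-graph is $\v_s\rightarrow\v_d$; and one must also handle the subtlety, flagged in the excerpt's discussion of $\E'$, that the restriction $\overline{\E}[Neigh(\v_d)]$ does not lose any of these first-round orientations, so that $\E'$ is exactly the right ``seed'' set. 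Finally I would check the edge cases: if $\v_s\rightarrow\v_d$ triggers no orientation at all then $\E'=\emptyset$ and both sides equal $\{\v_s\rightarrow\v_d\}\sqcup\overline{\E}$, which is consistent.
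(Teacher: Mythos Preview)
Your proposal is correct and follows essentially the same route as the paper: both arguments reduce to Property~2 of Lemma~\ref{lem:MeekPropertiesPCCG} together with the observation that every edge oriented in the \emph{first} application of rule~1 or rule~4 from the single seed $\v_s\rightarrow\v_d$ lies inside $\overline{\E}[Neigh(\v_d)]$, so that $\E'$ is a sufficient seed set for the remainder of the closure. The paper's proof is more algebraic and compressed --- it writes $M_{14}(\{\v_s\rightarrow\v_d\}\sqcup\overline{\E}) = M_{14}(\{\v_s\rightarrow\v_d\}\sqcup\E'\sqcup\overline{\E})$, splits via Property~2, and then asserts $M_{14}(\E'\sqcup\overline{\E}) = \E_{M_{14}}\setminus\{\v_s\rightarrow\v_d\}$ without spelling out the locality argument --- whereas you make that step explicit with your case analysis of the rule~1 and rule~4 candidate sub-graphs; your version is thus somewhat more self-contained at the key inclusion.
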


Proposition \ref{Prop:DP} suggests a method to compute the result of applying Meek function $M_{14}$ on a graph with one directed edge based on a dynamic programming method. According to Proposition \ref{Prop:DP}, we need to apply Meek function $M_{14}$ results in the neighborhood of $\v_d$. Hence, it suffices to check  two candidate sub-graphs for Meek rules 1 and 4 in the neighborhood of $\v_d$ for further edges orientation.

Figure \ref{Fig:Alg} illustrates these two sub-graphs. Note that there is no edge between nodes $\v_s$ and $\v_j$. So, in order to find candidate sub-graphs that are shown in Figure \ref{Fig:Alg}, we attempt to find some node $\v_j \notin neigh(\v_s)$. 
In the following, we design an algorithm that computes and stores the DP results using these sub-graphs. 

In Algorithm \ref{Alg:DPONE}, we suggest a recursive implementation that computes and stores DP results. This algorithm shows how one null entry of DP table can be filled.  We first initialize $DP[\v_s \rightarrow \v_d]$  with $\v_s \rightarrow \v_d$  in Line 4. According to the sub-graphs in Figure \ref{Fig:Alg}, there is no edge between nodes $\v_s$ and $\v_j$. Thus, we search for node $\v_j \in neigh(\v_d) \backslash neigh(\v_s)$ in Line 5. For any such node $\v_j$, using Meek function $M_1$, we must orient the edge $\v_d \line \v_j$ as $\v_d \rightarrow \v_j$. For the next step, we must compute the union of $DP[\v_s \rightarrow \v_d]$ and $DP[\v_d \rightarrow \v_j]$. In Lines 6-8, we check whether the entry $\v_d \rightarrow \v_j$ in DP is null or not. If DP is already calculated, we can compute the union. Otherwise, we go through another function call to calculate this entry. Lines 9-12 correspond to the case in Figure \ref{Fig:Alg}(a). In this case, we search for a node $\v_i$ in the neighborhood of all three nodes $\v_s$, $\v_d$ and $\v_j$. For any such node $\v_i$, using Meek function $M_4$, we must orient the edge $\v_i \line \v_j$ as $\v_i \rightarrow \v_j$ and then compute the union of $DP[\v_s \rightarrow \v_d]$ and $DP[\v_i \rightarrow \v_j]$. After all these recursive function calls are completed, the DP result for  $\v_s \rightarrow \v_d$ will be available. Note that in the procedure of obtaining one entry of DP table, multiple indices of this table might be filled.
\begin{theorem}
	Computational complexity of filling all entries of DP table using Algorithm \ref{Alg:DPONE} is in the order of $\mathcal{O}(|\E|\Delta^2)$, where $\Delta$ is the maximum degree in the graph.
	\label{the:TimeComplexityDP}
\end{theorem}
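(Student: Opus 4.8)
The plan is to bound the total running time as (number of DP entries ever created) $\times$ (work done when one \texttt{null} entry is resolved by the body of Algorithm~\ref{Alg:DPONE}), after first arguing that memoization forces every entry to be resolved at most once. The table is indexed by orientations of skeleton edges: each undirected edge $\v_s \line \v_d \in \overline{\E}$ contributes the two entries $DP[\v_s \to \v_d]$ and $DP[\v_d \to \v_s]$, so there are $2|\overline{\E}| = 2|\E| = \mathcal{O}(|\E|)$ of them. Because Line~4 writes a non-\texttt{null} value into $DP[\v_s \to \v_d]$ \emph{before} any recursive call is issued, and each recursive call is guarded by a \texttt{null}-check (Lines~6--8 and the analogous test in Lines~9--12), every entry triggers the algorithm's body at most once, every later access to it is an $\mathcal{O}(1)$ look-up, and the recursion terminates regardless of the dependency structure among entries. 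Hence the whole computation amounts to $\mathcal{O}(|\E|)$ body executions together with $\mathcal{O}(1)$-time look-ups and call dispatches.

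Next I would bound one body execution for $DP[\v_s \to \v_d]$. Storing the skeleton so that ``$\v \in neigh(\cdot)$'' tests take $\mathcal{O}(1)$, Line~5 finds the at most $\deg(\v_d) \le \Delta$ candidates $\v_j \in neigh(\v_d)\setminus neigh(\v_s)$ in $\mathcal{O}(\Delta)$ time; for each such $\v_j$, Lines~6--8 perform a (charged) recursion into $DP[\v_d \to \v_j]$ plus one mixed-edge union, and Lines~9--12 enumerate the at most $\Delta$ nodes $\v_i \in neigh(\v_s)\cap neigh(\v_d)\cap neigh(\v_j)$ --- found in $\mathcal{O}(\Delta)$ time by scanning one neighbour list and testing membership in the other two --- each triggering at most one further charged recursion and one mixed-edge union. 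Ignoring the union cost, one body execution therefore costs $\mathcal{O}(\Delta)+\mathcal{O}(\Delta)\cdot\mathcal{O}(\Delta)=\mathcal{O}(\Delta^2)$, and multiplying by the $\mathcal{O}(|\E|)$ bound on body executions yields $\mathcal{O}(|\E|\Delta^2)$. (Note the recursion tree may be highly unbalanced and a single top-level call may fill many other entries, but since every entry is filled once this does not affect the total.)

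The step that needs the most care is the cost of the $\mathcal{O}(\Delta^2)$ mixed-edge unions performed inside one body: a DP entry may hold $\Theta(|\E|)$ directed edges, so it cannot be copied on every union without breaking the bound. I would resolve this using the structure in Proposition~\ref{Prop:DP}: store $DP[\v_s \to \v_d]$ in a compressed form as $\{\v_s \to \v_d\}$, the $\mathcal{O}(\Delta^2)$ edges oriented by the single local Meek pass inside $\E[\,Neigh(\v_d)\,]$, and references to the already-finalised sub-entries $DP[\v_l \to \v_k]$ with $\v_l \to \v_k \in \overrightarrow{\E'}$; then ``computing the union with $DP[\v_d \to \v_j]$'' becomes appending one reference in $\mathcal{O}(1)$, while expanding an entry to its explicit edge set is a separate query that is not part of filling the table. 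With $\mathcal{O}(1)$-cost unions the counting above gives exactly $\mathcal{O}(|\E|\Delta^2)$; alternatively, if the intended implementation materialises the sets, one would represent each entry as an edge-indexed bit-vector and charge $\mathcal{O}(1)$ per distinct edge insertion, and the only genuine subtlety is reconciling the claimed bound with whichever of these representations Algorithm~\ref{Alg:DPONE} actually adopts.
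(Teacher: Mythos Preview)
Your proposal reaches the correct bound by essentially the same counting (at most $2|\E|$ body executions, each costing $\mathcal{O}(\Delta^2)$ from the nested loops), but the ``each entry at most once'' step is argued differently from the paper. You rely on a purely operational observation: Line~4 writes a non-\texttt{null} value into $DP[\v_s\to\v_d]$ \emph{before} any recursion, and every recursive call is guarded by a \texttt{null} test, so re-entrance is impossible regardless of the dependency structure. The paper instead proves a structural lemma (their Lemma~\ref{lem:PathFromSource}): whenever $\v_r\to\v_k \in M_{14}(\overline{\E}\sqcup\{\v_s\to\v_d\})$ there is a directed path from $\v_s$ to $\v_r$, so a recursive call back to $DP[\v_s\to\v_d]$ would create a directed cycle, contradicting the fact that the underlying graph admits a DAG extension.

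Your argument is more elementary and perfectly adequate for the \emph{complexity} claim. The paper's route buys something extra: it rules out cyclic dependencies altogether, which is what guarantees that when Line~8 reads $DP[\v_d\to\v_j]$ the entry is already \emph{fully} computed, not merely non-\texttt{null}. Under your argument alone, a cycle in the dependency graph would still terminate in $\mathcal{O}(|\E|\Delta^2)$ time but could return wrong values; the paper's lemma shows that scenario never arises. Conversely, your discussion of the union cost is more careful than the paper's, which simply asserts ``a nested for-loop \dots\ takes $\mathcal{O}(\Delta^2)$ operations'' without addressing how the $\sqcup$ in Lines~8 and~12 is implemented; your compressed-reference representation is a legitimate way to close that gap.
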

Since we compute DP for each edge in both directions, DP table size is two times of the number of edges. In the rest of this paper, we will assume that DP entries are available. In the next section, we will see that having DP table is helpful in applications that have high-demands of applying Meek functions $M_{14}$.

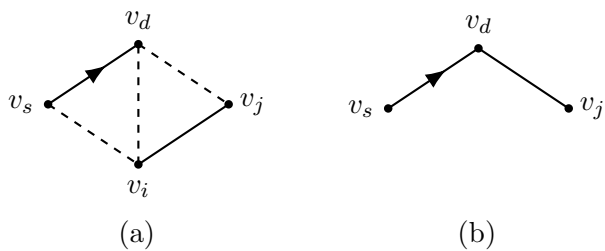
\begin{figure}
	\begin{center}
		\begin{tabular}{cc}
			\begin{tabular}{c}
				\begin{tikzpicture}[thick, scale=.8]
				\node (a) at (-.45,0) {$\v_s$};
				\node (b) at (1.5,1.4) {$\v_d$};
				\node (c) at (1.5,-1.4) {$v_i$};
				\node (d) at (3.4,0) {$v_j$};
				\draw[fill=black] (0,0) circle (1.5pt);
				\draw[fill=black] (1.5,-1) circle (1.5pt);
				\draw[fill=black] (1.5,1) circle (1.5pt);
				\draw[fill=black] (3,0) circle (1.5pt);
				\draw[midarrow]   (0,0) -- (1.5,1);
				\draw[dashed]	  (0,0) -- (1.5,-1);
				\draw[dashed]			  (1.5,1) -- (3,0);
				\draw[]			  (1.5,-1) -- (3,0);
				\draw[dashed]			  (1.5,-1) -- (1.5,1);
				\end{tikzpicture}
			\end{tabular}	
			&	
			\begin{tabular}{c}
				\begin{tikzpicture}[thick, scale=.8]
				\node (a) at (-.45,0) {$\v_s$};
				\node (b) at (1.5,1.4) {$\v_d$};
				\node (d) at (3.4,0) {$v_j$};
				\node (d) at (1.5,-1.4) {};
				\draw[fill=black] (0,0) circle (1.5pt);
				\draw[fill=black] (1.5,1) circle (1.5pt);
				\draw[fill=black] (3,0) circle (1.5pt);
				\draw[midarrow]   (0,0) -- (1.5,1);
				\draw[]			  (1.5,1) -- (3,0);
				\end{tikzpicture}
			\end{tabular}	
			\\
			(a) & (b)
			\\
		\end{tabular}
		\caption{(a): Candidate sub-graph for Meek rule 4. (b): Candidate sub-graph for Meek rule 1.}
		\label{Fig:Alg}
	\end{center}
\end{figure}

\begin{algorithm}[ht]
	\caption{Computation of $M_{14}( \{ \v_s \rightarrow \v_d \} \sqcup  \overline{\E} )$ }
	\label{Alg:DPONE}
	\begin{algorithmic}[1]

		\State \textbf{Input:} 			$G=(\V,\E)$, a PCCG
		\State \textbf{Output:} 		Oriented edges as a result of applying function $M_{14}$ on $\{\v_s \rightarrow \v_d\} \sqcup \overline{\E}$ which is equal to $DP[\v_s \rightarrow \v_d]$
		\Function{$OrientOneEdge$}{DP, $\v_s$, $\v_d$} 

		\State $DP[\v_s \rightarrow \v_d] \leftarrow \{\v_s \rightarrow \v_d\}$
		\For{$\v_j \in neigh(\v_d) \backslash neigh(\v_s)$} 
            \If{$DP[\v_d \rightarrow \v_j] = NULL$}
				\State $DP \leftarrow OrientOneEdge(DP,\v_d,\v_j)$
			\EndIf
			\State $DP[\v_s \rightarrow \v_d] \leftarrow DP[\v_s \rightarrow \v_d] \sqcup DP[\v_d \rightarrow \v_j]$
		\For{$\v_i \in neigh(\v_s) \cap neigh(\v_d) \cap neigh(\v_j)$} 
		\If{$DP[\v_i \rightarrow \v_j] = NULL$}
		\State $DP \leftarrow OrientOneEdge(DP,\v_i,\v_j)$
		\EndIf
		\State $DP[\v_s \rightarrow \v_d] \leftarrow DP[\v_s \rightarrow \v_d] \sqcup DP[ \v_i \rightarrow \v_j]$
		\EndFor  
		\EndFor 
		\EndFunction
		\State \textbf{Return} $DP$
	\end{algorithmic}
\end{algorithm}

Having the DP table, now we show how it can be utilized to apply $M_{14}$ function on a graph whose some edges are oriented as a result of intervention on a node.
In the following lemma, we will show that the result of applying $M_{14}$ can be accelerated by both using already calculated DP table and also taking advantage of property 2 in Lemma \ref{lem:MeekPropertiesPCCG}.
\begin{lemma}[Applying $M_{14}$ on PCCGs]
	\label{Lem:InterventionM14}
	 Let $G=(\V,\E)$ be a ICCG, which is obtained after intervention on node $v$. The result of applying $M_{14}$ on $\E$ can be obtained as follows:
	\begin{align*}
M_{14} \left ( \E \right ) = \underset{\v_l \rightarrow \v_k \in \SET{S^\prime} }{\bigsqcup} DP [\v_l \rightarrow \v_k],
	\end{align*}
where $\SET{S^\prime} =\{ \v_i \rightarrow \v |  \v_i \in \SET{I} \} \sqcup \{ \v \rightarrow \v_o |  \v_o \in \SET{O}, \SET{I} \subseteq neigh(\v_o) \}$.
\end{lemma}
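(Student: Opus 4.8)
The plan is to reduce the statement to property 2 of Lemma \ref{lem:MeekPropertiesPCCG} together with the definition of $DP$ in (\ref{equ:DP}), and the only real work is to show that the summand edges $\v \rightarrow \v_o$ with $\SET{I} \not\subseteq neigh(\v_o)$ are redundant, i.e. dropping them from $\overrightarrow{\E}$ does not change the mixed-edge union. First I would invoke Remark \ref{InterveneGraphIsSpecialOfPCCF} to note that the ICCG $G$ is a PCCG, so property 2 of Lemma \ref{lem:MeekPropertiesPCCG} applies to $M_{14}$, giving
\[
M_{14}(\E) = \underset{\v_l \rightarrow \v_k \in \overrightarrow{\E}}{\bigsqcup} M_{14}(\{\v_l \rightarrow \v_k\} \sqcup \overline{\E}) = \underset{\v_l \rightarrow \v_k \in \overrightarrow{\E}}{\bigsqcup} DP[\v_l \rightarrow \v_k],
\]
where the second equality is just the definition (\ref{equ:DP}) applied to each directed edge. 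Since $G$ is an ICCG, $\overrightarrow{\E} = \{\v_i \rightarrow \v : \v_i \in \SET{I}\} \sqcup \{\v \rightarrow \v_o : \v_o \in \SET{O}\}$, so at this point the claimed formula differs from what we have only in that the outgoing edges $\v \rightarrow \v_o$ are restricted to those $\v_o$ with $\SET{I} \subseteq neigh(\v_o)$.

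The core of the argument is therefore: for every $\v_o \in \SET{O}$ with $\SET{I} \not\subseteq neigh(\v_o)$, the term $DP[\v \rightarrow \v_o]$ is subsumed by the union of the remaining terms, so it can be deleted without affecting $\bigsqcup$. The idea is that such a $\v \rightarrow \v_o$ edge itself gets re-oriented (in the same direction) as a consequence of the other directed edges via Meek rule~1: pick $\v_i \in \SET{I}$ with $\v_i \notin neigh(\v_o)$; then $\v_i \rightarrow \v$, $\v \line \v_o$, and $\v_i, \v_o$ non-adjacent form a Meek-rule-1 candidate sub-graph, so the edge $\v \rightarrow \v_o$ already appears in $DP[\v_i \rightarrow \v]$. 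Consequently $\{\v \rightarrow \v_o\} \sqcup \overline{\E}$ has all of its directed content contained in $DP[\v_i \rightarrow \v] \sqcup \overline{\E}$; invoking monotonicity of $M_{14}$ (orienting extra edges consistently can only orient more) one gets $DP[\v \rightarrow \v_o] \subseteq DP[\v_i \rightarrow \v]$ in the sense of the $\sqcup$-order, hence $DP[\v_i \rightarrow \v] \sqcup DP[\v \rightarrow \v_o] = DP[\v_i \rightarrow \v]$. This is what lets us drop the redundant index from the union. One must be slightly careful that $DP[\v_i \rightarrow \v]$ is itself retained in $\SET{S^\prime}$, which it is since all of $\SET{I}$ contributes in-going edges unconditionally.

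The main obstacle I anticipate is making the ``subsumption'' step precise: I would need a clean monotonicity/idempotence statement for $M_{14}$ — something like, if $\overrightarrow{\E_1} \subseteq M_{14}(\E_2)$ (as directed edges) and the two graphs share the same skeleton, then $M_{14}(\E_1 \sqcup \overline{\E}) \sqcup M_{14}(\E_2) = M_{14}(\E_2)$ — and argue it either directly from the iterative definition of Meek functions or from the completeness theorems (Theorems \ref{thm:MPDAGcompleteness}, \ref{thm:GeneralMeekcompleteness}), since both $DP[\v_i \rightarrow \v]$ and $DP[\v \rightarrow \v_o]$ are sets of edges all consistent with one common DAG in the MEC (the ICCG is a PCCG, so such a DAG exists). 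A secondary point to check is well-definedness: the existence of $\v_i \in \SET{I} \setminus neigh(\v_o)$ whenever $\SET{I} \not\subseteq neigh(\v_o)$ is immediate, but I should also confirm that $\v_o \notin \SET{I}$ (true, since $\SET{O} = neigh(\v) \setminus \SET{I}$) so that the Meek-rule-1 candidate sub-graph $\v_i \rightarrow \v \line \v_o$ is genuinely present with $\v \line \v_o$ still undirected in $\overline{\E}$. Once monotonicity is in hand, the remaining manipulation is routine bookkeeping with $\sqcup$, using that $\sqcup$ is associative, commutative and idempotent so that removing a subsumed term leaves the union unchanged.
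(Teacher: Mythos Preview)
Your proposal is correct and follows essentially the same route as the paper's proof: apply Property~2 of Lemma~\ref{lem:MeekPropertiesPCCG} to split $M_{14}(\E)$ over the individual directed edges, then for each $\v_o \in \SET{O}$ with $\SET{I} \not\subseteq neigh(\v_o)$ pick $\v_i \in \SET{I}\setminus neigh(\v_o)$ so that Meek rule~1 gives $\v \rightarrow \v_o \in DP[\v_i \rightarrow \v]$, and conclude $DP[\v \rightarrow \v_o] \subseteq DP[\v_i \rightarrow \v]$ to drop the redundant term. The monotonicity step you flag as the main obstacle is exactly the inclusion the paper asserts (without further comment) as ``$M_{14}(\{\v \rightarrow \v_o\} \sqcup \overline{\E}) \subseteq M_{14}(\{\v_k \rightarrow \v\} \sqcup \overline{\E})$''; it follows directly from idempotence of $M_{14}$ together with the fact that $\{\v \rightarrow \v_o\} \sqcup \overline{\E}$ is an intermediate state reached while computing $M_{14}(\{\v_k \rightarrow \v\} \sqcup \overline{\E})$.
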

Based on the previous lemma, we can obtain the result of Meek function $M_{14}$ by merely combining some entries of DP table. Thus, for further acceleration of function $M_{124}$, it suffices to accelerate applying Meek function $M_{2}$. Using properties 3 and 4 in Lemma \ref{lem:MeekPropertiesPCCG} allows us to obtain the result of Meek function $M_{2}$ on a graph without explicitly applying it. The next theorem suggests a fast method to discover new edges' orientations after applying Meek function $M_{124}$.

\begin{theorem}[Applying $M_{124}$ to obtain $\mathcal{I}$-essential graph]
	\label{Thm:InterventionM124}
	Let graph $G=(\V,\E)$ be an ICCG, which is obtained after intervention on node $v$.
	 The result of applying $M_{124}$ on set $\E$ can be obtained as follows:
		\begin{align*}
		&M_{124} ( \E ) = \left \{ \underset{\v_l \rightarrow \v_k \in \SET{S^\prime} }{\bigsqcup} DP [\v_l \rightarrow \v_k ] \right \} \bigsqcup  \left \{\v_i \rightarrow \v_o |  \v_i \in \SET{I},  \v_o \in \SET{V}_c \right \}
		\end{align*}
where $\SET{S^\prime} =\{ \v_i \rightarrow \v |  \v_i \in \SET{I} \} \sqcup \{ \v \rightarrow \v_o |  \v_o \in \SET{V}_c \}$, and $\SET{V}_c= \{\v_o|\v_o \in \SET{O}, \SET{I} \subseteq neigh(\v_o)\}$.
\end{theorem}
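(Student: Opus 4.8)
The plan is to obtain the identity directly from the decomposition of Theorem~\ref{thm:decomposition}. Since an ICCG is a PCCG (Remark~\ref{InterveneGraphIsSpecialOfPCCF}), every property proved for PCCGs is available. Theorem~\ref{thm:decomposition} gives
\begin{align*}
M_{124}(\E) = \left(\bigsqcup_{e \in \overrightarrow{\E}} M_{14}(\{e\} \Rcup \overline{\E})\right) \bigsqcup \left(\bigsqcup_{\SET{C}_i \in \SET{C}_{M_2}(\v)} M_{2}(\SET{C}_i)\right),
\end{align*}
so the task reduces to rewriting each of the two terms in the claimed closed form. For the first term, I would note that $M_{14}(\{e\} \Rcup \overline{\E}) = DP[e]$ by the definition in~(\ref{equ:DP}), and that $\bigsqcup_{e \in \overrightarrow{\E}} M_{14}(\{e\} \Rcup \overline{\E}) = M_{14}(\E)$ by property~2 of Lemma~\ref{lem:MeekPropertiesPCCG}; then Lemma~\ref{Lem:InterventionM14} rewrites $M_{14}(\E)$ as $\bigsqcup_{\v_l \rightarrow \v_k \in \SET{S^\prime}} DP[\v_l \rightarrow \v_k]$ with $\SET{S^\prime} = \{\v_i \rightarrow \v \mid \v_i \in \SET{I}\} \Rcup \{\v \rightarrow \v_o \mid \v_o \in \SET{O},\ \SET{I} \subseteq neigh(\v_o)\}$, which is precisely the $\SET{S^\prime}$ and the first summand appearing in the theorem (as $\{\v \rightarrow \v_o : \v_o \in \SET{V}_c\}$ is the same set).

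The second step is to identify the term $\bigsqcup_{\SET{C}_i \in \SET{C}_{M_2}(\v)} M_{2}(\SET{C}_i)$. A Meek rule~2 candidate sub-graph is an induced subgraph $\{\v_a \rightarrow \v_b, \v_b \rightarrow \v_c, \v_a \line \v_c\}$, and here it lies inside $\E[Neigh(\v)]$, whose only directed edges are those incident with $\v$, namely $\overrightarrow{\E} = \{\v_i \rightarrow \v : \v_i \in \SET{I}\} \Rcup \{\v \rightarrow \v_o : \v_o \in \SET{O}\}$. A directed length-two path $\v_a \rightarrow \v_b \rightarrow \v_c$ among these edges forces $\v_b = \v$ — the cases $\v_a = \v$ or $\v_c = \v$ would require two oppositely oriented edges between $\v$ and one of its neighbors. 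Hence every candidate sub-graph has the form $\SET{C}_i = \{\v_i \rightarrow \v, \v \rightarrow \v_o, \v_i \line \v_o\}$ with $\v_i \in \SET{I}$, $\v_o \in \SET{O}$, $\v_i \in neigh(\v_o)$ (the edge $\v_i \line \v_o$ is undirected because only the edges at $\v$ are oriented in an ICCG), and $M_2(\SET{C}_i) = \{\v_i \rightarrow \v, \v \rightarrow \v_o, \v_i \rightarrow \v_o\}$. Thus, apart from the edges of $\overrightarrow{\E}$, which are already present in the first term (as $M_{14}$ keeps its input directed edges), the second term contributes exactly $\{\v_i \rightarrow \v_o : \v_i \in \SET{I},\ \v_o \in \SET{O},\ \v_i \in neigh(\v_o)\}$.

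The crux is then to show that the ``surplus'' edges $\v_i \rightarrow \v_o$ with $\v_o \in \SET{O} \setminus \SET{V}_c$ are redundant, i.e.\ already produced inside the first term, so that this contribution collapses to $\{\v_i \rightarrow \v_o : \v_i \in \SET{I},\ \v_o \in \SET{V}_c\}$. Fix $\v_i \in \SET{I}$, $\v_o \in \SET{O} \setminus \SET{V}_c$ with $\v_i \in neigh(\v_o)$; since $\v_o \notin \SET{V}_c$ there is $\v_{i'} \in \SET{I}$ with $\v_{i'} \notin neigh(\v_o)$, and then $\v_{i'} \neq \v_i$ while $\v_i \in neigh(\v_{i'})$ because $\SET{I}$ sits inside a maximal clique of $neigh(\v)$ (Proposition~6 of \cite{Hauser14}). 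In $\{\v_{i'} \rightarrow \v\} \Rcup \overline{\E}$ the nodes $\v_{i'}, \v, \v_i, \v_o$ play the roles of $\v_s, \v_d, v_i, v_j$ in the Meek rule~4 candidate sub-graph of Figure~\ref{Fig:Alg}(a): $\v_{i'} \rightarrow \v$ is directed, the edges $\v_{i'} \line \v_i$, $\v \line \v_i$, $\v \line \v_o$ and the solid edge $\v_i \line \v_o$ are present, and $\v_{i'} \notin neigh(\v_o)$ as the rule requires. Applying $M_4$ therefore orients $\v_i \rightarrow \v_o$, so $\v_i \rightarrow \v_o \in DP[\v_{i'} \rightarrow \v]$, and $\v_{i'} \rightarrow \v \in \SET{S^\prime}$, so this edge is part of the first term. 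Since $M_{124}(\E)$ is a consistent PDAG and both parts of the decomposition consist of edges present in it (the second part containing only directed edges), merging the first term with the full second term equals merging it with the reduced set $\{\v_i \rightarrow \v_o : \v_i \in \SET{I},\ \v_o \in \SET{V}_c\}$; combined with the first paragraph this yields the stated identity. The step I expect to be the main obstacle is exactly this collapse: one must argue carefully, via the semantics of $\Rcup$, that discarding the surplus edges does not alter the merged graph, which hinges on the Meek rule~4 construction and on verifying its side conditions (notably that $\v_{i'}$ is non-adjacent to $\v_o$, which is available precisely because $\v_o \notin \SET{V}_c$).
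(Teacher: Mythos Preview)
Your proposal is correct and follows essentially the same route as the paper's proof: start from the decomposition in Theorem~\ref{thm:decomposition}, replace the $M_{14}$ term via Lemma~\ref{Lem:InterventionM14}, identify the $M_2$-candidate subgraphs in $\E[Neigh(\v)]$ as exactly the triples $\{\v_i\rightarrow\v,\ \v\rightarrow\v_o,\ \v_i\line\v_o\}$, split the resulting edges $\v_i\rightarrow\v_o$ according to whether $\v_o\in\SET{V}_c$ or not, and absorb the $\v_o\notin\SET{V}_c$ edges into some $DP[\v_{i'}\rightarrow\v]$ by the Meek rule~4 pattern using a witness $\v_{i'}\in\SET{I}\setminus neigh(\v_o)$. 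Your write-up is in fact a bit more explicit than the paper's on two points (why the only directed two-path in $\E[Neigh(\v)]$ must pass through $\v$, and why $\v_i\in neigh(\v_{i'})$ via the maximal-clique property of $\SET{I}$), but the argument is the same.
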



\begin{example}
	\label{Ex:example} Figure \ref{Ex:intervention}(a) illustrates a UCCG $G = (\V,\E)$. Assume that an intervention was performed on node $\v_3$ and the ICCG in Figure \ref{Ex:intervention}(b) was obtained. Entries of DP table for all possible edges' orientations are given in Figure \ref{Ex:intervention}(c). From the graph, we have $\SET{I} = \{\v_4\}$, $\SET{O} = \{\v_2\}$ and $\SET{S} = \{\v_4 \rightarrow \v_3,\v_3 \rightarrow \v_2\}$. Furthermore, we have $\SET{S}^\prime = \SET{S}$. Therefore, according to Theorem \ref{Thm:InterventionM124}, the result of applying Meek function $M_{124}$ can be obtained as follows:
		\begin{align*}
		M_{124} \left ( \E \sqcup \SET{S} \right ) &= \left ( \underset{\v_l \rightarrow \v_k \in \SET{S} }{\bigsqcup} DP[ \v_l \rightarrow \v_k] \right ) \bigsqcup \{\v_4 \rightarrow \v_2\} \\ 
		& = \{\v_3 \rightarrow \v_2,\v_4 \rightarrow \v_1,\v_2 \rightarrow \v_1,\v_1 \rightarrow \v_5,\v_4 \rightarrow \v_3,\v_4 \rightarrow \v_2\}.
		\end{align*}
		
\begin{figure}
		\begin{tabular}{*{2}{c}}
 				\multicolumn{1}{c}{\text{\space\space\space\space\space\space\space\space\space\space\space\space\space\space\space}
				\begin{tikzpicture}[thick, scale=1]
				\node (v1) at (0,1.25)  {$\v_1$};
				\node (v2) at (1.5,1.25) {$\v_2$};
				\node (v3) at (3,1.25) {$\v_3$};
				\node (v4) at (1.5,-.25) {$\v_4$};
				\node (v5) at (0,-.25) {$\v_5$};
				\draw[fill=black] (0,1) circle (1.5pt);
				\draw[fill=black] (1.5,1) circle (1.5pt);
				\draw[fill=black] (3,1) circle (1.5pt);
				\draw[fill=black] (1.5,0) circle (1.5pt);
				\draw[fill=black] (0,0) circle (1.5pt);
				\draw[]				(0,0) -- (0,1);
				\draw[]				(0,1) -- (1.5,1);
				\draw[]				(0,1) -- (1.5,0);
				\draw[]				(1.5,0) -- (1.5,1);
				\draw[]				(1.5,1) -- (3,1);
				\draw[]				(1.5,0) -- (3,1);
				
				\end{tikzpicture}	
			}
			&
 				\multicolumn{1}{c}{\begin{tikzpicture}[thick, scale=1]
				\node (v1) at (0,1.25)  {$\v_1$};
				\node (v2) at (1.5,1.25) {$\v_2$};
				\node (v3) at (3,1.25) {$\v_3$};
				\node (v4) at (1.5,-.25) {$\v_4$};
				\node (v5) at (0,-.25) {$\v_5$};
				\draw[fill=black] (0,1) circle (1.5pt);
				\draw[fill=black] (1.5,1) circle (1.5pt);
				\draw[fill=black] (3,1) circle (1.5pt);
				\draw[fill=black] (1.5,0) circle (1.5pt);
				\draw[fill=black] (0,0) circle (1.5pt);
				\draw[]				(0,0) -- (0,1);
				\draw[]				(0,1) -- (1.5,1);
				\draw[]				(0,1) -- (1.5,0);
				\draw[]				(1.5,0) -- (1.5,1);
				\draw[midarrow]				(3,1) -- (1.5,1);
				\draw[midarrow]				(1.5,0) -- (3,1);
				
				\end{tikzpicture}}
			\\
				\multicolumn{1}{c}{\text{\space\space\space\space\space\space\space\space\space\space\space\space\space\space\space}(a)}& \multicolumn{1}{c}{(b)}\\
			\\
				\multicolumn{2}{c}{\begin{tabular}{|c|c|c|c|}
				\hline	
				Index	& DP Value  & Index	& DP Value \\
				\hline
				\multicolumn{1}{|c|}{$\v_2 \rightarrow \v_1$} &\multicolumn{1}{|c|}{$\{\v_2 \rightarrow \v_1$,$\v_1 \rightarrow \v_5$,$\v_4 \rightarrow \v_1$,$\v_1 \rightarrow \v_5\}$} & \multicolumn{1}{|c|}{$\v_1 \rightarrow \v_2$} & \multicolumn{1}{|c|}{$\{\v_1 \rightarrow \v_2$,$\v_2 \rightarrow \v_3$,$\v_4 \rightarrow \v_3\}$} \\ 
				\hline
				\multicolumn{1}{|c|}{$\v_4 \rightarrow \v_1$}
				&\multicolumn{1}{|c|}{\{$\v_4 \rightarrow \v_1$,$\v_1 \rightarrow \v_5$\}} & \multicolumn{1}{|c|}{$\v_1 \rightarrow \v_4$} & \multicolumn{1}{|c|}{$\{\v_1 \rightarrow \v_4$,$\v_2 \rightarrow \v_3$,$\v_4 \rightarrow \v_3\}$} \\ 
				\hline
				\multicolumn{1}{|c|}{$\v_3 \rightarrow \v_2$} &\multicolumn{1}{|c|}{$\{\v_3 \rightarrow \v_2$,$\v_4 \rightarrow \v_1$,$\v_2 \rightarrow \v_1$,$\v_1 \rightarrow \v_5\}$} & \multicolumn{1}{|c|}{$\v_2 \rightarrow \v_3$} & \multicolumn{1}{|c|}{$\{\v_2 \rightarrow \v_3\}$}  \\ 
				\hline
				\multicolumn{1}{|c|}{$\v_4 \rightarrow \v_2$} &\multicolumn{1}{|c|}{$\{\v_4 \rightarrow \v_2\}$} & \multicolumn{1}{|c|}{$\v_2 \rightarrow \v_4$} & \multicolumn{1}{|c|}{$\{\v_2 \rightarrow \v_4\}$} \\ 
				\hline
				\multirow{2}{*}{$\v_4 \rightarrow \v_3$} &\multirow{2}{*}{{$\{\v_4 \rightarrow \v_3\}$}} & \multirow{2}{*}{$\v_3 \rightarrow \v_4$} & \multirow{2}{*}{\shortstack[l]{$\{\v_3 \rightarrow \v_4,\v_4 \rightarrow \v_1,$ \\ \;\;$\v_1 \rightarrow \v_5,\v_2 \rightarrow \v_1\}$}} \\ 
				& & &\\
				\hline
				\multirow{2}{*}{$\v_5 \rightarrow \v_1$} &\multirow{2}{*}{\shortstack[l]{$\{\v_5 \rightarrow \v_1$,$\v_1 \rightarrow \v_2$,$\v_2 \rightarrow \v_3,$ \\ \;\;\;\;\;\;\;\;\;\;\;\;\;\;\;$\v_1 \rightarrow \v_4$,$\v_4 \rightarrow \v_3\}$}} & \multirow{2}{*}{$\v_1 \rightarrow \v_5$} & \multirow{2}{*}{$\{\v_1 \rightarrow \v_5\}$} \\ 
				& & &\\
				\hline
			\end{tabular}}\\
			\\
			\multicolumn{2}{c}{(c)}\\
		\end{tabular}
		\caption{(a): UCCG $G=(\V,\E)$. (b): Obtained ICCG after intervention on node $\v_3$. (c): DP values with respect to graph $G$.}
		\label{Ex:intervention}
\end{figure}
\end{example}

\subsection{Lower Bound on the Number of Discovered Edges' Orientations}
\label{Chapter:LowerBound}
In the literature of experiment design, several objective functions have been proposed to determine which node should be intervened on in order to recover as many edges' orientations as possible. One of the main objective function is in the form of minimax  where we try to find that for each node how many edges' orientations will be discovered in the worst case scenario if we decide to intervene on that node. A naive solution is to consider all the possible orientations for the edges incident with a node and then compute the number of oriented edges can be discovered in the worst case for each of these cases. 
 However, this solution may become very time consuming and it cannot be applied in medium-size or large graphs. Here, we propose a method to calculate a lower bound on number of edges that can be oriented as a result of intervening on each node. 
 
We consider a UCCG $G=(\V,\E)$ with set of nodes $\V$ and set of edges $\E$. We denote the set of all maximal cliques in the neighborhood of a node $\v \in \V$ by $\SET{C}(\v)$. For computing the lower bound on a node such as $v$, we partition all the possible edges orientations that are connected to this node, into two cases:  (1) There exists a maximal clique $\SET{C}_k \in \SET{C}(\v)$, where at least one edge from a node $\v_i \in \SET{C}_k$ is in-going toward node $\v$ and all the edges between node $\v$ and $neigh(v)\backslash \SET{C}_k$ are outgoing from node $\v$, (2) All edges that are connected to node $v$ are outgoing from this node. For the first case, we compute a lower bound for each maximal clique $\SET{C}_k \in \SET{C}(\v)$ which means how many edges, at least, will be oriented after performing intervention on node $v$, considering edges' orientations in this case.  For the second case, we obtain the true value of number of edges that can oriented.  Finally, we compute the lower bound based on these two cases.  


Here, first we describe the lower bound on a maximal clique. We will obtain the lower bound on a maximal clique by partitioning the problem into two settings: (a) computing number of edges that will be oriented when all edges are from maximal clique to node $v$. We denote this number by $L_I$. (b) computing a lower bound on the number of edges when at least one edge is from a node in the maximal clique to node $v$ and at least one edge is oriented in the reverse direction, i.e., from node $v$ to a node in the maximal clique. We denote this lower bound by $L_C$. 

For the first setting, we can exactly find the number of oriented edges as a result of intervention. In the second setting, there exists an edge $\v_i \rightarrow \v$ that is from node $\v_i \in \SET{C}_k$ toward node $\v$ and an edge $\v \rightarrow \v_o$ that is from node $\v$ to a node $\v_o \in \SET{C}_k$. Consider the set of all in-going edges for the nodes $\v_i \in \V$ and all out-going edges for the nodes $\v_o \in \V $. Every edge in this set will be separately considered for orienting further edges and other edges in this set do not contribute to this process. Next, by post-processing these results, we obtain a lower for this maximal clique $\SET{C}_k$.

In order to compute $L_C$, we first obtain the set of edges that are oriented as a result of applying Meek functions on edges from node $v$ to nodes in $neigh(\v) \backslash \SET{C}_k$, and we denote this set by  $\SET{R}$. Next, we obtain a lower bound on the number of edges that can be oriented, when we know there exist exactly $j$ number of edges from nodes in clique $\SET{C}_k$ to node $\v$, excluding those that are in set $R$. We denote this lower bound by $P_j$. For computing $P_j$, we consider all possible cases that there are exactly $j$ number of edges from clique $\SET{C}_k$ to node $\v$. Let $\SET{I}$ be the set of these edges in one of these cases where $|\SET{I}|=j$.  
Now, for any $v_i\in \SET{I}$, we consider the orientations of edges from $v$ to $neigh(v)\backslash\SET{C}_k$ and $\v_i \rightarrow \v$ as the input and apply Meek functions in order to recover the orientations of further edges. By removing the set $\SET{R}$ from the resulted oriented edges, we can deduce the set of edges that can be oriented based on the edge $v_i\rightarrow v$. We repeat this procedure for any node $v_i\in \SET{I}$, and by considering the best case (i.e., picking the case resulting in maximum number of directed edges), we can obtain a lower bound on the number of edges that can be oriented by the whole edges $\{v_i\rightarrow v|v_i\in \SET{I}\}$. This is due to the fact that we just use the orientation of one of the edges in $\{v_i\rightarrow v|v_i\in \SET{I}\}$ in each of the cases for deriving the lower bound. Moreover, there might be multiple options for the set $\SET{I}$ with size $j$. Hence, we need to consider all these options and pick the worst one as the lower bound $P_j$. 
Similarly, we can obtain a lower bound when we know there exist exactly $j$ number of edges from $v$ to nodes in clique $\SET{C}_k$. We denote this lower bound by $Q_j$. 

Next, we try to orient edges in induced sub-graph $\E[\SET{C}_k]$. We can exactly compute the number of oriented edges in this set, if we assume that there exist exactly $j$ edges from nodes in clique $\SET{C}_k$ to node $\v$. We will show in Lemma \ref{lem:LBMC} that this value is equal to $j|\SET{C}_k-j|$. Note that having $j$ ingoing edges to $\v$ means that we will have $|\SET{C}_k|-j$ outgoing edges from node $\v$ to clique $\SET{C}_k$.

Finally, we consider the edges that we did not take into account in the previous steps but they will be oriented as a result of intervention. In particular, after performing intervention on $v$, all the edges between nodes in clique $\SET{C}_k$ and node $\v$ will be oriented. We consider one ingoing edge toward $v$ in computing $P_j$ and one outgoing edge from $v$ in computing $Q_j$. Thus, we need to consider $|\SET{C}_k-2|$ remaining oriented edges in our lower bound.

In the following, based on $P_j$'s and $Q_j$'s, we present a lower bound on number of edges that can be oriented by intervening on node $v$, constraint to have at least one ingoing edge toward node $\v$.

\begin{lemma}[Lower bound for the case of a maximal clique]
\label{lem:LBMC} 
Consider the set of ICCGs that can be obtained after intervention on node $v$, such that we have $\SET{I} \subseteq \SET{C}_k \in \SET{C}(v)$, $\SET{I} \neq \emptyset$ and $\SET{O} = neigh(v)\backslash \SET{C}_k$. The lower bound on number of oriented edges for all possible sets $\SET{I}$,  $L(\SET{C}_k,v)$, can be obtained as follows:
\begin{align*}
&L(\SET{C}_k,v) = min(L_{I} ,L_{C}),
\end{align*}
 where,
 \begin{align*}
 &L_I = \left | \underset{\v_i \in \SET{C}_k}{\bigsqcup} DP [ \v_i \rightarrow \v] \right |\\
  &L_{C} = |\SET{R}| +  \underset{l \in \{1,...,|\SET{C}_k|-1\} } {min    } P_l + Q_{|\SET{C}_k|-l} + |l|(|\SET{C}_k|-|l|) + (|\SET{C}_k|-2)\\
 &\SET{R}= \bigsqcup_{\v_o \in neigh(\v)\backslash\SET{C}_k}DP[\v \rightarrow \v_o ] \\
 &P_j = \underset{ |\SET{I}|=j,\SET{I}\subset \SET{C}_k\;\;\; } {min}\underset{\v_i \in \SET{I}} {max \;}\left | \left (DP \left [\v_i \rightarrow \v \right ] {\sqcup}  \{ \v_i \rightarrow \v_o | \v_o \in neigh(\v_i)\cap neigh(\v)\backslash\SET{C}_k \}\right) \backslash\SET{R} \right| \\
 &Q_j = \underset{|\SET{O}|=j,\SET{O}\subset \SET{C}_k\;\;\; } {min}\underset{\v_o \in \SET{O}} {max \;} | DP [\v \rightarrow \v_o] \backslash\SET{R}|.
 \end{align*}
\end{lemma}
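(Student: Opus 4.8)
The plan is to fix an arbitrary admissible set $\SET{I}\subseteq\SET{C}_k$ with $\SET{I}\neq\emptyset$ and $\SET{O}=neigh(v)\backslash\SET{C}_k$, let $H$ be the resulting ICCG after intervention on $v$, and show that the number of oriented edges in $M_{124}(\E_H)$ is at least $\min(L_I,L_C)$, with the two terms handling the two sub-cases $\SET{I}=\SET{C}_k$ (all clique edges in-going) and $\SET{I}\subsetneq\SET{C}_k$. I would first invoke Theorem~\ref{Thm:InterventionM124} to express $M_{124}(\E_H)$ as a mixed-edge union of DP-entries over $\SET{S}^\prime$ together with the extra edges $\{\v_i\rightarrow\v_o\mid\v_i\in\SET{I},\v_o\in\SET{V}_c\}$; this reduces everything to counting sizes of unions of DP-entries, so the key technical ingredient throughout will be that $|\bigsqcup_k A_k|\geq|\bigsqcup_{k\in K'}A_k|$ for any $K'\subseteq K$ (monotonicity of the mixed-edge union in the number of summands) — each DP-entry contributes at least the edges it already contains.

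For the case $\SET{I}=\SET{C}_k$, every DP-entry $DP[\v_i\rightarrow\v]$ with $\v_i\in\SET{C}_k$ appears in the union defining $M_{124}(\E_H)$ (since then $\SET{S}^\prime\supseteq\{\v_i\rightarrow\v\mid\v_i\in\SET{C}_k\}$), so the count is at least $|\bigsqcup_{\v_i\in\SET{C}_k}DP[\v_i\rightarrow\v]|=L_I$. For the case $\SET{I}\subsetneq\SET{C}_k$, I would decompose the oriented edges into five disjoint (after taking mixed-edge union) contributions and lower-bound each: (i) the edges in $\SET{R}=\bigsqcup_{\v_o\in neigh(\v)\backslash\SET{C}_k}DP[\v\rightarrow\v_o]$, which are always oriented because $\SET{O}\supseteq neigh(\v)\backslash\SET{C}_k$; (ii) the edges attributable to a single chosen in-going edge $\v_i\rightarrow\v$ with $\v_i\in\SET{I}$ — here I pick $\v_i$ achieving the $\max$ in the definition of $P_{|\SET{I}|}$ and argue via property~2 of Lemma~\ref{lem:MeekPropertiesPCCG} that $DP[\v_i\rightarrow\v]$ together with the forced edges $\v_i\rightarrow\v_o$ (for $\v_o\in neigh(\v_i)\cap neigh(\v)\backslash\SET{C}_k$, oriented by Meek rule~2) is a subset of the true output, and then take the worst $\SET{I}$ of size $|\SET{I}|$ to get $\geq P_{|\SET{I}|}$ beyond $\SET{R}$; (iii) symmetrically, the edges from a single chosen out-going edge $\v\rightarrow\v_o$ with $\v_o\in\SET{C}_k\backslash\SET{I}$, giving $\geq Q_{|\SET{C}_k|-|\SET{I}|}$ beyond $\SET{R}$; (iv) the edges inside the induced sub-graph $\E[\SET{C}_k]$, which I claim are exactly $|\SET{I}|\cdot(|\SET{C}_k|-|\SET{I}|)$: with $|\SET{I}|$ nodes of $\SET{C}_k$ pointing into $v$ and $|\SET{C}_k|-|\SET{I}|$ pointing out, Meek rule~2 forces $\v_i\rightarrow\v_o$ for every $\v_i\in\SET{I},\v_o\in\SET{C}_k\backslash\SET{I}$ inside the clique, and no other clique edge is forced (this needs the clique structure and an argument that no two such orientations conflict, which is where I expect to spend the most care); (v) the remaining $|\SET{C}_k|-2$ edges between $v$ and $\SET{C}_k$ that were not counted in (ii)–(iv) — all $|\SET{C}_k|$ edges incident to $v$ from $\SET{C}_k$ are oriented after intervention, and (ii),(iii) used up only two of them. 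Minimising over the unknown $l=|\SET{I}|\in\{1,\dots,|\SET{C}_k|-1\}$ then yields $L_C$.

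Finally I would check disjointness of the five contributions, which is the delicate bookkeeping step: edges in $\SET{R}$ touch only $neigh(\v)\backslash\SET{C}_k$ and their downstream closure; the DP-entry in (ii) is intersected with the complement of $\SET{R}$ by construction, and likewise in (iii); the clique-internal edges in (iv) lie strictly inside $\E[\SET{C}_k]$, disjoint from $\SET{R}$ (which orients nothing inside $\SET{C}_k$ since $\SET{C}_k\cap(neigh(\v)\backslash\SET{C}_k)=\emptyset$) and from the $v$-incident edges of (v); and (v) counts edges incident to $v$, disjoint from (iv). The main obstacle, as flagged, is step (iv): proving that the number of clique-internal edges oriented is \emph{exactly} $|\SET{I}|(|\SET{C}_k|-|\SET{I}|)$ and not more, so that the term is a genuine (tight) contribution and the pieces do not secretly overlap with $P_j$ or $Q_j$; I would handle this by noting that $P_j$ and $Q_j$ are defined via $DP$-entries with $\SET{R}$ removed and argue that those entries, restricted to $\E[\SET{C}_k]$, only produce edges incident to the single chosen $\v_i$ or $\v_o$, whereas (iv) counts the full bipartite set $\SET{I}\times(\SET{C}_k\backslash\SET{I})$, and a careful inclusion argument shows the single-vertex edges are already subsumed so the union is still at least the sum. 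Once disjointness is established, additivity of $|\cdot|$ over the mixed-edge union of pairwise edge-disjoint sets finishes the bound.
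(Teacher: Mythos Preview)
Your proposal follows essentially the same route as the paper: the same two-case split $\SET{I}=\SET{C}_k$ versus $\SET{I}\subsetneq\SET{C}_k$, and in the second case the same four-piece decomposition (your (i)--(iv) are exactly the paper's $\SET{T}_1,\SET{T}_3,\SET{T}_4,\SET{T}_2$, with your (v) being the trailing $|\SET{C}_k|-2$ term). The reduction of $|\SET{T}_3|\ge P_{|\SET{I}|}$ and $|\SET{T}_4|\ge Q_{|\SET{C}_k|-|\SET{I}|}$ via the single-term $\max$ is also exactly what the paper does.

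Where you underestimate the work is the disjointness step, and you have flagged the wrong piece as the obstacle. The genuinely hard part in the paper is not (iv) but showing $\SET{T}_3\cap\SET{T}_4=\emptyset$ (your (ii) vs.\ (iii)). This requires a dedicated structural lemma (the paper's Lemma~\ref{lem:entercliqueorientation}) establishing that for $\v_i\in\SET{C}_k$ the set $DP[\v_i\rightarrow\v]$ orients \emph{no} edge of $\overline{\E}[\SET{C}_k]$, that $DP[\v\rightarrow\v_o]$ orients no edge of $\overline{\E}[Neigh(\v)]$ other than $\v\rightarrow\v_o$ itself, and that DP-entries indexed by $v$--$\SET{C}_k$ edges never orient other $v$--$\SET{C}_k$ edges. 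From this one deduces $\overline{\SET{T}}_3\subseteq\overline{\E}[Neigh(\v)]\setminus\{\v\line\v_o:\v_o\in\SET{C}_k\setminus\SET{I}\}$ and $\overline{\SET{T}}_4\subseteq\{\v\line\v_o:\v_o\in\SET{C}_k\setminus\SET{I}\}\sqcup(\overline{\E}\setminus\overline{\E}[Neigh(\v)])$; the first inclusion in particular needs a further case analysis (the paper's Table~5) to rule out $DP[\v_i\rightarrow\v]\setminus\SET{R}$ leaking outside $\overline{\E}[Neigh(\v)]$. Your sketch (``$\SET{R}$ orients nothing inside $\SET{C}_k$ since $\SET{C}_k\cap(neigh(\v)\setminus\SET{C}_k)=\emptyset$'') is not enough: a DP-entry $DP[\v\rightarrow\v_o]$ with $\v_o\notin\SET{C}_k$ could in principle propagate back and orient an edge inside $\SET{C}_k$; that it cannot is precisely the content of the structural lemma and is proved by a cycle/valid-orientation argument using Proposition~6 of Hauser--B\"uhlmann. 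Similarly, your claim that (ii) restricted to $\E[\SET{C}_k]$ produces only edges incident to the chosen $\v_i$ is too weak---the correct statement is that it produces \emph{nothing} in $\E[\SET{C}_k]$, and that is what makes (ii) disjoint from (iv). Once you have this lemma in hand, the disjointness of all five pieces (including your (v) with (ii),(iii)) follows quickly, but the lemma itself is where the real content lies.
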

In Lemma \ref{lem:LBMC}, the minimum value of settings (a) and (b) is considered as a lower bound for the setting where the maximal clique $\SET{C}_k$ has at least one outgoing edge toward the target node. In this bound, the computation of $L_I$ is straight forward. In order to compute $L_C$, we need to consider all the cases of having $l=1,\cdots, |\SET{C}_k|-1$ number of edges from the maximal clique $\SET{C}_k$ to node $v$ and compute $P_l$ and $Q_{|\SET{C}_k|-l}$ (note that we have $|\SET{C}_k|-l$ number of edges from node $v$ to the maximal clique). In each case, we also need to take into account the size of the set $\SET{R}$, the number of oriented edges in the induced sub-graph $E[\SET{C}_k]$, i.e., $|l|(|\SET{C}_k|-|l|)$, and the other edges that are oriented as the result of intervention on $v$, i.e., $|\SET{C}_k|-2$. By picking the worst case scenario for different values of $l$, we can obtain the lower bound $L_C$.

In order to compute $P_l$ and $Q_{|\SET{C}_k|-l}$, we can enumerate all the possible sets $\SET{I}$ of size $l$ which might be time consuming. Instead,
we present Algorithm \ref{Alg:LowerBoundMaximalClique} which can compute $L(\SET{C}_k,v)$ in an efficient manner. In Line 5, we compute $L_I$ and we compute set $\SET{R}$ in Line 6. In Lemma \ref{lem:LBMC}, we need to compute the number of oriented edges as a result of applying Meek function $M_{14}$ in the worst case for two scenarios; In the first scenario, only one directed edge exists from a node in maximal clique $\SET{C}_k$ to node $v$ while in the second scenario, only one directed edge exists from node $v$ to maximal clique $\SET{C}_k$. In Line 9, we obtain the number of edges that can be oriented as a result of applying Meek function $M_{14}$ where there exists a directed edge from maximal clique $\SET{C}_k$ to node $\v$, i.e., $DP[v_r\rightarrow v]$. In Line 10, we obtain the number of edges that can be oriented as a result of applying Meek function $M_{14}$ where there exists a directed edge from node $\v$ toward maximal clique $\SET{C}_k$, i.e., $DP[v\rightarrow v_r]$. In Line 11, we sort the obtained values of $P$ and $Q$ in the ascending order. In Lines 12-13, we search for minimum number of oriented edges where the number of ingoing edges toward node $v$ can be varied from $1$ to $|\SET{C}_k|-1$. Selecting $l$-th item from sorted variables $P$ and $Q$ is the same as considering values of $P_l$ and $Q_l$ in Lemma \ref{lem:LBMC}, respectively. To see why this is true, note that for achieving the lower bound $P_l$, we just need to consider $l$ ingoing edges toward node $v$ corresponding to the $l$ lowest values in array $P$ as the set $\SET{I}$. In the same vein, it can be seen that $l$-th entry of array $Q$ is equal to $Q_l$.

\begin{algorithm}
	\caption{Computation of maximal clique lower bound}
	\label{Alg:LowerBoundMaximalClique}
	\begin{algorithmic}[1]
		
		\State \textbf{Input:} A UCCG  $G=(\V,\E)$
		\State \textbf{Output:} lower bound on number of oriented edges after applying Meek functions 
		\Function{$L$}{$G$, $\SET{C}_k$, $v$, $DP$} 
		\State 
		$r \leftarrow 0$
		\State
		$\mathcal{L} \leftarrow \left | \bigsqcup_{\v_i \in \SET{C}_k} DP [ \v_i \rightarrow \v] \right |$
		\State $\SET{R}= \bigsqcup_{\v_o \in neigh(\v)\backslash\SET{C}_k}DP [\v \rightarrow \v_o ]$
		\For{$\v_r \in \SET{C}_k$}
		\State	 $r \leftarrow r + 1$
		\State	 $P[r] = |\left (DP [\v_r \rightarrow \v] \bigsqcup_{\v_o \in neigh(\v_i)\cap neigh(\v)\backslash\SET{C}_k}  \{ \v_r \rightarrow \v_o  \}\right) \backslash\SET{R}|$
		\State	 $Q[r] = |DP [\v \rightarrow \v_r  ]\backslash\SET{R}|$
		\EndFor 
		\State $P \leftarrow sort(P)$,$Q \leftarrow sort(Q)$
		
		\For{$l \leftarrow 1:|\SET{C}_k|-1$}
		\State	$\mathcal{L} = min(\mathcal{L},|\SET{R}| + P[l] + Q[|\SET{C}_k|-l] + l(|\SET{C}_k|-l))  +(|\SET{C}_k|-2)$	
		\EndFor
		\EndFunction
		\State \textbf{Return} $\mathcal{L}$
	\end{algorithmic}
\end{algorithm}

\begin{remark}
Having access to entries of $DP$ table, the
	computational complexity of Algorithm \ref{Alg:LowerBoundMaximalClique} for a node $\v$ and maximal clique $\SET{C}_k$ is $\mathcal{O} (\omega \log \omega)$, where $\omega$ is the clique number.
\end{remark}

In the following, we propose a lower bound on number of oriented edges as a result of performing an intervention on a node. We do not consider any assumption on edges' orientations. Possible  orientations of edges incident with the intervened node can be partitioned into two groups: In the first group, all the edges in the neighborhood of the target node are outgoing from it. In the second group, there is an edge (or possibly edges) from a maximal clique to the target node. In Lemma \ref{lem:LBMC}, we investigated the lower bound for a maximal clique with at least one ingoing edge toward the target node. Hence, the lower bound in second group can be achieved by calculating the minimum value of lower bounds for all maximal cliques in the neighborhood of the target node. Finally, the lower bound on a target node can be obtained as given in the following lemma.

\begin{theorem}[Lower bound for the case of a node]
\label{thm:LowerNode} The lower bound on the number of edges that will be oriented after performing an intervention on node $\v$, $L(v)$, in an arbitrary UCCG $G$ can be obtained as follows:

\begin{align*}
&L(v) = min \left (   \left | \underset{\v_o \in neigh(\v)}{\bigsqcup} DP [\v \rightarrow \v_o]  \right|, 
 \underset{\SET{C}_k \in \SET{C}(\v)}{min} L(\SET{C}_k,\v)
  \right).
\end{align*}

\begin{remark}
The computational complexity of lower bound in Theorem  \ref{thm:LowerNode} for a node $\v$ is $\mathcal{O}(M \omega \log \omega)$, where $\omega$ is the clique number, and $M$ is the maximum number of maximal cliques in the neighborhood of a node.
\end{remark}

\end{theorem}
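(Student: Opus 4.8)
The plan is to formalize the case analysis that was described in the paragraphs preceding the statement. First I would observe that, after intervening on node $\v$, every edge incident with $\v$ is oriented, so the resulting graph is an ICCG with $\overrightarrow{\E} = \{\v_i \rightarrow \v \mid \v_i \in \SET{I}\} \sqcup \{\v \rightarrow \v_o \mid \v_o \in \SET{O}\}$ for some $\SET{I} \subseteq neigh(\v)$ with $\SET{O} = neigh(\v)\setminus\SET{I}$; by Proposition~6 of \cite{Hauser14} (quoted in the text), whenever $\SET{I}\neq\emptyset$ it is contained in some maximal clique $\SET{C}_k\in\SET{C}(\v)$. Thus the set of all realizable interventional outcomes is partitioned into (i) $\SET{I}=\emptyset$ (all edges outgoing), and (ii) $\SET{I}\neq\emptyset$, which further decomposes over the choice of a maximal clique $\SET{C}_k\ni\SET{I}$. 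The number of oriented edges in case (i) is exactly $\left|\bigsqcup_{\v_o\in neigh(\v)}DP[\v\rightarrow\v_o]\right|$ by Lemma~\ref{Lem:InterventionM14} applied with $\SET{I}=\emptyset$ (here $\SET{S}'$ reduces to all outgoing edges, since the condition $\SET{I}\subseteq neigh(\v_o)$ is vacuous), together with the fact that Meek function $M_2$ produces no new edges when there is no ingoing edge (no candidate sub-graph for rule~2 can form at $\v$).

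Next I would handle case (ii). For a fixed $\SET{C}_k$, the set of outcomes with $\emptyset\neq\SET{I}\subseteq\SET{C}_k$ and $\SET{O}\supseteq neigh(\v)\setminus\SET{C}_k$ is exactly the family considered in Lemma~\ref{lem:LBMC}, so $L(\SET{C}_k,\v)$ is a valid lower bound on the number of oriented edges over that family. Since every outcome with $\SET{I}\neq\emptyset$ falls into at least one such family, the minimum of $L(\SET{C}_k,\v)$ over $\SET{C}_k\in\SET{C}(\v)$ lower-bounds the number of oriented edges over all of case (ii). The key subtlety, which I would address carefully, is that a given outcome may be counted under several cliques $\SET{C}_k$ (if $\SET{I}$ lies in more than one maximal clique of the neighborhood); taking the minimum over all cliques is still sound because each individual $L(\SET{C}_k,\v)$ is already a valid lower bound for the outcomes it covers, and a lower bound over a superset of outcomes is obtained by taking the worst (smallest) of the per-family bounds. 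Combining cases (i) and (ii), the worst case over all possible interventional outcomes is at least the minimum of the case-(i) count and $\min_{\SET{C}_k}L(\SET{C}_k,\v)$, which is precisely the claimed expression for $L(v)$.

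For the complexity remark, I would note that computing the case-(i) term is a single mixed-edge union of at most $\deg(\v)$ DP entries, and that by the remark following Lemma~\ref{lem:LBMC} each $L(\SET{C}_k,\v)$ costs $\mathcal{O}(\omega\log\omega)$ given the DP table; since there are at most $M$ maximal cliques in $neigh(\v)$, the total is $\mathcal{O}(M\,\omega\log\omega)$, dominating the case-(i) cost.

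The main obstacle I anticipate is not the case split itself but justifying that $L(\SET{C}_k,\v)$ from Lemma~\ref{lem:LBMC} genuinely bounds \emph{every} outcome in its family from below — in particular, that the post-processing in the definitions of $P_j,Q_j$ (using only a single ingoing/outgoing edge per case and discarding $\SET{R}$) never over-counts, and that the term $j(|\SET{C}_k|-j)$ together with the $|\SET{C}_k|-2$ correction exactly accounts for the edges inside $\E[\SET{C}_k]$ and the remaining edges incident with $\v$ without double counting against $\SET{R}$ or against the DP contributions. That verification, however, is the content of Lemma~\ref{lem:LBMC}, which I am entitled to assume; granted that lemma, the present theorem follows by the straightforward partition argument above.
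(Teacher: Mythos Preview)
Your proposal is correct and follows essentially the same approach as the paper's proof: partition the post-intervention outcomes into the case $\SET{I}=\emptyset$ (handled exactly) and the case $\SET{I}\neq\emptyset$ (bounded via Lemma~\ref{lem:LBMC} over all maximal cliques), then take the minimum. The only cosmetic difference is that for the $\SET{I}=\emptyset$ case the paper invokes Theorem~\ref{Thm:InterventionM124} directly, whereas you invoke Lemma~\ref{Lem:InterventionM14} and separately argue that $M_2$ contributes nothing; since Theorem~\ref{Thm:InterventionM124} is itself derived from Lemma~\ref{Lem:InterventionM14} plus the $M_2$ analysis, the two routes coincide.
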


\subsection{Orientation Consistency}
In this part, we assume that there is an expert who reveals the orientations of some undirected edges in a PDAG. This prior knowledge can also be obtained from a partial ordering on nodes \citep{Hauser12,Wang17} or restricting the causal relationships to a certain model  \citep{Eigenmann17,Hoyer12,Buhlmann18}.
We will take advantage of Meek functions' properties in the previous section to validate the correctness of these orientations in an efficient manner.  

Suppose that we are trying to recover the underlying ground truth graph, and our identification algorithm discovered a PCCG. In this stage, an expert suggests to orient some further undirected edges in the obtained PCCG based on a domain knowledge. We are interested in checking whether  the combination of already directed edges and recently added orientations are consistent or not. As underlying ground truth graph is a DAG, we just need to find a DAG being compatible  with all the given directed edges. To do so, in the following, we first provide some definitions.

\begin{definition}[Consistent edges]
Consider a graph $G=(\V,\E)$ with set of nodes $\V$ and set of edges $\E$. We say that in a graph with skeleton $\overline{\E}$, two directed edges $\v_i \rightarrow \v_j$, $\v_r \rightarrow \v_t$ in $\overrightarrow{\E}$ are consistent edges if for any $\v_k \rightarrow \v_l \in    DP[ \v_i \rightarrow \v_j ]$, we have: $\v_l \rightarrow \v_k \notin DP[\v_r \rightarrow \v_t ]$.
\end{definition}

\begin{definition}[Consistent set]
Consider a PCCG $G = (\V,\E)$ with  set of nodes $\V$ and set of edges $\E$. We say that the set of directed edges $\overrightarrow{\E}$ in structure $\overline{\E}$ is a consistent set if each pair of directed edges in this set are consistent.
\end{definition}

For a DAG $G$ with the set of edges $\E$, any set of $\E^\prime \subseteq \overrightarrow{\E}$ is a consistent set on a graph with structure $\overline{\E}$, where there is no v-structure in $G$. This is because applying Meek function $M_{14}$ on $\E^\prime \sqcup \overline{\E}$ does not make any conflict in edges orientation.

Now, we will use the definition of edge consistency to validate the domain knowledge of the expert. Consider a graph that consists of a combination of directed and undirected edges. The following theorem presents two conditions to check whether these directed edges are subset or equal to edges in a directed acyclic graph or not. 

\begin{theorem}
\label{thm:Consistency}
Consider a PCCG $G = (\V,\E)$ with  set of nodes $\V$ and set of edges $\E$. There exists a DAG $G'=(\V',\E')$ which is in the MEC of graph $G$, such that $\overrightarrow{\E} \subseteq  \overrightarrow{\E'}$ and $\overline{\E} = \overline{\E'}$ if and only if:
\begin{itemize}
	\item There is no directed cycle in $\E$.
	\item $\E$ is a consistent set.
\end{itemize}
\end{theorem}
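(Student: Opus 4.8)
\textbf{Proof proposal for Theorem~\ref{thm:Consistency}.}
The statement is an ``if and only if,'' so the plan is to handle the two directions separately. The forward direction (necessity) is the easy one: suppose such a DAG $G'$ exists with $\overrightarrow{\E}\subseteq\overrightarrow{\E'}$ and $\overline{\E}=\overline{\E'}$. Since $G'$ is acyclic, its edge set contains no directed cycle, and hence neither does the subset $\E$; this gives the first condition. For the second condition, recall from the discussion just before the theorem that for a DAG without v-structures, any subset of its directed edges is a consistent set on the common skeleton, because applying $M_{14}$ to that subset unioned with the skeleton produces only orientations that agree with $G'$ and therefore cannot conflict pairwise. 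Concretely, for edges $\v_i\rightarrow\v_j$ and $\v_r\rightarrow\v_t$ in $\overrightarrow{\E}$, every edge in $DP[\v_i\rightarrow\v_j]$ is oriented as in $G'$ (by soundness, Theorem~\ref{thm:Orientationsoundness}, and the completeness results), and likewise for $DP[\v_r\rightarrow\v_t]$, so no edge can appear in one with one orientation and in the other with the reverse orientation. This establishes that $\E$ is a consistent set.

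The reverse direction (sufficiency) is the substantive part. Assume $\E$ has no directed cycle and $\overrightarrow{\E}$ is a consistent set on the skeleton $\overline{\E}$; we must construct a DAG $G'$ in the MEC of $G$ extending $\overrightarrow{\E}$. The natural approach is: first form $\E^\ast := M_{14}(\overrightarrow{\E}\sqcup\overline{\E})$ (or $M_{124}$, but since $G$ is a PCCG there are no v-structures so $M_3$ is vacuous by property~1 of Lemma~\ref{lem:MeekPropertiesPCCG}), and argue that $\E^\ast$ is still a well-defined PDAG---i.e., the mixed-edge union does not simultaneously demand $\v_k\rightarrow\v_l$ and $\v_l\rightarrow\v_k$. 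This is exactly where consistency of the set $\overrightarrow{\E}$ is used: by property~2 of Lemma~\ref{lem:MeekPropertiesPCCG}, $M_{14}(\overrightarrow{\E}\sqcup\overline{\E}) = \bigsqcup_{\v_i\rightarrow\v_j\in\overrightarrow{\E}} M_{14}(\{\v_i\rightarrow\v_j\}\sqcup\overline{\E}) = \bigsqcup_{\v_i\rightarrow\v_j\in\overrightarrow{\E}} DP[\v_i\rightarrow\v_j]$, and the consistent-set hypothesis says precisely that no two of these $DP$ entries conflict, so the union is a legitimate PDAG with no conflicting orientations. One then needs that $\E^\ast$ has no directed cycle; this should follow from the acyclicity of $\E$ together with the soundness of Meek orientations (Theorem~\ref{thm:Orientationsoundness}), since each orientation added by $M_{14}$ is forced in every acyclic, v-structure-free completion of the current graph, and a cycle among the $DP$-oriented edges would mean such a completion does not exist.

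Having obtained the conflict-free, acyclic PDAG $\E^\ast$ with skeleton $\overline{\E}$ and no v-structures, the final step is to extend it to a DAG in the same MEC. Here I would invoke the standard fact that a chordal graph whose partial orientation is ``extendable'' (no cycle, no new v-structure, closed under Meek rules) admits a consistent DAG extension---this is essentially the Dor--Tarsi / Meek extendability result, or can be built directly by repeatedly picking a sink in a chain component using the LexBFS-type ordering mentioned in the introduction. Since the third bullet in the definition of PCCG already guarantees there is a DAG in the MEC of $G$ with essential graph equal to $\overline{\E}$ consistent with $\overrightarrow{\E}$, one may even be able to shortcut this: the PCCG axioms plus the consistency of $\overrightarrow{\E}$ should together pin down such a DAG. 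Either way, the resulting $G'$ has $\overline{\E'}=\overline{\E}$ by construction, contains $\E^\ast\supseteq\overrightarrow{\E}$, and lies in the MEC of $G$, completing the proof.

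The main obstacle I anticipate is the middle step of the sufficiency direction: showing that $M_{14}(\overrightarrow{\E}\sqcup\overline{\E})$ is genuinely a PDAG (no directed cycle) rather than merely conflict-free on individual edges. Conflict-freeness pairwise (the consistent-set condition) rules out length-two ``cycles'' $\v_k\rightarrow\v_l\rightarrow\v_k$, but ruling out longer directed cycles requires a more global argument---most cleanly, showing that the orientations produced by $M_{14}$ are exactly those common to all no-new-v-structure acyclic completions, so that existence of even one such completion (guaranteed, after a little work, by the no-cycle hypothesis on $\E$ and the chordality of the skeleton) forces acyclicity of the union. Pinning down that ``one such completion exists'' cleanly---without circularity---is the delicate point.
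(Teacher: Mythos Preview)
Your forward direction is fine and matches the paper's brief argument.

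For the backward direction, two concrete problems. First, the ``shortcut via the PCCG axioms'' is circular: the third bullet of the PCCG definition \emph{is} the existence of a compatible DAG, which is exactly the conclusion you are trying to establish. The theorem should be read as characterising when that third bullet holds by two checkable conditions; you cannot assume it. Second, you correctly flag but do not close the acyclicity gap. Pairwise consistency rules out $2$-cycles in $\bigsqcup DP[\cdot]$, but you still need that no longer directed cycle appears and that no v-structure is created; invoking Dor--Tarsi requires exactly these hypotheses, so appealing to it here just relocates the difficulty.

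The paper takes a different, fully constructive route. It runs an explicit completion procedure (Algorithm~\ref{Alg:DAGConstruction}): apply $M_{124}$, then repeatedly pick a node with an incident undirected edge, orient all its remaining incident edges outward, and re-apply $M_{124}$. The work is in proving two invariants are maintained throughout: (i) the directed part stays a consistent set (hence, via an auxiliary lemma, v-structure-free), and (ii) no directed cycle is created. For (i) the key tool is a ``reversal'' lemma: if $\v_i\rightarrow\v_j\in DP[\v_k\rightarrow\v_l]$ then $\v_l\rightarrow\v_k\in DP[\v_j\rightarrow\v_i]$, which lets one argue that any newly forced orientation was already compatible with the existing ones. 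For (ii) the paper uses chordality to reduce to length-$3$ cycles and then a short case analysis over the candidate sub-graphs of $M_1,M_2,M_4$. Your outline would need analogues of both of these ingredients---in particular a replacement for the reversal lemma and the chordal cycle reduction---before the appeal to an extendability theorem becomes legitimate.
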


We can utilize Theorem \ref{thm:Consistency} to validate the domain knowledge of the expert. To do so, we check whether any pair of directed edges in $\SET{E}$ are consistent from the DP table. For detecting a directed cycle, we can execute DFS algorithm from all vertices on a graph that is extracted from $G$ by removing its undirected edges.


\section{Experiments}
\label{sec:Experiments}
In this section, we provide some experiments for various application of COL problems based on our results in previous sections. 
We propose different algorithms in each subsection in the following. We utilize DP table in designing almost all of these algorithms. Hence, we call the class of the proposed algorithms ``Causal Orientation Learning with Dynamic programming (COLD)". In our experiments, we define the edge density as the ratio of edges to the maximum number of possible edges in a graph of size $n$, i.e., $n(n-1)/2$. Moreover, we define the average degree as the ratio of edges to the number of  the nodes.

In all experiments, we generated chordal graphs by the method which has been presented in \cite{Lilian2008}. For each point in all figures, we averaged over 100 instances of the chordal graphs, otherwise we explicitly mention it. Our generated graphs are based on the similar networks given in bnlearn\footnote{https://www.bnlearn.com/}. The codes for proposed methods in this part are available in \hyperref[Codes]{https://github.com/raminsafaeian/COLD}.

\subsection{Recovering Essential Graphs}

One of the well-known methods for obtaining the essential graph from the observational data is PC algorithm \citep{Spirtes00}. This algorithm has three main steps: (1) performing conditional independence tests between random variables in order to construct skeleton (2) identifying v-structures from separation sets (3) discovering more edges' orientations by applying Meek rules on graph including v-structures. According to Theorem \ref{thm:MPDAGcompleteness}, for further recovering edges' orientations, it suffices to apply Meek rules 1, 2 and 3 in third step of PC algorithm. In the following we propose a method to accelerate obtaining essential graph.

Similar to our previous DP table construction in Algorithm \ref{Alg:DPONE}, and considering Property 2 in Lemma \ref{lem:MeekPropertiesObserved}, we propose DPO method for the case that there exist v-structures in the graph. We show how one entry of DPO table is filled in Algorithm \ref{Alg:DPO}. The main difference between this algorithm and what we proposed in Algorithm \ref{Alg:DPONE} is that the Meek function $M_4$ is removed and v-structures orientations are considered for further recovering other edges' orientations.
Using DPO table enables us to accelerate Meek function $M_1$ which in turn will accelerate execution time of getting the essential graph from the MPDAG.

Consider an MPDAG $G = (\V,\E)$ with  set of nodes $\V$ and set of edges $\E$. We propose Algorithm \ref{Alg:EssentialGraph} for discovering further edges' orientations in order to obtain the essential graph. Based on Property 1 in Lemma \ref{lem:MeekPropertiesObserved}, we apply Meek function $M_3$ once, in the beginning of algorithm in Line 7. In Lines 8-14, we repetitively orient further edges by applying Meek functions $M_1$ and $M_2$. Applying Meek function $M_1$ is based on our DPO table in Lines 9-12. We call this algorithm ``COLD (Essential)".

We compared the execution time of our proposed COLD algorithm with the conventional
method for the third step of PC algorithm. Conventional method is the one that we apply
Meek rules in no particular order to orient further edges, until no more edge can be oriented.
In order to have a fair comparison, we implemented a function for the applying Meek rules similar to the one in \cite{PCAlg} library in python and considered it as the conventional method. For generating MPDAGs, we first constructed random chordal graphs. Then, we use lexicographic BFS or LexBFS \citep{Donald1970} to consider an ordering on the vertices and obtain a DAG. Then, we remove one of the edges such that at least one v-structure be created. Then, we will get the MPDAG from this generated DAG.

\begin{algorithm}[ht]
	\caption{Computation of $M_{1}( \{ \v_s \rightarrow \v_d \} \sqcup  \overline{\E} )$ using dynamic programming method, considering v-structures.}
	\label{Alg:DPO}
	\begin{algorithmic}[1]

		\State \textbf{Input:} 			$G=(\V,\E)$, a MPDAG
		\State \textbf{Output:} 		Oriented edges as a result of applying function $M_{1}$ on $\{\v_s \rightarrow \v_d\} \sqcup {\E}$ which is in $DPO[\v_s \rightarrow \v_d]$
		\State $\E_v\leftarrow$ set of v-structure sub-graphs in ${\E}$
		\Function{$OrientOneEdge$}{$DPO$, $\v_s$, $\v_d$, $\E_v$} 

		\State $DPO[\v_s \rightarrow \v_d] \leftarrow \{\v_s \rightarrow \v_d\}$
		\For{$\v_j \in neigh(\v_d) \backslash neigh(\v_s)$} 
			\If{$\v_j \rightarrow \v_d \notin \E_v$}
			\If{$DPO[\v_d \rightarrow \v_j] = NULL$}
				\State $DPO \leftarrow OrientOneEdge(DPO,\v_d,\v_j,\E_v)$
			\EndIf
			\State $DPO[\v_s \rightarrow \v_d] \leftarrow DPO[\v_s \rightarrow \v_d] \sqcup DPO[\v_d \rightarrow \v_j]$
    	\EndIf
		\EndFor 
		\EndFunction
		\State \textbf{Return} $DPO$
	\end{algorithmic}
\end{algorithm}

\begin{algorithm}[ht]
	\caption{COLD (Essential)}
	\label{Alg:EssentialGraph}
	\begin{algorithmic}[1]

		\State \textbf{Input:} 		MPDAG graph $G=(\V,\E)$	\State \textbf{Output:} 		Essential graph of $G$
		
		\State Fill all $DPO$ entries with $NULL$ value

		\Function{Essential}{$G$} 
		\State $\E_v \leftarrow$ set of v-structure sub-graphs in ${\E}$
		\State $\E_o \leftarrow \E$
		\State $\E_t = M_3(\E)$
		\While{$|\overrightarrow{\E}_t| > 0$} 
		    \For{$\v_s \rightarrow \v_d \in \overrightarrow{\E}_t$}
			\If{$DPO[\v_s \rightarrow \v_d] = NULL$}
				\State $DPO \leftarrow OrientOneEdge(DPO,$ $\v_s,$ $\v_d,$ $\E_v)$
			\EndIf
			\State $\E_o = \E_o \sqcup DPO[\v_s \rightarrow \v_d]$
			    		\EndFor
    		\State $\E_t = M_2(\E_o)\backslash \overrightarrow{\E}_o$

		\EndWhile 
		\State \textbf{end}
		\EndFunction
		\State \textbf{Return} $G=(\V,\E_o)$
	\end{algorithmic}
\end{algorithm}

\begin{figure}[ht]
		\centering

		\begin{tabular}{*{2}{c}}
			\includegraphics[]{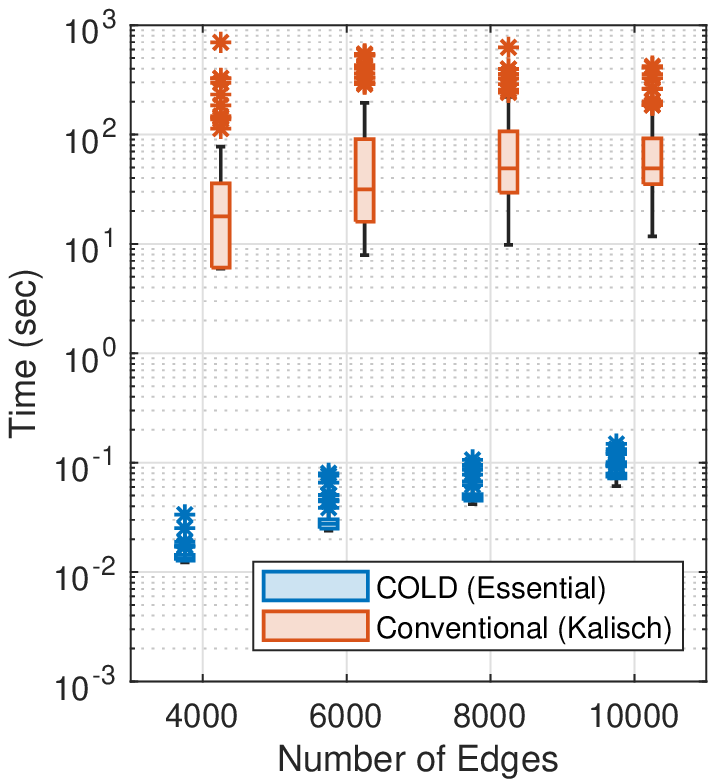}
			&
			\includegraphics[]{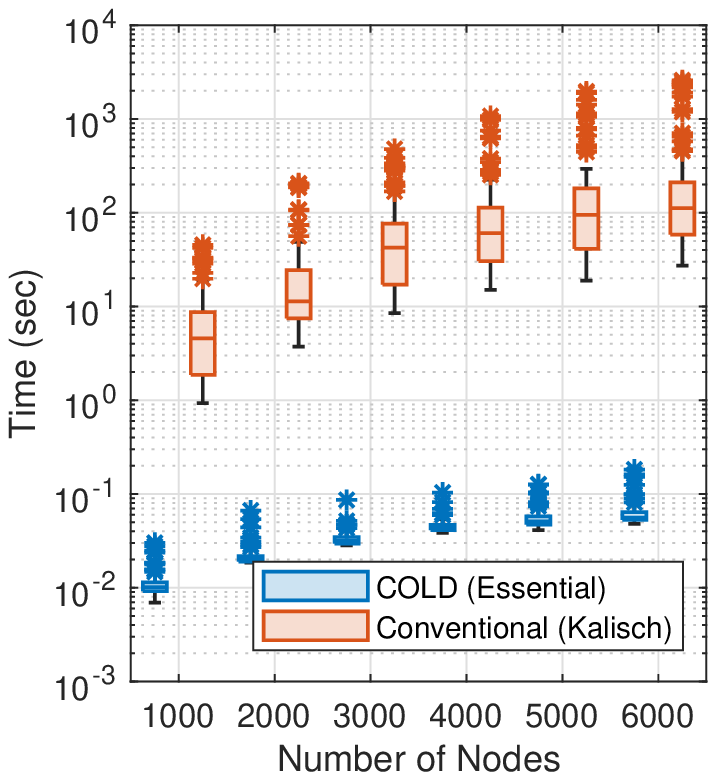}
			\\
			$\;\;\;\;\;\;\;\;\;\;\;\;$(a) & $\;\;\;\;\;\;\;\;\;\;\;\;$(b)
		\end{tabular}
	\caption{(a): 
	Comparison between execution times of COLD (Essential) and conventional method, given a MPDAG, versus the number of edge for graphs with 3000 number of nodes (b): Comparison between execution times of COLD (Essential) and conventional method versus the number of nodes for graphs with average degree of 2.2.}
	    \label{Fig:COLDEssential}

\end{figure}
Figure \ref{Fig:COLDEssential}(a) shows the execution time of COLD (Essential) and conventional method versus number of edges for graphs with $3000$ nodes. As can be seen, the COLD (Essential) is outperform the conventional method by a factor up to $20788$. 
For instance, for a graph with $3000$ nodes and $4000$ edges, the execution time of COLD (Essential) is about $0.0335$ seconds, while the execution time of  conventional method is about $696.4$ seconds. Comparison of these two methods for graphs with fixed average degree, $2.2$, versus number of graph nodes is depicted in \ref{Fig:COLDEssential}(b). Again, the proposed algorithm has much lower execution time with respect to the conventional method. For instance, for graphs of size $6000$, the speed up ratio is about $24910$, where the execution times of COLD (Essential) and conventional method are $0.0994$ and $2476$ seconds, respectively.

\subsection{Accelerating MEC Size Computation}
The size of a MEC is the number of DAGs it contains. Main approach in previous work is to partition existing DAGs in an MEC in order to count them. Total number of DAGs can be computed by summation over the number of DAGs in each partition. There are different methods for such a partitioning. For instance, \cite{He15} performed partitioning of DAGs in an MEC based on their roots\footnote{A node is defined as a root when all the edges are outgoing from this node toward its neighbors. }. Additionally, they proposed some closed-form formula to compute number of DAGs in an MEC when there is a specific relation between number of edges and number of nodes in the graph. They extended their work by considering some special structure ``core graphs'' in \cite{He16} to efficiently count the number of DAGs in an MEC. \cite{Saleh19} proposed to partition the MEC based on its clique tree representation. Additionally, they have proposed a dynamic programming method to store the computed size of each sub-graph to avoid multiple calculation of its size. Similar to \cite{Saleh19}, \cite{Talviti19} proposed a dynamic programming method for MEC size calculation. Recently, \cite{Teshnizi20} proposed to compute the MEC size by partitioning DAGs in MEC by all possible orientations for edges that are connected to a node. 

We can take advantage of Meek functions properties to accelerate any previous method that uses  Meek rules as a subroutine. Herein, as an example, we will accelerate MEC size computation method proposed by \cite{He15}. In this method, given a UCCG $G$, its size can be computed as follows:
\begin{align*}
&Size(G) = \underset{v \in \V\;}{\sum\;} \underset{{C}_i \in \mathcal{C}(G^\v)}{\prod}  Size({C}_i),
\end{align*}
where $Size(G)$ is the size of MEC corresponding to $G$ and $G^\v$ is a graph as a result of setting node $\v$ as a root in graph $G$. We utilize our proposed Meek function properties, such as Proposition \ref{Prop:DP} and Theorem \ref{Thm:InterventionM124}, and also dynamic programming method proposed by \cite{Saleh19} to accelerate UCCG size calculation. We present our proposed algorithm for counting number of DAGs in an MEC as ``COLD (MECSize)", in Algorithm \ref{Alg:MECSize}.

In Algorithm \ref{Alg:MECSize}, Lines 7-11 is based on what have been proposed in \cite{He15}. In these lines, we check whether the UCCG satisfies a specific relation between number of nodes and edges. We can determine the MEC size of these graphs merely from number of nodes. If we cannot calculate MEC size based on these lines, we execute Lines 12-13 to know whether we have already computed the MEC size for this UCCG or not. If that is the case, we will use the stored value in the memory. Otherwise, we execute Lines 14-17 for MEC size computation. In Line 15, we obtain the edges that can be oriented as a result of considering a node $v$ to be a root. In Line 16, we remove these directed edges from the graph and extract the chain components of the remaining graph. For each chain component, the size function will be called in Line 17. Finally, in Line 18, we store the computed size in memory for using in the next calls.

\begin{algorithm}
	\caption{COLD (MECSize) Algorithm}
	\label{Alg:MECSize}
	\begin{algorithmic}[1]
		\State \textbf{Input:} 			UCCG $G=(\V,\E)$
		\State \textbf{Output:} 		$|MEC(G)|$
		\State \text{SizeDP} \text{is a storage indexed with set of nodes} $\SET{T}$ and is initialized by NULL for each $\SET{T} \subseteq \V$

		\Function{$Count$}{$G=(\V,\E)$}
		\State $p \leftarrow |\V|$
		\State \textbf{switch}\text{ $|\E|$ \textbf{do}}
		\State \textbf{\space\space\space\space case} $p - 1$ \textbf{return} $p$;
		\State \textbf{\space\space\space\space case} $p$ \textbf{return} $2p$;
		\State \textbf{\space\space\space\space case} $p(p - 1)/2 - 2$ \textbf{return} $(p^2 - p - 4)(p - 3)!$;
		\State \textbf{\space\space\space\space case} $p(p - 1)/2 - 1$ \textbf{return} $2(p - 1)! - (p - 2)!$;
		\State \textbf{\space\space\space\space case} $p(p - 1)/2$ \textbf{return} $p!$;
		\If{$\V$ is in SizeDP}
		\State \textbf{return} $ SizeDP[\V]$	
		\EndIf
		\For{each $\v$ in $\V$} 
		\State $\E_r = \bigsqcup_{ \v_o \in neigh(\v) }DP [\v \rightarrow \v_o ]$
		\State $G' \leftarrow G=(\V, \overline{\E} \backslash \overline{\E}_r)$
		\State $S_v =  \prod_{\mathcal{C}_i \in \mathcal{C}(G')}Count(\mathcal{C}_i)$
		\EndFor  
		\State $SizeDP[\V] =  \sum_{\v \in \V }S_v$
		\State \Return $SizeDP[\V]$
		\EndFunction
	\end{algorithmic}
\end{algorithm}

We compared COLD (MECSize) algorithm with the previous state of the art algorithm, LazyCount \citep{Teshnizi20} and a former work MemoMAMO \citep{Talviti19}. The size of graphs in this experiment is equal to $30$.  
As can be seen in Figure \ref{Experiment:COLDMEC}, our proposed algorithm performs better than the others for a wide range of number of edges.
Moreover, performance of LazyCount degrades by decreasing the number of edges. While performance of MemoMAMO degrades by increasing the number of edges. For instance, for graphs with $350$ edges, the speed up ratios with respect to MemoMAMO and LazyCount are $2.88$ and $1.83$, respectively.

\begin{figure}[ht]
	\centering
	\includegraphics[]{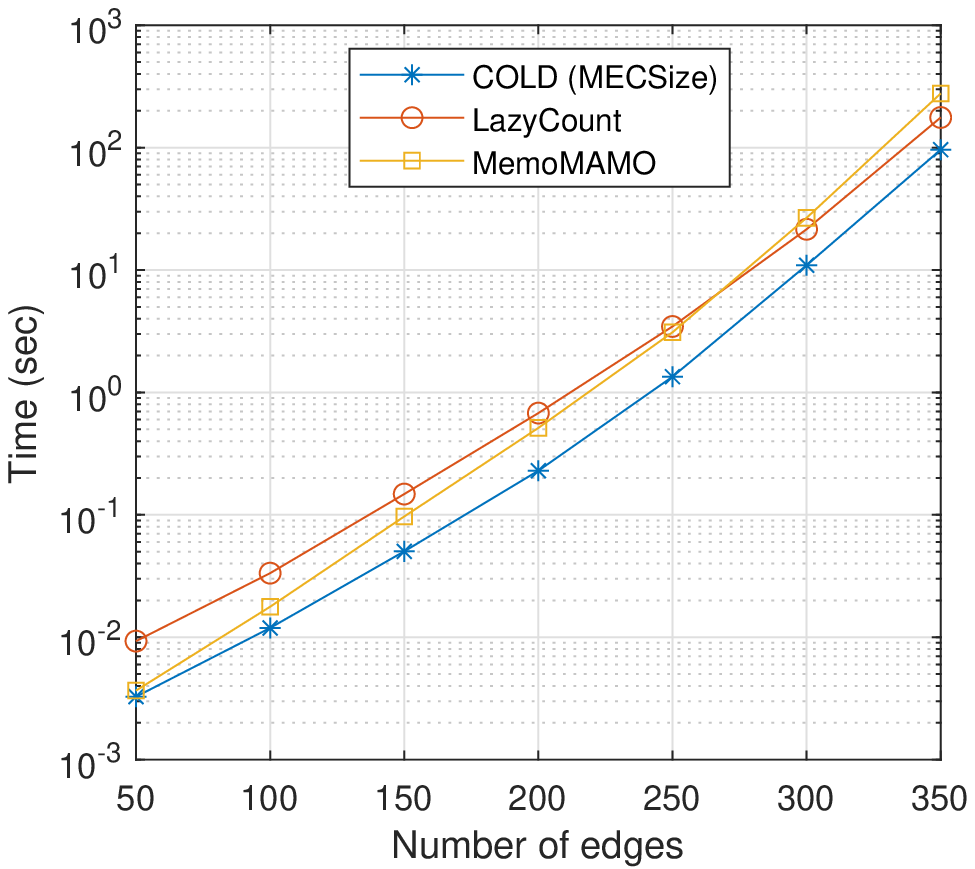}
	\caption{Running times of COLD (MECSize), LazyCount, and MemoMAMO for graphs with 30 number of nodes.}
	\label{Experiment:COLDMEC}
\end{figure}

\subsection{Acceleration of Experiment Design in Active Learning Setting}
\label{Sec:Exp_MinMax}
In active learning setting, we determine the nodes for next interventions after observing the results of previous interventions. Some previous work tried to use gathered data from previous interventions to design next intervention set based on a Bayesian method \citep{Ness17,Kristjan19,Uhler19}. For instance,  \cite{Kristjan19} proposed a method to search for a node that has the maximum posterior probability to be a root variable. This method is mainly suitable for graphs with tree structures. Some other work assumed that infinite samples are available after performing intervention \citep{Hauser14,He08,Kocaoglu17,Shanmugam15,Saleh18}. Therefore, the edges' orientations incident to the intervened variable can be exactly determined. 

In the case of infinite samples, previous works have considered different objective functions for the problem of experiment design. \cite{He08} proposed to select candidate intervention nodes based on mutual information criterion. 
Another approach that has been proposed by this work is to select a node with maximum number of neighbors. \cite{Hauser14} considered an objective function to select a node for intervention that results in orienting maximum number of edges in the worst case scenario after  the intervention. More specifically, for a given UCCG $G=(\V,\E)$, the following objective function is considered:
\begin{align*}
&v^\star \in \underset{v \in \V} {argmax\;\;} \underset{\SET{I} \subseteq \SET{C}_k, \SET{C}_k \in \SET{C}(\v)} {min \;\;} \left | M_{124} \left ( \SET{S} \sqcup \overline{\E} \right )\right |,
\end{align*}
where $\SET{S} = \{\v_i \rightarrow \v|\v_i \in \SET{I} \} \sqcup \{\v \rightarrow \v_o| \v_o \in neigh(\v)\backslash\SET{I} \}$. We call this objective function, ``MinMax" function. In MinMax function, for every node in the graph, the minimum number of oriented edges, in the worst case, will be computed when we intend to perform an intervention on that node. The possible edges' orientation after intervention will be partitioned by considering different ingoing edges toward the target node \citep{Hauser14}. We know that the ingoing edges must be a subset of a maximal clique. Thus, we search over all possible ingoing edges sets like $\SET{I}$ in every maximal clique $\SET{C}_k$.
For each of these orientations, we apply Meek function $M_{124}$ to discover further edges' orientations. As mentioned earlier, using Meek function properties, we can accelerate any function that needs Meek rules to be applied. In the following, we show that using our proposed method for computing Meek function results, we can accelerate the process of optimizing MinMax objective function.

We compared our proposed algorithm, which we call ``COLD (MinMax)", with LazyIter \citep{Teshnizi20} method, which is the state of art  in solving MinMax problem. COLD (MinMax) uses DP tables and utilizes Theorem \ref{Thm:InterventionM124} to apply Meek rules, rather than applying conventional Meek rules.  The execution times for these two methods are given in Figure \ref{Experiment:MinMaxVsHauser}.  The curves are the average of execution times over 100 generated chordal graphs.
 Figure \ref{Experiment:MinMaxVsHauser}(a) shows that the execution time of finding the best node versus number of edges for graphs with 1000 number of nodes. Figure \ref{Experiment:MinMaxVsHauser}(b) shows the execution time of finding the best node versus number of node for graphs graphs with 1.1 average degree.
 
As can be seen in Figure \ref{Experiment:MinMaxVsHauser}, the execution time of our proposed algorithm  is considerably less than the one for LazyIter Algorithm. As expected, we gain by avoiding from applying the enormous number of Meek rules using DP table.

\begin{figure}[ht]
		\centering

		\begin{tabular}{*{2}{c}}
			\includegraphics[]{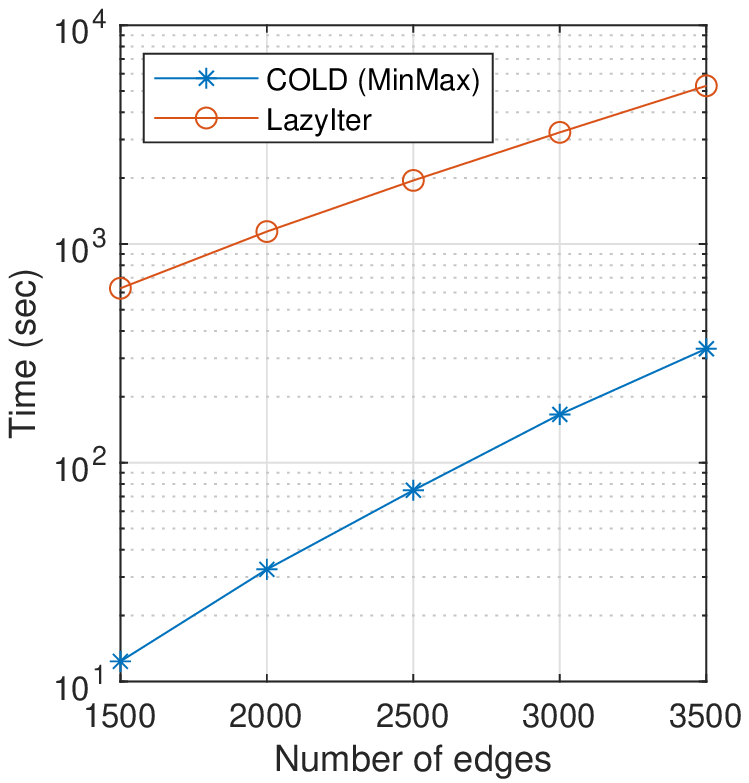}
			&
			\includegraphics[]{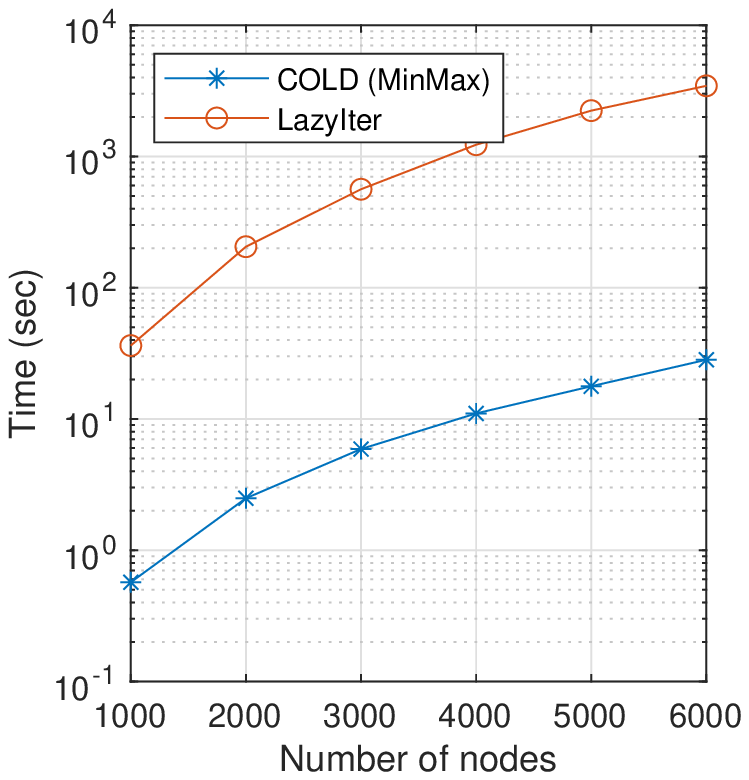}
			\\
			$\;\;\;\;\;\;\;\;\;\;\;\;$(a) & $\;\;\;\;\;\;\;\;\;\;\;\;$(b)
		\end{tabular}
	\caption{(a): Comparison between execution times of COLD (MinMax) and LazyIter \citep{Teshnizi20} versus number of edges for graphs with 1000 number of nodes (b): Comparison between execution times of COLD (MinMax) and LazyIter \citep{Teshnizi20} versus number of nodes for graphs with 1.1 average degree.}
	    \label{Experiment:MinMaxVsHauser}

\end{figure}

\subsection{Quality of the Proposed Lower Bound}
In this section, we investigate the quality of our proposed lower bound with respect to number of edges that can be oriented in the worst case scenario for each node.  We evaluated the lower bound in two cases: 1- The edge density is fixed and we increase the number of nodes, 2- The number of nodes is fixed and we increase the number of edges. We provide the summary of these simulation results for these two cases in Table \ref{Table:LBFixedDensity} and \ref{Table:LBFixedNode}, respectively. Note that for finding the minimum number of oriented edges in the worst case, we enumerated all consistent DAGs. 

\begin{table}[ht]
	\begin{center}
		\begin{tabular}{|c|c|c|c|c|c|c|}
			\hline
			$|\V|$ & Edge density & Exact \% & Norm. avg. worst case & Mean gap & Avg. Time (Sec)
			\\ \hline
			20 & 0.4 & 0.9915 &     0.0029 &    0.0245 &    0.0080 \\
			30 & 0.4 & 0.9827 &     0.0088 &    0.1190 &    0.0444 \\
			40 & 0.4 & 0.9722 &     0.0159 &    0.2843 &    0.1596 \\
			50 & 0.4 & 0.9708 &     0.0213 &    0.5328 &    0.4981 \\			
			60 & 0.4 & 0.9627 &     0.0228 &    0.8143 &    1.2824 \\		
			\hline
		\end{tabular}
		\caption{Lower bound comparison with the true value for different number of oriented edges with fixed edge density}
		\label{Table:LBFixedDensity}
	\end{center}
\end{table}

\begin{table}[ht]
	\begin{center}
		\begin{tabular}{|c|c|c|c|c|c|c|}
			\hline
			$|\V|$ & Edge density & Exact \% & Norm. avg. worst case & Mean gap & Avg. Time (Sec)
			\\ \hline
			40 &  0.1 &  0.9996 &  0.0002  &   0.0004 &    0.0064 \\
			40 &  0.2 &  0.9854 &  0.0071  &   0.0563 &    0.0319 \\
			40 &  0.3 &  0.9779 &  0.0120  &   0.1549 &    0.0843 \\
			40 &  0.4 &  0.9722 &  0.0159  &   0.2843 &    0.1596 \\
			40 &  0.5 &  0.9762 &  0.0157  &   0.3555 &    0.2605 \\
			40 &  0.6 &  0.9781 &  0.0129  &   0.4136 &    0.3654 \\
			40 &  0.7 &  0.9825 &  0.0104  &   0.4338 &    0.4671 \\
			\hline
		\end{tabular}
		\caption{Lower bound comparison with the true value for different edge density with fixed number of nodes}
		\label{Table:LBFixedNode}
	\end{center}
\end{table}

In these tables, the ``exact $\%$" column shows in how many cases, our proposed lower bound is equal to the true value. In ``norm avg worst case" column, we considered the maximum difference between the true value and the proposed lower bound over all nodes for each graph, and then averaged over all graphs and divided it by the number of existed edges for that setting. 
In ``mean gap" column, we computed the average difference between the true value and the lower bound for each node in all graphs. The last column shows the average execution time for computing the lower bounds for all nodes in each graph.

The results in Table \ref{Table:LBFixedDensity} and \ref{Table:LBFixedNode} show that in almost all cases, the lower bound is exactly equal to the true value. Moreover, in other few cases, the gap between the lower bound and the true value is negligible.

\subsection{Intervention Design Based on the Lower Bound Criterion}
In this part, we propose a new objective function to find the best node for intervention. This function reduces the computational complexity while preserving the number of interventions in order to fully identify the whole causal structure.
Our objective function is defined as the following:
\begin{align*}
&v^\star \in \underset{v \in \V} {argmax\;\;} L(v),
\end{align*}
where the $L(v)$ is the lower bound on number of oriented edges after intervention on node $v$. Based on the above objective function we select a node for intervention that has maximum lower bound with respect to the other nodes. 

In the previous part, we evaluated the quality of our proposed lower bound. The reason for using lower bound instead of the true value is that lower bound is a good estimator of it while saving the computation time. We compared our heuristic algorithm based on our lower bound which we call it ``COLD (LB)" with the one that we proposed in Section \ref{Sec:Exp_MinMax}, i.e., COLD (MinMax).

\begin{figure}[ht]
		\begin{tabular}{*{2}{c}}
			\includegraphics[]{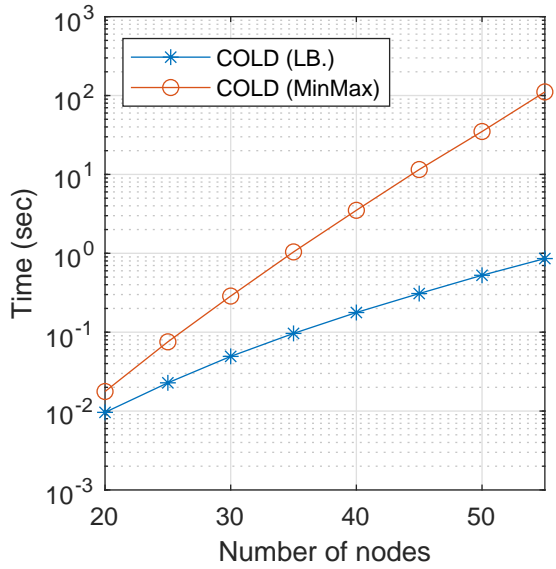}
			&
			\includegraphics[]{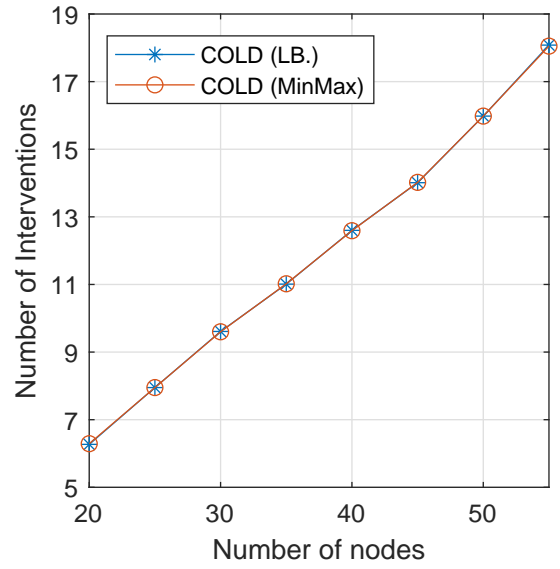}
			\\
			$\;\;\;\;\;\;\;\;\;\;\;\;$(a) & $\;\;\;\;\;\;\;\;\;\;\;\;$(b)
			\\
			\includegraphics[]{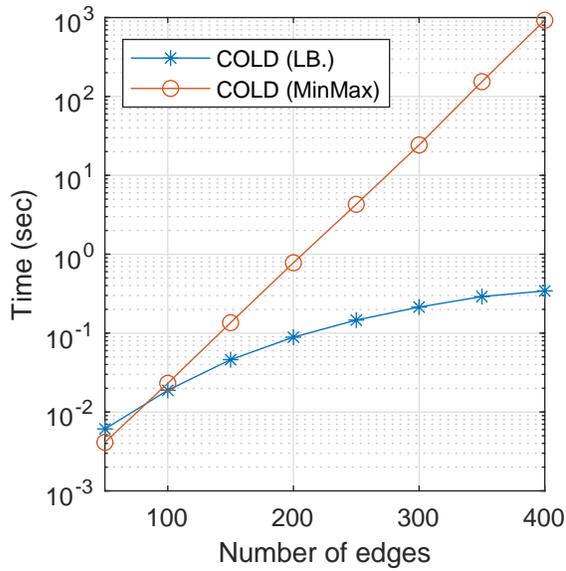}
			&
			\includegraphics[]{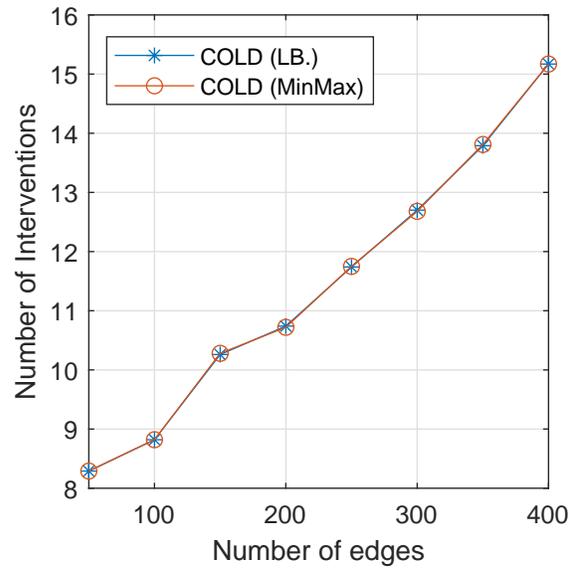}
			\\
			$\;\;\;\;\;\;\;\;\;\;\;\;$(c) & $\;\;\;\;\;\;\;\;\;\;\;\;$(d)
		\end{tabular}
		\caption{Comparison between COLD (LB) and COLD (MinMax) (a): execution time versus number of nodes for graphs with edge density of 0.35. (b): Average number of interventions for full identification versus number of nodes for graphs with edge density of 0.35. (c): execution times versus number of edges for graphs with 40 nodes. (d): Average number of interventions that is needed for full identification versus number of edges for graphs with 40 nodes.}
		 
		\label{Experiment:LBvsMinMAx}
\end{figure}

The result of comparison between two algorithms COLD (MinMax) and COLD (LB) is shown in Figure \ref{Experiment:LBvsMinMAx}. We see that COLD (LB) is considerably faster than COLD (MinMax) in both cases of increasing number of nodes (Figure \ref{Experiment:LBvsMinMAx}(a)) or increasing number of edges (Figure \ref{Experiment:LBvsMinMAx}(c)). We plot the average number of interventions that is needed for full identification in Figure \ref{Experiment:LBvsMinMAx}(b) and \ref{Experiment:LBvsMinMAx}(d). We can see that in both of these plots, the average number of interventions for full identification is the same for both algorithms.

In this part, we evaluate our algorithm  for the finite-sample case. Here, our main goal is to demonstrate how one can use the infinite sample based algorithms in the finite sample case.  
Thus, we just compared our algorithm with two baselines: 1- ``Random Naive'' algorithm, which always selects a random node for the next intervention from the set of all nodes, and 2- ``Random Chordal'' algorithm, which selects a random node from set of nodes that those nodes have at least one connected undirected edge in the obtained $\mathcal{I}$-essential graph after each intervention.

\begin{figure}[ht]
	\centering
	\includegraphics[]{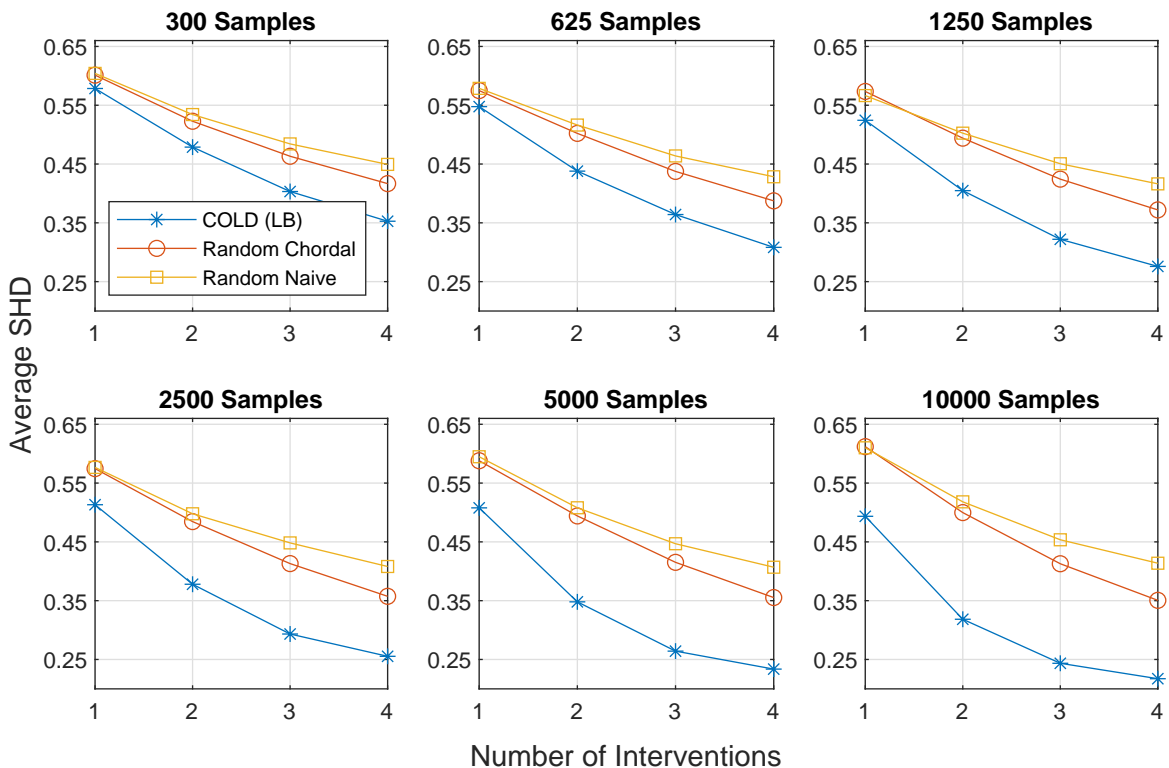}
	\caption{Average SHD values of COLD (LB), Random chordal, and Random Naive versus number of interventions for graphs with 25 nodes and 35 edges.}
	\label{Experiment:COLDSampleBased}
\end{figure}

We considered 3000 randomly generated linear Gaussian causal model for graph with 25 nodes and 35 edges. All edges coefficients were sampled uniformly from $[-1.5,-0.5]\cup[0.5,1.5]$ and error variances were uniformly sampled from $[0.01,.2]$. We consider a budget of at most four interventions. We compared our proposed COLD (LB) algorithm, with Random Naive and Random Chordal algorithms for the number of samples $300,625,1250,2500,$ $5000,10000$. Figure \ref{Experiment:COLDSampleBased} shows the result of these simulations. We use the structural hamming distance (SHD), which is defined in \cite{Brown}, for comparing the identification rate for each of these algorithms. We divided the SHD value of each graph in each step of intervention by number of edges and averaged over all instances. Note that limited number of samples may cause to converge to a wrong DAG. In this stage, Random Chordal and COLD (LB) algorithms stop intervening on nodes. Thus, their SHD will remain unchanged after a number of interventions. In order to compare these algorithms with Random Naive algorithm, which has not any stopping rule, we consider the average SHD value in the last intervention before getting a DAG as the SHD value for the following steps of interventions. Although, the number of interventions in Random Naive algorithm is more than the ones in the other two algorithms, simulation results show that our COLD (LB) Algorithm outperforms baselines even in the small number of samples.

\subsection{Practical Trick}
In this part, we add a practical trick to enhance the  acceleration of some of the applications that have been discussed in this section. In applications such as MinMax problem or obtaining the lower bound in previous part, we can stop the computation as early as we know that the selected node cannot be the desired solution. We call this ``early stopping trick". For example, suppose we have executed the lower bound algorithm on a part of nodes in the graph and obtain that the maximum lower bound among these nodes. We select another node such $v$ to compute its corresponding lower bound. If in the process of computing some $L_C$, we find that the lower bound of the node $v$ is less than what we have already obtained, we immediately terminate the process of this node and we do not consider it as our desired solution.

\begin{figure}[ht]
		\centering

		\begin{tabular}{*{2}{c}}
			\includegraphics[]{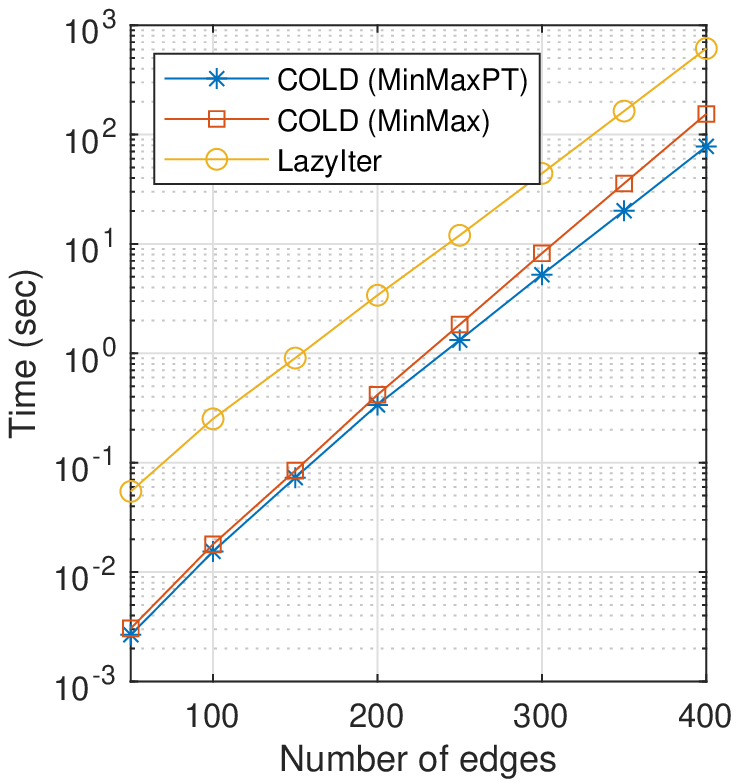}
			&
			\includegraphics[]{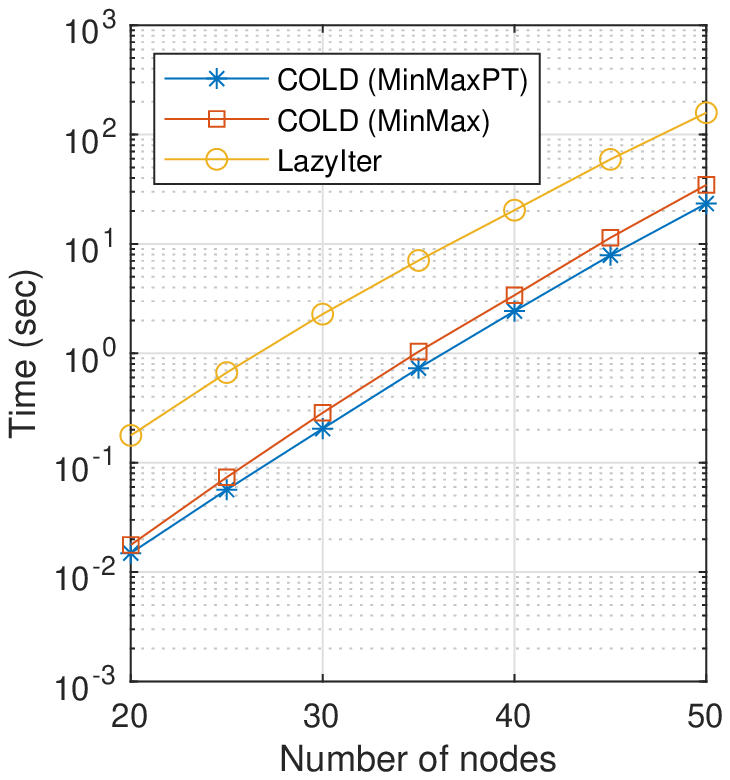}
			\\
			$\;\;\;\;\;\;\;\;\;\;\;\;$(a) & $\;\;\;\;\;\;\;\;\;\;\;\;$(b)
		\end{tabular}
	\caption{(a): Comparison between execution times of COLD (MinMaxPT), COLD (MinMax) and LazyIter \citep{Teshnizi20} versus number of edges for graphs with 40 nodes (b): Comparison between execution times of COLD (MinMaxPT), COLD (MinMax) and LazyIter \citep{Teshnizi20} versus number of nodes for graphs with edges density of 0.35.}
	\label{Experiment:PracticalTrick}
\end{figure}

In order to evaluate the gain we can get from this trick, we utilize it in selecting a target intervention by considering MinMax objective function in the active learning setting. We call the proposed algorithm using this trick ``COLD (MinMaxPT)".
Here, we compare this method with COLD (MinMax) that has been proposed previously in this section. The execution times of these two algorithms are shown in Figure 	\ref{Experiment:PracticalTrick}. In both cases, i.e., increasing number of edges  with fixed number of nodes (Figure \ref{Experiment:PracticalTrick}(a)) or increasing number of nodes with fixed edge density (Figure \ref{Experiment:PracticalTrick}(b)), using practical trick improves the computation time for solving MinMax problem.

\section{Conclusion}
\label{sec:Conclusion}

In this paper, we focused on the COL problem which may appear in many causal discovery tasks. Traditionally, Meek rules can be applied in order to solve a COL problem.
Here, we presented a mathematical representation of these rules with Meek functions. Based on this representation, we provided some desirable properties for these functions. These properties enable us to simplify or accelerate solving COL problems. For example, we utilized these properties to accelerate different causal discovery problems such as solving experiment design problems where we need to apply Meek rules multiple times. Additionally, we took advantage of these properties to obtain a tight lower bound on number of oriented edges for a target intervention in a causal graph. To the best of our knowledge, this is the first lower bound on number of oriented edges for a target intervention. Furthermore, we used these rules to check whether the prior knowledge acquired from an expert in orienting some undirected edges in causal graph is compatible with already oriented edges or not. We proposed a heuristic algorithm that solves experiment design problem in active setting based on lower bound which reduce the execution time significantly.

As an interesting future research direction, one can use Meek properties to obtain an upper bound for number of oriented edges after performing intervention. Furthermore,  designing algorithms based on these Meek properties is another possible research direction for experiment design in passive setting. 



\newpage

\appendix

\section*{Appendices}

\subsection*{Proof Sketch.} 
Lemma 12 provides Meek function properties on PCCGs. We prove this lemma by showing each property of that in Sections E, F, G, and H. Lemma 20 is related to applying Meek functions $M_{14}$ on PCCGs and it is proved by using Lemma 12. According to Lemma 12 and Lemma 16, which is about Meek function properties on ICCGs, Theorem 17 is proved. This theorem asserts that Meek functions $M_{124}$ can be decomposed to union of applying Meek functions $M_{14}$ and $M_2$. Using Theorem 17 and Lemma 20, we prove the Theorem 21 which introduces a method to obtain $\mathcal{I}$-essential graph. Based on the below sketch, we derive the proof of Lemma 22, lower bound for a maximal clique, by using a few lemmas and theorems depicted in the following figure. Finally, the proof of Theorem 24, lower bound for a case of node, is using Theorem 21 and Lemma 22.
\usetikzlibrary{decorations.markings,arrows.meta}
\tikzset
{midarrow/.style={decoration={markings,mark=at position 0.6 with
			{\arrow[xshift=2pt]{Latex[length=8pt,#1]}}},postaction={decorate}}
}
\tikzset
{midarrowplus/.style={decoration={markings,mark=at position 0.73 with
			{\arrow[xshift=2pt]{Latex[length=8pt,#1]}}},postaction={decorate}}
}
\tikzset
{midarrowminus/.style={decoration={markings,mark=at position 0.47 with
			{\arrow[xshift=2pt]{Latex[length=8pt,#1]}}},postaction={decorate}}
}

\begin{figure}[ht]
    \centering
    
\begin{tikzpicture}[
  My Style/.style = {shape=rectangle, rounded corners,
    draw=none, align=center,thick,
    top color=white, bottom color=gray!0},
  My Style1/.style = {shape=rectangle, rounded corners,
    draw=none, align=center,thick,
    top color=white, bottom color=gray!0},
 every edge/.style={draw=black,thick}]


\node [My Style1] (L7) at (-1,-4)  {Lemma 12(E, F, G, H)};
\node [My Style1] (P15) at (-7,-8)  {Proposition 18(M)};
\node [My Style1] (L13) at (-2.3,-5.5)  {Lemma 16(J)};
\node [My Style1] (T14) at (-.75,-8)  {Theorem 17(L)};
\node [My Style1] (L19) at (-3.5,-14)  {Lemma 22(Q)};
\node [My Style1] (T18) at (2,-11)  {Theorem 21(P)};
\node [My Style1] (L17) at (+2.2,-8)  {Lemma 20(O)};
\node [My Style1] (L28) at (-7.5,-11)  {Lemma 32(Q)};


\node [My Style1] (T21) at (-2,-17)  {Theorem 24(R)};

snake=expanding waves,segment angle=7

\path[snake=expanding waves,->,draw=none]
(L7) edge[out=270, in=90]     node[right]                      {} (T14)
(L13) edge[out=270, in=90]     node[right]                      {} (T14)
(L7) edge[out=270, in=90]     node[right]                      {} (L17)
(L17) edge[out=270, in=90]     node[right]                      {} (T18)
(T14) edge[out=270, in=90]     node[right]                      {} (T18)
(T18) edge[out=270, in=90]     node[right]                      {} (L19)
(P15) edge[out=270, in=90]     node[right]                      {} (L19)
(T14) edge[out=270, in=90]     node[right]                      {} (L19)
(L13) edge[out=270, in=90]     node[right]                      {} (L19)
(L7) edge[out=180, in=90]     node[right]                      {} (L19)
(L28) edge[out=270, in=90]     node[right]                      {} (L19)
(L19) edge[out=270, in=90]     node[right]                      {} (T21)
(T18) edge[out=270, in=90]     node[right]                      {} (T21);


\end{tikzpicture}
\nonumber
\end{figure}

\subsection*{A. Proof of Theorem \ref{thm:Orientationsoundness}}
Corresponding sub-graphs for Meek rule 1, 2 and 3 are as the same of their corresponding candidate sub-graphs. Thus, based on the \cite{Meek95}, applying these rules on their corresponding candidate sub-graphs are sound. We use the the following lemma to prove the soundness of applying Meek rule 4 on its corresponding candidate sub-graph.
\begin{lemma}
    \label{lem:correspondingcandidateMeekrule4}
    The result of applying Meek rule 1 and then Meek rule 4 on corresponding candidate sub-graph for Meek rule 4 is the same as of applying Meek rule 4 on its corresponding sub-graph.
\end{lemma}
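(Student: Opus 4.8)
The plan is to write down the candidate sub-graph for Meek rule~4 explicitly, show that a single application of Meek rule~1 turns it into the corresponding sub-graph for Meek rule~4, and then apply Meek rule~4 once; the claim then follows by comparing the edges that end up oriented in the two procedures.

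First I would fix notation from Table~\ref{table:Meek}. The candidate sub-graph for Meek rule~4 is the induced sub-graph on $\{\v_i,\v_j,\v_k,\v_l\}$ carrying the undirected edge $\v_i \line \v_l$, the directed edge $\v_j \rightarrow \v_k$, and the three dashed edges $\v_i \line \v_j$, $\v_i \line \v_k$, $\v_k \line \v_l$; by the convention stated right after Table~\ref{table:Meek}, the orientations of the dashed edges do not matter provided none of them lies on a v-structure, and for definiteness I treat $\v_i \line \v_j$ and $\v_i \line \v_k$ as undirected. Since the sub-graph is induced and has no edge between $\v_j$ and $\v_l$, these two nodes are non-adjacent. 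The corresponding sub-graph for Meek rule~4 is this same configuration with $\v_k \line \v_l$ replaced by $\v_k \rightarrow \v_l$.

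Next I would run Meek rule~1 on the candidate sub-graph. Because $\v_j \rightarrow \v_k$ is directed and $\v_j,\v_l$ are non-adjacent, the dashed edge $\v_k \line \v_l$ cannot be oriented $\v_l \rightarrow \v_k$, for that would place $\v_k$ in the v-structure $\v_j \rightarrow \v_k \leftarrow \v_l$, which the dashed-edge convention forbids; hence $\v_k \line \v_l$ is either undirected or already $\v_k \rightarrow \v_l$. In the undirected case the triple $(\v_j,\v_k,\v_l)$ is exactly an instance of the premise of Meek rule~1 ($\v_j \rightarrow \v_k$, $\v_k \line \v_l$, and $\v_j$ not adjacent to $\v_l$), so applying Meek rule~1 orients $\v_k \rightarrow \v_l$; moreover it orients nothing else inside this four-node set, since the only directed edges ever present here, namely $\v_j \rightarrow \v_k$ and then $\v_k \rightarrow \v_l$, have heads $\v_k$ and $\v_l$ whose only remaining undirected neighbour is $\v_i$, and $\v_i$ is adjacent to both tails $\v_j$ and $\v_k$, so no further instance of Meek rule~1 fires. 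Thus applying Meek rule~1 produces precisely the corresponding sub-graph for Meek rule~4 (in the already-oriented case Meek rule~1 is vacuous and we are there at once).

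Finally, applying Meek rule~4 to the corresponding sub-graph orients $\v_i \rightarrow \v_l$, as displayed in the second row of Table~\ref{table:Meek}, and this conclusion is independent of the orientations of the dashed edges $\v_i \line \v_j$ and $\v_i \line \v_k$. Hence applying Meek rule~1 and then Meek rule~4 to the candidate sub-graph yields exactly the corresponding sub-graph together with the extra oriented edge $\v_i \rightarrow \v_l$ — i.e. the directed edges $\{\v_j \rightarrow \v_k,\ \v_k \rightarrow \v_l,\ \v_i \rightarrow \v_l\}$ — which is precisely the output of applying Meek rule~4 directly to the corresponding sub-graph, proving the lemma. I expect the main obstacle to be the bookkeeping around the dashed edges: checking that no admissible orientation of $\v_i \line \v_j$, $\v_i \line \v_k$ (or of $\v_k \line \v_l$ before it is oriented) either blocks the two rule applications or creates a forbidden v-structure. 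This is exactly the exhaustive ``all possible orientations'' check mentioned after Table~\ref{table:Meek}, so it is routine to complete.
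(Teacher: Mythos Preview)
Your proposal is correct and follows essentially the same approach as the paper: both argue that the dashed edge $\v_k \line \v_l$ cannot be $\v_l \rightarrow \v_k$ (else a v-structure with $\v_j \rightarrow \v_k$), so it is either already $\v_k \rightarrow \v_l$ or gets oriented that way by Meek rule~1 via $\{\v_j \rightarrow \v_k,\ \v_k \line \v_l\}$, after which the candidate sub-graph coincides with the corresponding sub-graph and Meek rule~4 applies identically. Your write-up is more thorough in checking that no spurious rule~1 orientations occur, but the core argument is the same.
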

\begin{proof}
As dashed lines in Table \ref{table:Meek} cannot be part of a v-structure, the edge $\v_k \line \v_l$ in corresponding sub-graph of Meek rule 4 can be either directed as $\v_k \rightarrow \v_l$ or undirected as $\v_k \line \v_l$ in its corresponding candidate sub-graph in Table \ref{table:Meek}. In the case of having  $\v_k \line \v_l$, this edge will be oriented as $\v_k \rightarrow \v_l$ by applying Meek rule 1 on sub-graph $\{\v_j \rightarrow \v_k,\v_k \line \v_l\}$. Thus, after applying Meek rule 1 on the corresponding candidate sub-graph for Meek rule 4, we get the corresponding sub-graph for Meek rule 4. Therefore, the proof is complete.
\end{proof}

Based on the Lemma \ref{lem:correspondingcandidateMeekrule4}, applying Meek rule 1 and 4 on corresponding candidate sub-graph for Meek rule 4 is as the same as the applying Meek rule 4 on corresponding sub-graph for Meek rule 4. As applying Meek rules are sound \citep{Meek95}, applying Meek rule 4 on corresponding candidate sub-graph will be sound.

\subsection*{B. Proof of Theorem \ref{thm:MPDAGcompleteness}}
Applying Meek rules 1, 2 and 3 on a MPDAG is complete \citep{Meek95}. Additionally, corresponding sub-graphs for Meek rule 1, 2 and 3 are as the same of their corresponding candidate sub-graphs. Therefore, applying Meek functions 1, 2 and 3 on a MPDAG is complete.

\subsection*{C. Proof of Theorem \ref{thm:GeneralMeekcompleteness}}
Applying Meek rules on a MPDAG, with some further oriented edges is complete \citep{Meek95}. Additionally, corresponding sub-graphs for Meek rule 1, 2 and 3 are as the same of their corresponding candidate sub-graphs. Moreover, based on the Lemma  \ref{lem:correspondingcandidateMeekrule4}, the result of applying Meek function $M_4$ on set of edges is equal or subset of the results of repeatedly applying Meek rule 1 and Meek rule 4 on that set. Therefore, applying Meek functions 1, 2, 3 and 4 on a graph is complete.

\subsection*{D. Proof of Property 1 in Lemma \ref{lem:MeekPropertiesObserved}}
According to \cite{Spirtes00}, having skeleton and the discovered v-structures, applying Meek function $M_{123}$ is enough for discovering further edges' orientations to obtain the essential graph. Moreover, with the infinite samples from observational data, there will be no undiscovered v-structure sub-graph in the identified graph. As there is no undiscovered v-structure sub-graph, no new v-structure will be discovered by identifying more edges' orientations rather than those that exist in essential graph. Hence, it suffices to apply Meek function $M_3$ once before applying other Meek functions.

\subsection*{E. Proof of Property 1 in Lemma \ref{lem:MeekPropertiesPCCG}}
As $G$ is a PCCG, no v-structure sub-graph exists in $G$. This is because all v-structures in causal DAG are discovered in the procedure of obtaining essential graph and removed. Considering the fact that there is a v-structure in candidate sub-graph for Meek rule 3, we cannot find any candidate sub-graph for Meek rule 3 in $\E$. Thus, applying Meek function $M_3$ on graph $G$ can not discover any further edges' orientations.

\subsection*{F. Proof of property 2 in Lemma \ref{lem:MeekPropertiesPCCG}}
Given a PCCG $G = (\V,\E)$ with set of nodes $\V$ and set of edges $\E$, we will show the following holds:
	\begin{align*}
	M_{i}(\E) =  \underset{\v_i \rightarrow \v_j \in \overrightarrow{\E}}{\bigsqcup} M_{i}( \{\v_i \rightarrow \v_j\} \Rcup \overline{\E} ),
	\end{align*}
where $M_i$ is one of the Meek functions $M_1$, $M_4$ or $M_{14}$. We can write $M_i$ as follows:
\begin{align*}
M_{i}(\E) = M_{i}\left ( \underset{\v_i \rightarrow \v_j \in \overrightarrow{\E}}{\bigsqcup} M_{i}( \{\v_i \rightarrow \v_j\} \Rcup \overline{\E} ) \right)=  M_{i}\left ( \underset{\v_i \rightarrow \v_j \in \overrightarrow{\E}}{\bigsqcup} \E_{ij} \right),
\end{align*}
where $\E_{ij} = M_i(\{\v_i \rightarrow \v_j\} \sqcup  \overline{\E})$. We will show that the right hand side of above equation is equal to $\E'= \bigsqcup_{\v_i \rightarrow \v_j \in \overrightarrow{\E}} \E_{ij}$. By contradiction, suppose that there are some edges which are not in $\E'$ but they will be oriented by applying Meek function $M_i$ on $\E'$. Therefore, there would be at least some edge like $\v_t \rightarrow \v_r$ which is in a candidate sub-graph, having directed edges in $\E'$. Moreover, it has been oriented by applying Meek function $M_i$ on this candidate sub-graph. Note that there should be such an edge like $\v_t \rightarrow \v_r$, otherwise $\E' = M_i(\E')$. Now, assume that the edge $\v_t \rightarrow \v_r$ has been oriented by applying Meek function $M_i$ on a candidate sub-graph for Meek rule 1, like $\{\v_m \rightarrow \v_t, \v_t \line \v_r\}$, such that we have $\v_m \rightarrow \v_t \in \E_{qw}$ for some $\v_q \rightarrow \v_w \in \left \{\v_i \rightarrow \v_j |\v_i \rightarrow \v_j \in \overrightarrow{\E} \right \}$. However, the edge $\v_t \line \v_r$ has been oriented by applying Meek function $M_1$ on $\v_q \rightarrow \v_w$, and we have $\v_t \rightarrow \v_r \in \E'$, which is a contradiction. Similar to proof of candidate sub-graph for Meek rule 1, we can prove this for candidate sub-graph for Meek rule 4, and the proof is complete.

\subsection*{G. Candidate sub-graphs for Meek rule 2 can not be genetated as a reult of applying Meek function $M_{14}$}
\begin{lemma} \label{lem:cand3M13}
	Let $G = (\V,\E)$ be a PCCG, $\V$ be the set of nodes and $\E$ be the set of edges. There exist some candidate sub-graphs for Meek rule 2 in $G$ and no new candidate sub-graph for this rule is generated after applying Meek functions $M_{14}$ on $\E$. 
\end{lemma}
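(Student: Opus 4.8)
\emph{Reading of the statement.} The first assertion is a routine remark: a candidate sub-graph for Meek rule~2 is a directed $2$-path $\v_i \rightarrow \v_k \rightarrow \v_j$ closed by an undirected edge $\v_i \line \v_j$, which involves no v-structure, so — in contrast with rule~3 (see the proof of Property~1 of Lemma~\ref{lem:MeekPropertiesPCCG}) — it may perfectly well be present in a PCCG, and nothing further needs to be argued. The substantive claim is that applying $M_{14}$ creates no \emph{new} such sub-graph, i.e.\ if $\v_i \rightarrow \v_k,\ \v_k \rightarrow \v_j \in \overrightarrow{M_{14}(\E)}$ and $\v_i \line \v_j \in M_{14}(\E)$, then both $\v_i \rightarrow \v_k$ and $\v_k \rightarrow \v_j$ already lay in $\overrightarrow{\E}$.

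\emph{Set-up.} I would fix a DAG $D$ witnessing that $G$ is a PCCG: $D$ lies in the MEC of $G$, is consistent with $\overrightarrow{\E}$, and has the UCCG skeleton of $G$ as its essential graph, so $D$ contains no v-structure. Two facts then drive the argument. \textbf{(A)} Every edge of $\overrightarrow{M_{14}(\E)}$ has the same orientation in $D$: by induction along the order in which $M_{14}$ orients edges, using soundness of rules~1 and~4 (Theorem~\ref{thm:Orientationsoundness}, together with Lemma~\ref{lem:correspondingcandidateMeekrule4} for rule~4) — orienting an edge against $D$ would force a directed cycle or a v-structure in $D$. \textbf{(B)} $M_{14}(\E)$ is a fixpoint of $M_{14}$, hence contains no candidate sub-graph for rule~1 or rule~4.

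\emph{Main argument.} Suppose for contradiction that $(\v_i,\v_k,\v_j)$ is a new Meek rule~2 candidate in $M_{14}(\E)$; then at least one path edge, say $\v_i \rightarrow \v_k$, was produced by $M_{14}$ and was not in $\overrightarrow{\E}$, and I would split on the rule that produced it. If rule~1 fired, via some $\v_m \rightarrow \v_i$ with $\v_m \notin neigh(\v_k)$: when $\v_m \notin neigh(\v_j)$, the pair $\{\v_m \rightarrow \v_i,\ \v_i \line \v_j\}$ is a rule~1 candidate in $M_{14}(\E)$, contradicting (B); when $\v_m \in neigh(\v_j)$, by (A) and acyclicity the path $\v_m \rightarrow \v_i \rightarrow \v_k \rightarrow \v_j$ and the edge $\v_m \rightarrow \v_j$ lie in $D$, so $\v_m$ and $\v_k$ are non-adjacent parents of $\v_j$ in $D$ — a v-structure, contradicting the choice of $D$. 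If rule~4 fired, its ``engine'' is a directed edge $P \rightarrow Q$ with $Q$ adjacent to $\v_k$, $P$ non-adjacent to $\v_k$, and $\v_i$ adjacent to both $P$ and $Q$; using Lemma~\ref{lem:correspondingcandidateMeekrule4} one may take $Q \rightarrow \v_k \in \overrightarrow{M_{14}(\E)}$, and I would again pit the non-adjacency of $P$ (or $Q$) to $\v_k$ (or to $\v_j$) against the acyclicity and v-structure-freeness of $D$ and against (B).

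\emph{The obstacle.} The rule~4 branch is the crux: the engine edge $P \rightarrow Q$, or the derived edge $Q \rightarrow \v_k$, may itself have been created by $M_{14}$ rather than lying in $\overrightarrow{\E}$, so one pass of the above does not close the loop. The natural remedy is to run the whole argument as an induction on the number of edges that $M_{14}$ orients (equivalently, take a counterexample with the fewest such orientations): the engine edge is then controlled by the induction hypothesis — either it belongs to $\overrightarrow{\E}$, or it is the top edge of a strictly smaller ``new rule~2'' configuration, which has already been excluded. Collecting the contradictions over all cases yields that $M_{14}$ produces no new Meek rule~2 candidate, which is the assertion of the lemma.
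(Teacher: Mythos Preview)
Your route is genuinely different from the paper's. The paper never invokes a witnessing DAG; it runs a purely local case analysis. Fixing the rule-2 candidate $\{\v_i\rightarrow\v_j,\ \v_j\rightarrow\v_k,\ \v_i\line\v_k\}$, it assumes one of the two directed edges was produced by a single application of rule~1 or rule~4, and then splits on how the extra vertex (for rule~1) or pair of vertices (for rule~4) introduced by that application is connected to the remaining triangle vertex. In every one of the ten resulting branches (Table~\ref{Nocyclelemma}) it shows either that the \emph{other} directed edge of the candidate is forced in the opposite direction, or that the undirected edge $\v_i\line\v_k$ is itself oriented by $M_{14}$. No induction is used: the one local rule application already supplies all the structure needed.

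Your DAG-and-fixpoint framework handles the rule-1 branch cleanly, but the rule-4 branch has a real gap and your patch does not close it. Two points. First, ``say $\v_i\rightarrow\v_k$'' is not innocent: the first and second edges of the $2$-path play different roles, and the paper treats them separately (Cases~1/3 versus~2/4); the analogous arguments go through in your framework, but they are not the same argument. Second, and more seriously, your induction hypothesis is idle: the engine edge $P\rightarrow Q$ being $M_{14}$-produced does \emph{not} place it at the head of any rule-2 configuration, so the alternative you offer (``top edge of a strictly smaller new rule-2 configuration'') is vacuous. What actually unlocks rule~4 --- and what the paper exploits --- is that whenever rule~4 fires from $P\rightarrow Q$ to orient $X\rightarrow Y$, rule~1 simultaneously orients the companion edge $Q\rightarrow Y$ (since $P\notin neigh(Y)$; cf.\ Algorithm~\ref{Alg:DPONE}). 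Bringing $Q\rightarrow Y$ into your picture and then splitting on the adjacencies of $P$ and $Q$ to the third triangle vertex lets you finish every sub-case either via your fixpoint fact~(B) (a surviving rule-1 or rule-4 candidate on $\v_i\line\v_j$) or via a v-structure/cycle in $D$, with no induction required.
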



\usetikzlibrary{decorations.markings,arrows.meta}
\tikzset
{midarrow/.style={decoration={markings,mark=at position 0.6 with
			{\arrow[xshift=2pt]{Latex[length=8pt,#1]}}},postaction={decorate}}
}
\tikzset
{midarrowplus/.style={decoration={markings,mark=at position 0.73 with
			{\arrow[xshift=2pt]{Latex[length=8pt,#1]}}},postaction={decorate}}
}
\tikzset
{midarrowminus/.style={decoration={markings,mark=at position 0.47 with
			{\arrow[xshift=2pt]{Latex[length=8pt,#1]}}},postaction={decorate}}
}	
\begin{figure}[ht]
	\begin{center}
		\begin{tikzpicture}[thick, scale=1]
		\node (b) at (1,-.3) {$\v_j$};
		\node (c) at (0,1.25) {$\v_k$};
		\node (a) at (2,1.25) {$\v_i$};
		\draw[fill=black] (1,0) circle (1.5pt);
		\draw[fill=black] (0,1) circle (1.5pt);
		\draw[fill=black] (2,1) circle (1.5pt);
		\draw[]				(2,1) -- (0,1);
		\draw[midarrow]	  	(2,1) -- (1,0);
		\draw[midarrow]   	(1,0) -- (0,1);
		\end{tikzpicture}
		\caption{Meek candidate sub-graph for Meek rule 2}	
		\label{Fig:MeekCandidateM2}
	\end{center}
\end{figure}
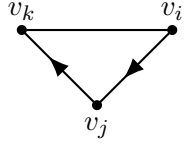
\begin{table}
	\caption{Different cases in Lemma \ref{lem:cand3M13} for orienting one edge in order to make a candidate sub-graph for Meek rule 2}
	\label{Nocyclelemma}	
	\begin{tabular}{|*{30}{c|}}  
		\hline
		\multirow{10}{*}{Case 1}
		&
		\multirow{10}{*}{\shortstack[l]{$\E' = \{\v_q \rightarrow \v_i\}\sqcup \overline{\E}$ \\ $\v_i \rightarrow \v_j \in M_1(\E')$}}
		& \multirow{5}{*}{$\v_q \in neigh(\v_k)$}  & \multirow{5}{*}{
			\begin{tikzpicture}[thick, scale=.7, baseline=0]
			\node (b) at (1,-.4) {$\v_j$};
			\node (c) at (-.4,1) {$\v_k$};
			\node (a) at (2.4,1) {$\v_i$};
			\node (a) at (1,2.4) {$\v_q$};
			\draw[fill=black] (1,0) circle (1.5pt);
			\draw[fill=black] (0,1) circle (1.5pt);
			\draw[fill=black] (2,1) circle (1.5pt);
			\draw[fill=black] (1,2) circle (1.5pt);
			\draw[midarrow=red]	(2,1) -- (0,1);
			\draw[midarrowplus]	  	(2,1) -- (1,0);
			\draw[midarrowminus=blue]	  	(2,1) -- (1,0);
			\draw[midarrowminus]   	(1,0) -- (0,1);
			\draw[midarrowminus=blue]   	(0,1) -- (1,0);
			\draw[midarrow=green!400]   	(1,2) -- (2,1);
			\draw[]			   	(1,2) -- (0,1);
			\end{tikzpicture}
		} & \multirow{5}{*}{ contradiction on $\v_j \rightarrow \v_k$}\\
		& & & & \multirow{5}{*}{ $\v_k \rightarrow \v_j \in M_4(\E')$} \\
		& & & &    \\
		& & & &  \\
		& & & &  \\\cline{3-5}
		& & \multirow{5}{*}{$\v_q \notin neigh(\v_k)$}  &
		\multirow{5}{*}{\begin{tikzpicture}[thick, scale=.7, baseline=0]
			\node (b) at (1,-.4) {$\v_j$};
			\node (c) at (-.4,1) {$\v_k$};
			\node (a) at (2.4,1) {$\v_i$};
			\node (a) at (1,2.4) {$\v_q$};
			\draw[fill=black] (1,0) circle (1.5pt);
			\draw[fill=black] (0,1) circle (1.5pt);
			\draw[fill=black] (2,1) circle (1.5pt);
			\draw[fill=black] (1,2) circle (1.5pt);
			\draw[midarrowminus=blue]	(2,1) -- (0,1);
			\draw[midarrowplus=red]	(2,1) -- (0,1);
			\draw[midarrowplus]	  	(2,1) -- (1,0);
			\draw[midarrowminus=blue]	  	(2,1) -- (1,0);
			\draw[midarrowminus]   	(1,0) -- (0,1);
			\draw[midarrow=green!400]   	(1,2) -- (2,1);
			\end{tikzpicture}}
		&
		\multirow{5}{*}{$\v_i \rightarrow \v_k \in M_1(\E')$}\\
		& & & &  \\
		& & & &  \\
		& & & &  \\
		& & & &  \\ \hline
		\multirow{8}{*}{Case 2}
		&
		\multirow{8}{*}{\shortstack[l]{$\E' = \{\v_q \rightarrow \v_j\}\sqcup \overline{\E}$ \\ $\v_j \rightarrow \v_k \in M_1(\E')$}}
		& \multirow{4}{*}{$\v_q \in neigh(\v_i)$}  & 
		\multirow{4}{*}{\begin{tikzpicture}[thick, scale=.7, baseline=0]
			\node (b) at (1,-.4) {$\v_j$};
			\node (c) at (-.4,1) {$\v_k$};
			\node (a) at (2.4,1) {$\v_i$};
			\node (a) at (3,-.4) {$\v_q$};
			\draw[fill=black] (1,0) circle (1.5pt);
			\draw[fill=black] (0,1) circle (1.5pt);
			\draw[fill=black] (2,1) circle (1.5pt);
			\draw[fill=black] (3,0) circle (1.5pt);
			\draw[midarrowplus=red]	(2,1) -- (0,1);
			\draw[midarrowminus=blue]	(2,1) -- (0,1);
			\draw[midarrowplus]	  	(2,1) -- (1,0);
			\draw[midarrowminus=green!400]	  	(3,0) -- (1,0);
			\draw[]	 		 	(3,0) -- (2,1);
			\draw[midarrowplus]   	(1,0) -- (0,1);
			\draw[midarrowminus=blue]   	(1,0) -- (0,1);
			\end{tikzpicture}}
		& \multirow{4}{*}{$\v_i \rightarrow \v_k \in M_4(\E')$}\\
		& & & &  \\
		& & & &  \\
		& & & &  \\\cline{3-5}
		& & \multirow{4}{*}{$\v_q \notin neigh(\v_i)$}  &
		\multirow{4}{*}{\begin{tikzpicture}[thick, scale=.7, baseline=0]
			\node (b) at (1,-.4) {$\v_j$};
			\node (c) at (-.4,1) {$\v_k$};
			\node (a) at (2.4,1) {$\v_i$};
			\node (a) at (3,-.4) {$\v_q$};
			\draw[fill=black] (1,0) circle (1.5pt);
			\draw[fill=black] (0,1) circle (1.5pt);
			\draw[fill=black] (2,1) circle (1.5pt);
			\draw[fill=black] (3,0) circle (1.5pt);
			\draw[midarrow=red]	(2,1) -- (0,1);
			\draw[midarrowminus]	  	(2,1) -- (1,0);
			\draw[midarrowminus=blue]	  	(1,0) -- (2,1);
			\draw[midarrow=green!400]	  	(3,0) -- (1,0);
			\draw[midarrowplus]   	(1,0) -- (0,1);
			\draw[midarrowminus=blue]   	(1,0) -- (0,1);
			\end{tikzpicture}}
		&\multirow{4}{*}{ contradiction on $\v_i \rightarrow \v_j$}\\
		& & & & \multirow{4}{*}{ $\v_j \rightarrow \v_i \in M_1(\E')$} \\
		& & & &  \\
		& & & &  \\ \hline
		\multirow{12}{*}{Case 3}
		&
		\multirow{12}{*}{\shortstack[l]{$\E' = \{\v_w \rightarrow \v_q\}\sqcup \overline{\E}$ \\ $\v_i \rightarrow \v_j \in M_4(\E')$ \\ $\v_i \rightarrow \v_j \notin M_1(\E')$}}
		& \multirow{4}{*}{\shortstack[l]{$\v_q \notin neigh(\v_k)$}}  & 
		\multirow{4}{*}{\begin{tikzpicture}[thick, scale=.7, baseline=0]
			\node (b) at (1,-.4) {$\v_j$};
			\node (c) at (0,1.4) {$\v_k$};
			\node (a) at (2,1.4) {$\v_i$};
			\node (a) at (2,-.4) {$\v_q$};
			\node (a) at (3,-.4) {$\v_w$};
			\draw[fill=black] (1,0) circle (1.5pt);
			\draw[fill=black] (0,1) circle (1.5pt);
			\draw[fill=black] (2,1) circle (1.5pt);
			\draw[fill=black] (2,0) circle (1.5pt);
			\draw[fill=black] (3,0) circle (1.5pt);
			\draw[midarrowplus=red]	(2,1) -- (0,1);
			\draw[midarrowminus=blue]	(2,1) -- (0,1);
			\draw[midarrowminus]	  	(2,1) -- (1,0);
			\draw[midarrowplus=blue]	  	(2,1) -- (1,0);
			\draw[midarrow=green!400]	  	(3,0) -- (2,0);
			\draw[]	 		 	(3,0) -- (2,1);
			\draw[midarrow]   	(1,0) -- (0,1);
			\draw[]   	(1,0) -- (0,1);
			\draw[]   	(2,0) -- (2,1);
			\draw[midarrow=blue]   	(2,0) -- (1,0);
			\end{tikzpicture}}
		& \multirow{4}{*}{$\v_i \rightarrow \v_k \in M_4(\E')$}\\
		& & & &  \\
		& & & &  \\
		& & & &  \\\cline{3-5}
		& & \multirow{4}{*}{\shortstack[l]{$\v_q \in neigh(\v_k)$ \\ $\v_w \notin neigh(\v_k)$}}  &
		\multirow{4}{*}{\begin{tikzpicture}[thick, scale=.7, baseline=0]
			\node (b) at (1,-.4) {$\v_j$};
			\node (c) at (0,1.4) {$\v_k$};
			\node (a) at (2,1.4) {$\v_i$};
			\node (a) at (2,-.4) {$\v_q$};
			\node (a) at (3,-.4) {$\v_w$};
			\draw[fill=black] (1,0) circle (1.5pt);
			\draw[fill=black] (0,1) circle (1.5pt);
			\draw[fill=black] (2,1) circle (1.5pt);
			\draw[fill=black] (2,0) circle (1.5pt);
			\draw[fill=black] (3,0) circle (1.5pt);
			\draw[midarrow=blue]   				(2,0) -- (0,1);
			\draw[midarrowplus=red]	(2,1) -- (0,1);
			\draw[midarrowminus=blue]	(2,1) -- (0,1);
			\draw[midarrowminus]	  	(2,1) -- (1,0);
			\draw[midarrowplus=blue]	  	(2,1) -- (1,0);
			\draw[midarrow=green!400]	  	(3,0) -- (2,0);
			\draw[]	 		 	(3,0) -- (2,1);
			\draw[midarrowplus]   	(1,0) -- (0,1);
			\draw[]   	(1,0) -- (0,1);
			\draw[]   	(2,0) -- (2,1);
			\draw[midarrow=blue]   	(2,0) -- (1,0);
			\end{tikzpicture}}
		&
		\multirow{4}{*}{$\v_i \rightarrow \v_k \in M_4(\E')$}\\
		& & & &  \\
		& & & &  \\
		& & & &  \\\cline{3-5}
		& & \multirow{4}{*}{\shortstack[l]{$\v_q \in neigh(\v_k)$ \\ $\v_w \in neigh(\v_k)$}}  &
		\multirow{4}{*}{\begin{tikzpicture}[thick, scale=.7, baseline=0]
			\node (b) at (1,-.4) {$\v_j$};
			\node (c) at (0,1.4) {$\v_k$};
			\node (a) at (2,1.4) {$\v_i$};
			\node (a) at (2,-.4) {$\v_q$};
			\node (a) at (3,-.4) {$\v_w$};
			\draw[fill=black] (1,0) circle (1.5pt);
			\draw[fill=black] (0,1) circle (1.5pt);
			\draw[fill=black] (2,1) circle (1.5pt);
			\draw[fill=black] (2,0) circle (1.5pt);
			\draw[fill=black] (3,0) circle (1.5pt);
			\draw[]   				(2,0) -- (0,1);
			\draw[]   				(3,0) -- (0,1);
			\draw[midarrowplus=red]	(2,1) -- (0,1);
			\draw[midarrowminus]	  	(2,1) -- (1,0);
			\draw[midarrowplus=blue]	  	(2,1) -- (1,0);
			\draw[midarrowplus=green!400]	  	(3,0) -- (2,0);
			\draw[]	 		 	(3,0) -- (2,1);
			\draw[midarrowminus]   	(1,0) -- (0,1);
			\draw[midarrowminus=blue]   	(0,1) -- (1,0);
			\draw[]   	(2,0) -- (2,1);
			\draw[midarrowplus=blue]   	(2,0) -- (1,0);
			\end{tikzpicture}}
		&\multirow{4}{*}{ contradiction on $\v_j \rightarrow \v_k$}\\
		& & & & \multirow{4}{*}{ $\v_k \rightarrow \v_j \in M_4(\E')$} \\
		& & & &  \\
		& & & &  \\ \hline
		\multirow{12}{*}{Case 4}
		&
		\multirow{12}{*}{\shortstack[l]{$\E' = \{\v_w \rightarrow \v_q\}\sqcup \overline{\E}$ \\ $\v_j \rightarrow \v_k \in M_4(\E')$ \\ $\v_j \rightarrow \v_k \notin M_1(\E')$}}
		& \multirow{4}{*}{\shortstack[l]{$\v_q \notin neigh(\v_i)$}}  & 
		\multirow{4}{*}{\begin{tikzpicture}[thick, scale=.7, baseline=0]
			\node (b) at (1,-.4) {$\v_j$};
			\node (c) at (0,1.4) {$\v_k$};
			\node (a) at (2,1.4) {$\v_i$};
			\node (a) at (-1.4,.5) {$\v_q$};
			\node (a) at (-1.4,-.5) {$\v_w$};
			\draw[fill=black] (1,0) circle (1.5pt);
			\draw[fill=black] (0,1) circle (1.5pt);
			\draw[fill=black] (2,1) circle (1.5pt);
			\draw[fill=black] (-1,-.5) circle (1.5pt);
			\draw[fill=black] (-1,.5) circle (1.5pt);
			\draw[midarrowplus=red]	(2,1) -- (0,1);
			\draw[midarrowplus=blue]	(0,1) -- (2,1);
			\draw[midarrowplus]	  	(2,1) -- (1,0);
			\draw[midarrowplus=blue]	(1,0) -- (2,1);
			\draw[]	  	(-1,.5) -- (1,0);
			\draw[midarrow=blue]	 		 	(-1,.5) -- (0,1);
			\draw[midarrowplus]   	(1,0) -- (0,1);
			\draw[midarrowminus=blue]   	(1,0) -- (0,1);
			\draw[midarrow=green!400]   	(-1,-.5) -- (-1,.5);
			\draw[]   	(-1,-.5) -- (1,0);
			\end{tikzpicture}}
		&\multirow{4}{*}{ contradiction on $\v_i \rightarrow \v_j$}\\
		& & & & \multirow{4}{*}{ $\v_j \rightarrow \v_i \in M_4(\E')$} \\
		& & & &  \\
		& & & &  \\\cline{3-5}
		& & \multirow{4}{*}{\shortstack[l]{$\v_q \in neigh(\v_i)$ \\ $\v_w \notin neigh(\v_i)$}}  & 
		\multirow{4}{*}{\begin{tikzpicture}[thick, scale=.7, baseline=0]
			\node (b) at (1,-.4) {$\v_j$};
			\node (c) at (0,1.4) {$\v_k$};
			\node (a) at (2,1.4) {$\v_i$};
			\node (a) at (-1.4,.5) {$\v_q$};
			\node (a) at (-1.4,-.5) {$\v_w$};
			\draw[fill=black] (1,0) circle (1.5pt);
			\draw[fill=black] (0,1) circle (1.5pt);
			\draw[fill=black] (2,1) circle (1.5pt);
			\draw[fill=black] (-1,-.5) circle (1.5pt);
			\draw[fill=black] (-1,.5) circle (1.5pt);
			\draw[midarrowplus=red]	(2,1) -- (0,1);
			\draw[midarrowplus]	  	(2,1) -- (1,0);
			\draw[midarrowplus=blue]	(1,0) -- (2,1);
			\draw[]	  	(-1,.5) -- (1,0);
			\draw[midarrowplus=blue]	  	(-1,.5) -- (2,1);
			\draw[midarrow=blue]	 		 	(-1,.5) -- (0,1);
			\draw[midarrowplus]   	(1,0) -- (0,1);
			\draw[midarrowminus=blue]   	(1,0) -- (0,1);
			\draw[midarrow=green!400]   	(-1,-.5) -- (-1,.5);
			\draw[]   	(-1,-.5) -- (1,0);
			\end{tikzpicture}}
		&\multirow{4}{*}{ contradiction on $\v_i \rightarrow \v_j$}\\
		& & & & \multirow{4}{*}{ $\v_j \rightarrow \v_i \in M_4(\E')$} \\
		& & & &  \\
		& & & &  \\\cline{3-5}
		& & \multirow{4}{*}{\shortstack[l]{$\v_q \in neigh(\v_i)$ \\ $\v_w \in neigh(\v_i)$}}  & 
		\multirow{4}{*}{\begin{tikzpicture}[thick, scale=.7, baseline=0]
			\node (b) at (1,-.4) {$\v_j$};
			\node (c) at (0,1.4) {$\v_k$};
			\node (a) at (2,1.4) {$\v_i$};
			\node (a) at (-1.4,.5) {$\v_q$};
			\node (a) at (-1.4,-.5) {$\v_w$};
			\draw[fill=black] (1,0) circle (1.5pt);
			\draw[fill=black] (0,1) circle (1.5pt);
			\draw[fill=black] (2,1) circle (1.5pt);
			\draw[fill=black] (-1,-.5) circle (1.5pt);
			\draw[fill=black] (-1,.5) circle (1.5pt);
			\draw[]	  	(-1,.5) -- (1,0);
			\draw[]	  	(-1,.5) -- (2,1);
			\draw[]   	(-1,-.5) -- (2,1);
			\draw[midarrowplus=red]	(2,1) -- (0,1);
			\draw[midarrowminus=blue]	(2,1) -- (0,1);
			\draw[midarrowplus]	  	(2,1) -- (1,0);
			\draw[]						(1,0) -- (2,1);
			\draw[midarrow=blue]	 		 	(-1,.5) -- (0,1);
			\draw[midarrowplus]   	(1,0) -- (0,1);
			\draw[midarrowminus=blue]   	(1,0) -- (0,1);
			\draw[midarrow=green!400]   	(-1,-.5) -- (-1,.5);
			\draw[]   	(-1,-.5) -- (1,0);
			\end{tikzpicture}}
		&
		\multirow{4}{*}{$\v_i \rightarrow \v_k \in M_4(\E')$}\\
		& & & &  \\
		& & & &  \\
		& & & &  \\ \hline
	\end{tabular}
\end{table}

We prove this lemma by contradiction. A candidate sub-graph for Meek rule 2 is depicted in Figure \ref{Fig:MeekCandidateM2}. We assume that one of the directed edges in the candidate sub-graph for Meek rule 2 is already oriented and the other one will be oriented by one of the Meek functions $M_1$ or $M_4$. All possible scenarios have been depicted in Table \ref{Nocyclelemma}. Note that in PCCGs, sub-graphs are chordal and there exist no v-structure before or after applying any Meek functions. We will see that two types of contradictions will be encountered in all of these cases. In some cases, there will be a conflict in direction of recently oriented edges with the already directed edges in the candidate sub-graph. In some other cases, we see that an undirected edge in candidate sub-graph will be oriented and there is no need to apply Meek function $M_2$. Hence, for both of these cases, there will be no candidate sub-graph for Meek rule 2. In the following, we will study each of these cases in Table \ref{Nocyclelemma}.

In depicted graphs in Table \ref{Nocyclelemma}, we use different colors to distinguish different types of oriented edges. In all of these graphs, black oriented edges are the edges in Meek function $M_2$ candidate sub-graph. The red oriented edge is the one that will be oriented as a result of applying Meek function $M_2$. The green oriented edge is supposed to be the edge that applying Meek functions $M_1$ or $M_4$ on that, results in orienting one further edge in order to construct Meek candidate sub-graph for function $M_2$. The blue oriented edges are the result of applying Meek function $M_1$, $M_4$ or $M_{14}$ on green oriented edge in each graph. Note that the blue edges are a part of edges that can be oriented based on green edges and knowing their orientations is enough to prove the properties in the lemma.

In the first case, we investigate the scenario that the edge $\v_i \rightarrow \v_j$ has been oriented as a result of applying Meek function $M_1$ on an existing edge $\v_q \rightarrow \v_i$. As $\v_i \rightarrow \v_j \in M_1(\{\v_q \rightarrow \v_i\} \sqcup \overline{\E})$, we know $\v_q \notin neigh(\v_j)$. Thus, two different cases $\v_q \notin neigh(\v_k)$ and $\v_q \in neigh(\v_k)$ can be considered. In the case of $\v_q \in neigh(\v_k)$, the edge $\v_k \line \v_j$ will be oriented in opposite direction of $\v_j \rightarrow \v_k$, which is a contradiction. In the second case, $\v_q \notin neigh(\v_k)$, the edge $\v_i \rightarrow \v_k$ will be oriented as a result of applying Meek function $M_{1}$. Hence, in both cases, no candidate sub-graph for Meek rule 2 will be created.

In the second case, we consider the scenario that the edge $\v_j \rightarrow \v_k$ has been oriented as a result of applying Meek function $M_1$ on an existing edge $\v_q \rightarrow \v_j$. As $\v_j \rightarrow \v_k \in M_1(\{\v_q \rightarrow \v_j\} \sqcup \overline{\E})$, we know $\v_q \notin neigh(\v_k)$. Two different cases can be considered: $\v_q \notin neigh(\v_i)$ and $\v_q \in neigh(\v_i)$. In the case of $\v_q \in neigh(\v_i)$, the edge $\v_i \rightarrow \v_k$ will be oriented as a result of applying Meek function $M_{4}(\{\v_q \rightarrow \v_j\} \sqcup \overline{\E})$. In the other case, $\v_q \notin neigh(\v_i)$, the edge $\v_i \line \v_j$ will be oriented in opposite direction of $\v_i \rightarrow \v_j$, which is a contradiction. Therefore, no candidate sub-graph for Meek rule 2 will be created.

In the third case, we assume that the edge $\v_i \rightarrow \v_j$ has been oriented as a result of  applying Meek function $M_4$ on a set $\{\v_w \rightarrow \v_q\} \sqcup \overline{\E}$, and we also know that $\v_i \rightarrow \v_j \notin M_1(\{\v_w \rightarrow \v_q\} \sqcup \overline{\E})$ (see Table \ref{Nocyclelemma}). As $\v_i \rightarrow \v_j \in M_4(\{\v_w \rightarrow \v_q\} \sqcup \overline{\E})$, we know $\v_w \notin neigh(\v_j)$. We partition the possible skeletons in three cases: (A) $\v_q \notin neigh(\v_k)$, (B) $\v_q \in neigh(\v_k)$, and $\v_w \notin neigh(\v_k)$ (C) $\v_q \in neigh(\v_k)$ and $\v_w \in neigh(\v_k)$. In case A, we know $\v_q \rightarrow \v_j \in M_4(\{\v_w \rightarrow \v_q\} \sqcup \overline{\E})$, and therefore, we have $\v_i \rightarrow \v_k \in M_4(\{\v_q \rightarrow \v_j\} \sqcup \overline{\E})$. In case B, we have $\v_i \rightarrow \v_k \in M_4(\{\v_w \rightarrow \v_q\} \sqcup \overline{\E})$. In case C, the edge $\v_j \line \v_k$ will be oriented in opposite direction of $\v_j \rightarrow \v_k$, which is a contradiction.

In the fourth case, we assume that the edge $\v_j \rightarrow \v_k$ has been oriented as a result of  applying $M_4$ on a set $\{\v_w \rightarrow \v_q\} \sqcup \overline{\E}$, and also we know $\v_j \rightarrow \v_k \notin M_1(\{\v_w \rightarrow \v_q\} \sqcup \overline{\E})$. As $\v_j \rightarrow \v_k \in M_4(\{\v_w \rightarrow \v_q\} \sqcup \overline{\E})$, we know $\v_w \notin neigh(\v_k)$. We partition the possible skeletons for the mentioned scenario in three cases: (A) $\v_q \notin neigh(\v_i)$, (B) $\v_q \in neigh(\v_i)$, and $\v_w \notin neigh(\v_i)$ (C) $\v_q \in neigh(\v_i)$ and $\v_w \in neigh(\v_i)$. In case A, we know $\v_q \rightarrow \v_k \in M_4(\{\v_w \rightarrow \v_q\} \sqcup \overline{\E})$, and therefore, we have $\v_j \rightarrow \v_i \in M_4(\{\v_q \rightarrow \v_k\} \sqcup \overline{\E})$, which is a contradiction.  In case B, we have $\v_j \rightarrow \v_i \in M_4(\{\v_w \rightarrow \v_q\} \sqcup \overline{\E})$, which is a contradiction. In the case C, we have $\v_i \rightarrow \v_k \in M_4(\{\v_w \rightarrow \v_q\} \sqcup \overline{\E})$. Therefore, there is no candidate sub-graph for Meek rule 2 in all of possible scenarios. Thus, the proof is complete.

\subsection*{H. Proof of Property 3 in Lemma \ref{lem:MeekPropertiesPCCG}}
We write Meek function $M_{124}(\E)$ as follows:
\begin{align*}
M_{124}(\E) = M_{124}( M_{14}( M_{2}(\E))) =  M_{124}(\E^\prime) = \E^\prime \sqcup \E_N,
\end{align*}
where $\E^\prime = M_{14}( M_{2}(\E))$. Note that $\overrightarrow{\E}\subseteq M_{14}( M_{2}(\E))$. It suffices to show $\E_N$ is an empty set. We prove this by contradiction. Suppose $\SET{S} \subseteq \E^\prime$ is a candidate sub-graph for Meek rule 1, 2 or 4 and $M_i$ is one of the Meek functions $M_1$, $M_2$ or $M_4$. Assume that there exists an edge $e_n \in \E_N$ such that $e_n \notin \E^\prime$. We have:
	\begin{align*}
	M_{124}(\SET{S} \sqcup \E^\prime\backslash\SET{S}) = M_{124}(\SET{S} \sqcup (\E^\prime\backslash\SET{S}) \sqcup \{e_n\}),
	\end{align*}
where edge $e_n$ has been oriented in the result of applying one of the Meek functions $M_1$, $M_2$ or $M_4$ on candidate sub-graph $\SET{S}$. In the case of $e_n \in M_1(\E^\prime)$ and $e_n \in M_4(\E^\prime)$, we will have $e_n \in \E^\prime$ which is a contradiction. In the case of $e_n \in M_2(\E^\prime)$, we know that $\SET{S} \nsubseteq M_2(\E)$. This is because it is not possible to have a candidate sub-graph of Meek rule 2 on the result of Meek function $M_2$. Furthermore, we know from the Lemma \ref{lem:cand3M13} that candidate sub-graph for Meek rule 2 cannot be generated as a result of applying Meek function $M_{14}$. Hence, we will have $\SET{S} \nsubseteq \E^\prime$ which is a contradiction.

\subsection*{I. Proof of Property 4 in Lemma \ref{lem:MeekPropertiesPCCG}}
We want to show that for any subset $\SET{S} = \{ \v_i \rightarrow \v_k,\v_k \rightarrow \v_j,\v_i \line \v_j\} \subseteq \SET{\E}$, we have:
\begin{align*}
M_{14}(M_2(\SET{S}) \sqcup \E\backslash \SET{S})
= M_{14}( \E ) \Rcup M_{2}(\SET{S}) .
\end{align*}
Equivalently, we want to show that:
	\begin{align}
	M_{14} \left ( \left \{ 
	\begin{array}{c} 
	\v_i \rightarrow \v_k \\ \v_k \rightarrow \v_j \\ \v_i \rightarrow \v_j 
	\end{array} 
	\right \} \sqcup \E  \right)
	&= M_{14} \left ( \left \{ \begin{array}{c} \v_i \rightarrow \v_k \\ \v_k \rightarrow \v_j \\ \v_i \line \v_j  \end{array}  \right \}  \sqcup \E   \right  ) \sqcup  \left \{ \begin{array}{c} \v_i \rightarrow \v_k \\ \v_k \rightarrow \v_j \\ \v_i \rightarrow \v_j  \end{array}  \right \}  \\
	&= M_{14} \left ( \left \{ \begin{array}{c} \v_i \rightarrow \v_k \\ \v_k \rightarrow \v_j \\ \v_i \line \v_j  \end{array}  \right \}  \sqcup \E   \right  ) \sqcup  \left \{ \v_i \rightarrow \v_j \right \}.
	\end{align}
The last equation holds because the edges in $\overrightarrow{\E}$ exist in the set of edges as a result of applying any Meek function. Furthermore, according to Property 2 in Lemma \ref{lem:MeekPropertiesPCCG}, for any $e_1 \in \overrightarrow{\E}$, we have:
\begin{align*}
M_{14}(\E) &=  \underset{e \in \overrightarrow{\E}}{\bigsqcup} M_{14}(\{e\} \Rcup \overline{\E} ) \\
&=  \left (  \underset{e\in \overrightarrow{\E}\backslash\{e_1\}}{\bigsqcup} M_{14}( \{e\} \Rcup \overline{\E} ) \right ) \sqcup M_{14}( \{e_1\} \Rcup \overline{\E} ) \\
&=M_{14}(\overline{\E} \sqcup \overrightarrow{\E}\backslash \{e_1\})\sqcup M_{14}( \{e_1\} \Rcup \overline{\E} ),
\end{align*}
where the third equality is due to Property 2 in Lemma \ref{lem:MeekPropertiesPCCG}.
Based on above equation, we can decompose the left side of (2) as follows:
	\begin{align*}
	M_{14} \left ( \left \{ 
	\begin{array}{c} 
	\v_i \rightarrow \v_k \\ \v_k \rightarrow \v_j \\ \v_i \rightarrow \v_j 
	\end{array} 
	\right \} \sqcup \E  \right)
	= 
	M_{14} \left ( \left \{ 
	\begin{array}{c} 
	\v_i \rightarrow \v_k \\ \v_k \rightarrow \v_j \\ \v_i \line \v_j 
	\end{array} 
	\right \} \sqcup \E  \right)
	\sqcup
	M_{14} \left ( \left \{ 
	\begin{array}{c} 
	\v_i \line \v_k \\ \v_k \line \v_j \\ \v_i \rightarrow \v_j 
	\end{array} 
	\right \} \sqcup \overline{\E}  \right).
	\end{align*}
It suffices to show that:
\begin{center}
	\begin{align*}
	M_{14} \left ( \left \{ 
	\begin{array}{c} 
	\v_i \line \v_k \\ \v_k \line \v_j \\ \v_i \rightarrow \v_j 
	\end{array} 
	\right \} \sqcup  \overline{\E}  \right) \backslash \{\v_i \rightarrow \v_j \}
	\subseteq
	M_{14} \left ( \left \{ 
	\begin{array}{c} 
	\v_i \rightarrow \v_k \\ \v_k \rightarrow \v_j \\ \v_i \line \v_j 
	\end{array} 
	\right \} \sqcup \E  \right)
	\end{align*}
\end{center}
In order to show this relation, we depict all possible skeletons, containing sub-graph $\{\v_i \rightarrow \v_k,\v_k \rightarrow \v_j,\v_i \rightarrow \v_j\}$, for further edges' orientations in Table \ref{Table:Property5}. Note that in PCCGs, sub-graphs are chordal and there exist no v-structure before or after applying any Meek functions. We intend to determine all additional edges that can be oriented as a result of applying Meek function $M_{14}$ on $\overline{\E} \sqcup \{\v_i \rightarrow \v_j\}$. Hence, if this function can orient any further edges, then there should be a node $\v_q \in neigh(\v_j)$, and also, an edge $\v_j \line \v_q$ in all possible skeletons. This edge will be oriented as a result of applying Meek function $M_{14}$ on $\overline{\E} \sqcup \{\v_i \rightarrow \v_j\}$. In the following, we will investigate each case in Table \ref{Table:Property5}, and we will show that the above equation holds.
\usetikzlibrary{decorations.markings,arrows.meta}
\tikzset
{midarrow/.style={decoration={markings,mark=at position 0.6 with
			{\arrow[xshift=2pt]{Latex[length=8pt,#1]}}},postaction={decorate}}
}
\tikzset
{midarrowplus/.style={decoration={markings,mark=at position 0.73 with
			{\arrow[xshift=2pt]{Latex[length=8pt,#1]}}},postaction={decorate}}
}
\tikzset
{midarrowminus/.style={decoration={markings,mark=at position 0.47 with
			{\arrow[xshift=2pt]{Latex[length=8pt,#1]}}},postaction={decorate}}
}	

\begin{table}[ht]
	\begin{center}
		\caption{Applying Meek function $M_{14}$ on candidate sub-graph for Meek rule 2}
		\label{Table:Property5}
		\begin{tabular}{|c|c|c|c|}
			\hline	
			\multicolumn{1}{|c|}{Case 1} & \multicolumn{1}{|c|}{Case 2} & \multicolumn{1}{|c|}{Case 3} &\multicolumn{1}{|c|}{Case 4} \\ 
			\hline
			\multicolumn{2}{|c|}{$\v_q \notin neigh(\v_i)$} & \multicolumn{2}{c|}{$\v_q \notin neigh(\v_k)$}\\ 
			\hline
			\multicolumn{1}{|c|}{$\v_q \notin neigh(\v_k)$} & \multicolumn{1}{c|}{$\v_q \in neigh(\v_k)$} & 
			\multicolumn{1}{|c|}{$\v_w \notin neigh(\v_k)$} & \multicolumn{1}{c|}{$\v_w \in neigh(\v_k)$}\\ 
			\hline
			
			\begin{tikzpicture}[thick, scale=.6]
			\node (a) at (1,2.35) {$\v_q$};
			\node (b) at (1,-.45) {$\v_k$};
			\node (c) at (-.35,.75) {$\v_j$};
			\node (d) at (2.35,.75) {$\v_i$};
			\draw[fill=black] (1,2) circle (1.5pt);
			\draw[fill=black] (1,0) circle (1.5pt);
			\draw[fill=black] (0,1) circle (1.5pt);
			\draw[fill=black] (2,1) circle (1.5pt);
			\draw[midarrowminus=red,midarrowplus]           (0,1) -- (1,2);
			\draw[midarrow=red]	(2,1) -- (0,1);
			\draw[midarrow]	  	(2,1) -- (1,0);
			\draw[midarrow]   	(1,0) -- (0,1);
			\end{tikzpicture}
			
			&
			\begin{tikzpicture}[thick, scale=.6]
			\node (a) at (-1,-.45) {$\v_q$};
			\node (b) at (1,-.45) {$\v_k$};
			\node (c) at (0,1.35) {$\v_j$};
			\node (d) at (2,1.35) {$\v_i$};
			\draw[fill=black] (-1,0) circle (1.5pt);
			\draw[fill=black] (1,0) circle (1.5pt);
			\draw[fill=black] (0,1) circle (1.5pt);
			\draw[fill=black] (2,1) circle (1.5pt);
			\draw[midarrowminus=red,midarrowplus]           (1,0) -- (-1,0);
			\draw[midarrowminus=red,midarrowplus]		      (0,1) -- (-1,0);
			\draw[midarrow=red]	(2,1) -- (0,1);
			\draw[midarrow]	  	(2,1) -- (1,0);
			\draw[midarrow]   	(1,0) -- (0,1);
			\end{tikzpicture}
			
			&
			\begin{tikzpicture}[thick, scale=.6]
			\node (a) at (1,2.35) {$\v_w$};
			\node (b) at (1,-.45) {$\v_k$};
			\node (c) at (-.35,.75) {$\v_j$};
			\node (d) at (2.35,.75) {$\v_i$};
			\node (d) at (-1,2.35) {$\v_q$};	
			\draw[fill=black] (1,2) circle (1.5pt);
			\draw[fill=black] (1,0) circle (1.5pt);
			\draw[fill=black] (0,1) circle (1.5pt);
			\draw[fill=black] (2,1) circle (1.5pt);
			\draw[fill=black] (-1,2) circle (1.5pt);
			\draw[midarrowminus=red,midarrowplus]           (1,2) -- (-1,2);
			\draw[midarrowminus=red,midarrowplus]           (0,1) -- (-1,2);
			\draw[midarrow]           (2,1) -- (1,2);
			\draw[midarrow]		      (0,1) -- (1,2);
			\draw[midarrow=red]	(2,1) -- (0,1);
			\draw[midarrow]	  	(2,1) -- (1,0);
			\draw[midarrow]   	(1,0) -- (0,1);
			\end{tikzpicture}
			
			&
			\begin{tikzpicture}[thick, scale=.6]
			\node (a) at (1,2.35) {$\v_w$};
			\node (b) at (1,-.45) {$\v_k$};
			\node (c) at (-.35,.75) {$\v_j$};
			\node (d) at (2.35,.75) {$\v_i$};
			\node (d) at (-1,2.35) {$\v_q$};		
			\draw[fill=black] (1,2) circle (1.5pt);
			\draw[fill=black] (1,0) circle (1.5pt);
			\draw[fill=black] (0,1) circle (1.5pt);
			\draw[fill=black] (2,1) circle (1.5pt);
			\draw[fill=black] (-1,2) circle (1.5pt);
			\draw[midarrowminus=red,midarrowplus]           (1,2) -- (-1,2);
			\draw[midarrowminus=red,midarrowplus]           (0,1) -- (-1,2);
			\draw[]           (2,1) -- (1,2);
			\draw[](0,1) -- (1,2);
			\draw[midarrowplus=red]  (2,1) -- (0,1);
			\draw[midarrow]	  	(2,1) -- (1,0);
			\draw[midarrow]   	(1,0) -- (0,1);
			\draw[] 		  	(1,2) -- (1,0);
			\end{tikzpicture}
			\\
			\hline	
			\multicolumn{4}{c}{} 
			\\
			\hline	
			\multicolumn{2}{|c|}{Case 5} & \multicolumn{2}{|c|}{Case 6} \\ 
			\hline
			\multicolumn{4}{|c|}{$\v_q \in neigh(\v_k)$}\\ 
			\hline
			\multicolumn{2}{|c|}{$\v_w \notin neigh(\v_k)$} & \multicolumn{2}{c|}{$\v_w \in neigh(\v_k)$} \\ 
			\hline
			\multicolumn{2}{|c|}{
			\begin{tikzpicture}[thick, scale=.6]
			\node (a) at (1,2.35) {$\v_w$};
			\node (b) at (0,-.45) {$\v_k$};
			\node (c) at (-.5,.65) {$\v_j$};
			\node (d) at (2.35,.75) {$\v_i$};
			\node (d) at (-3.1,1) {$\v_q$};	
			\draw[fill=black] (1,2) circle (1.5pt);
			\draw[fill=black] (0,0) circle (1.5pt);
			\draw[fill=black] (0,1) circle (1.5pt);
			\draw[fill=black] (2,1) circle (1.5pt);
			\draw[fill=black] (-2.5,1) circle (1.5pt);
			\draw[midarrowminus]           (-2.5,1) -- (1,2);
			\draw[midarrowminus=red]           (1,2) -- (-2.5,1);
			\draw[midarrowminus=red,midarrowplus]           (0,1) -- (-2.5,1);
			\draw[]           (2,1) -- (1,2);
			\draw[]		      (0,1) -- (1,2);
			\draw[midarrow=red]	(2,1) -- (0,1);
			\draw[midarrow]	  	(2,1) -- (0,0);
			\draw[midarrowplus]   	(0,0) -- (0,1);
			\draw[midarrowminus=red,midarrowplus]   	(0,0) -- (-2.5,1);

			\end{tikzpicture}
			}
			
			&
			\multicolumn{2}{|c|}{
			\begin{tikzpicture}[thick, scale=.6]
			\node (a) at (1,2.35) {$\v_w$};
			\node (b) at (0,-.45) {$\v_k$};
			\node (c) at (-.5,.65) {$\v_j$};
			\node (d) at (2.35,.75) {$\v_i$};
			\node (d) at (-3.1,1) {$\v_q$};	
			\draw[fill=black] (1,2) circle (1.5pt);
			\draw[fill=black] (0,0) circle (1.5pt);
			\draw[fill=black] (0,1) circle (1.5pt);
			\draw[fill=black] (2,1) circle (1.5pt);
			\draw[fill=black] (-2.5,1) circle (1.5pt);
			\draw[midarrowminus=red,midarrow]           (1,2) -- (-2.5,1);
			\draw[midarrowminus=red,midarrowplus]           (0,1) -- (-2.5,1);
			\draw[]           (2,1) -- (1,2);
			\draw[]		      (0,1) -- (1,2);
			\draw[midarrow=red]	(2,1) -- (0,1);
			\draw[midarrow]	  	(2,1) -- (0,0);
			\draw[midarrowplus]   	(0,0) -- (0,1);
			\draw[midarrowminus=red,midarrowplus]   	(0,0) -- (-2.5,1);
			\draw[]   	(1,2) -- (0,0);

			\end{tikzpicture}
			}
			\\
			
			\hline	
		\end{tabular}
	\end{center}
\end{table}
In case 1, the edge $\v_j \rightarrow \v_q$ has been oriented as a result of applying Meek function $M_1$ on $\overline{\E} \sqcup \{\v_i \rightarrow \v_j\}$. This edge will also be oriented as a result of applying Meek function $M_1$ on set $\overline{\E} \sqcup \{\v_k \rightarrow \v_j\}$. Hence, there is no new information in applying Meek function $M_{14}$ on set $\overline{\E} \sqcup \{\v_i \rightarrow \v_j\}$.

In the case 2, the edges $\v_j \rightarrow \v_q$ and $\v_k \rightarrow \v_q$ have been oriented as a result of applying Meek function $M_4$ on $\overline{\E} \sqcup \{\v_i \rightarrow \v_j\}$. We have $\{\v_j \rightarrow \v_q,\v_k \rightarrow \v_q\} \subseteq M_4(\overline{\E} \sqcup \{\v_i \rightarrow \v_k\})$.

In the case 3, the edges $\v_j \rightarrow \v_q$ and $\v_w \rightarrow \v_q$ have been oriented as a result of applying Meek function $M_4$ on $\overline{\E} \sqcup \{\v_i \rightarrow \v_j\}$. We have $\v_i \rightarrow \v_w \in M_4(\overline{\E} \sqcup \{\v_k \rightarrow \v_j\})$ and $\{\v_j \rightarrow \v_q,\v_w \rightarrow \v_q\} \subseteq M_4(\overline{\E} \sqcup \{\v_i \rightarrow \v_w\})$.

In the case 4, the edges $\v_j \rightarrow \v_q$ and $\v_w \rightarrow \v_q$ have been oriented as a result of applying Meek function $M_4$ on $\overline{\E} \sqcup \{\v_i \rightarrow \v_j\}$. We have $\{\v_j \rightarrow \v_q,\v_w \rightarrow \v_q\} \subseteq M_4(\E \sqcup \{\v_k \rightarrow \v_j\})$.

In the case 5, the edge $\v_w \rightarrow \v_q$ has been oriented  as a result of applying Meek function $M_4$ on $\overline{\E} \sqcup \{\v_i \rightarrow \v_j\}$. In the other hand, this edge has been oriented in the opposite direction, i.e., $\v_q \rightarrow \v_w$  as a result of applying Meek function $M_4$ on $\overline{\E} \sqcup \{\v_k \rightarrow \v_j\}$, which is a contradiction.

In the case 6, the edges $\v_w \rightarrow \v_q$, $\v_j \rightarrow \v_q$ and $\v_k \rightarrow \v_q$ have been oriented as a result of applying Meek function $M_4$ on $\overline{\E} \sqcup \{\v_i \rightarrow \v_j\}$. Herein, for the set of nodes $\SET{T} =\{\v_i,\v_k,\v_j,\v_q\}$, we have $\{\v_k \rightarrow \v_q, \v_j \rightarrow \v_q \} \subseteq M_4(\E[\SET{T}] \sqcup \{\v_i \rightarrow \v_k\})$. Additionally, for the set of nodes $\SET{T} =\{\v_i,\v_k,\v_w,\v_q\}$, we have $\{\v_w \rightarrow \v_q, \v_k \rightarrow \v_q \} \subseteq M_4(\E[\SET{T}] \sqcup \{\v_i \rightarrow \v_k\})$.

\subsection*{J. Proof of Lemma \ref{lem:MeekProperties}}
We know $\SET{S}= \{\v_i \rightarrow \v |  \v_i \in \SET{I} \} \sqcup \{ \v \rightarrow \v_o |  \v_o \in \SET{O} \}$ where $\SET{S} \subseteq \E$, $\overrightarrow{\SET{S}} = \overrightarrow{\E}$  and $neigh(\v) = \SET{I} \cup \SET{O}$. We want to show that:
\begin{align*}
M_{2}(\E) = \left ( \underset{\SET{C}_i \in \SET{C}_{M_2}(\v)}{\bigsqcup} M_{2}(\SET{C}_i) \right) \sqcup \E .	
\end{align*}
It suffices to show that orienting new edges as a result of applying Meek function $M_2$ on set of edges $\E$ does not make any new candidate sub-graph for Meek rule 2, more than those that exist in set $\E[Neigh(\v)]$ before applying Meek function $M_2$. We will show that no new candidate sub-graph for Meek rule 2 will be generated neither in $\E[Neigh(\v)]$ nor in an candidate sub-graph, having at least one edge outside $\E[Neigh(\v)]$. 

For the first case, Figure \ref{Fig:ProofOfLemma6} shows the only possible skeleton for making new candidate sub-graph for Meek rule 2 in $\E[Neigh(\v)]$. Note that all the edges incident with $\v$ have already been oriented. As a result, in order to have a new candidate sub-graph in $\E[Neigh(\v)]$ it should be a triangle with three nodes like $\{\v_s,\v_d,\v_j\}$ which are in neighbor of $\v$. In Figure \ref{Fig:ProofOfLemma6}(a), assume the red arrow edge, i.e., $\{\v_d \rightarrow \v_s\}$ is oriented as a result of applying Meek function $M_2$ on candidate sub-graph $\{\v_d \rightarrow \v,\v \rightarrow \v_s, \v_d \line \v_s\}$. Now, we show that the triangle $\{\v_s,\v_d,\v_j\}$ cannot form a new candidate sub-graph for Meek rule 2. There exist two options for the orientation of edge $\v \line \v_j$. In Figure \ref{Fig:ProofOfLemma6}(b), we orient this edge as $\v \rightarrow \v_j$. As a result of applying Meek function $M_2$  on candidate sub-graph $\{\v_d \rightarrow \v,\v \rightarrow \v_j, \v_d \line \v_j\}$, the edge $\v_d \line \v_j$ will be oriented as  $\v_d \rightarrow \v_j$. Thus, no new candidate sub-graph is made. In the case of orienting the edge $\v \line \v_j$ as  $\v_j \rightarrow \v$, shown in Figure \ref{Fig:ProofOfLemma6}(c), we cannot obtain new candidate sub-graph for Meek rule 2. Hence, we can imply that no new candidate sub-graph will be made within sub-graph $\E[Neigh(v)]$. 

For the second case, we show that it is not possible to make a new candidate sub-graph, containing at least one node not being in $Neigh(v)$. First we show that it is not possible to orient any edge which exactly one of its end-points is in $neigh(\v)$. By showing this, as a result, no edges will be oriented outside the sub-graph $\E[Neigh(v)]$. We prove this by contradiction. 
For having a candidate sub-graph of this kind, the only possible scenario is to have a structure like $\{\v_t \rightarrow \v_m, \v_m \rightarrow \v_n, \v_t \line \v_n\}$, such that we have $\{\v_m,\v_n\} \subseteq neigh(v)$ and $\v_t \in \V \backslash Neigh(v)$.  Thus, we have $\{\v_t \line \v_m, \v_t \line \v_n\} \subseteq \overline{\E} \backslash  \overline{\E}[Neigh(v)]$ and we intend to orient the only undirected edge in this candidate sub-graph by applying Meek function $M_2$. Now, we study whether such directed edge $\v_t \rightarrow \v_m$ exists in our scenario. To orient $\v_t \rightarrow \v_m$, we need another candidate sub-graph of Meek rule 2, having two edges outside $\E[Neigh(\v)]$ (note that one of them is $\v_t \line \v_m$). The other edge should have already oriented by another candidate sub-graph for Meek rule 2 with the same property that it has two edges outside  $\E[Neigh(\v)]$. However, this continues relying on some other edges outside $\E[Neigh(\v)]$, while the number of edges of this type is finite, and it gives a contradiction.


\begin{figure}[ht]
	\begin{center}
		\begin{tabular}{ccc}
				\begin{tikzpicture}[thick, scale=.8]
				\node (a) at (-.45,0) {$\v_s$};
				\node (b) at (1.5,1.4) {$\v_d$};
				\node (c) at (1.5,-1.4) {$v$};
				\node (d) at (3.4,0) {$v_j$};
				\draw[fill=black] (0,0) circle (1.5pt);
				\draw[fill=black] (1.5,-1) circle (1.5pt);
				\draw[fill=black] (1.5,1) circle (1.5pt);
				\draw[fill=black] (3,0) circle (1.5pt);
				\draw[midarrowminus]	  (1.5,1) -- (1.5,-1);

				\draw[midarrow=red]   (1.5,1) -- (0,0);
				\draw[]   (0,0) -- (3,0);
				\draw[]   (3,0) -- (1.5,1);
				\draw[midarrow]	  (1.5,-1) -- (0,0);
				\draw[]   (1.5,-1) -- (3,0);
				\end{tikzpicture}
				&
				\begin{tikzpicture}[thick, scale=.8]
				\node (a) at (-.45,0) {$\v_s$};
				\node (b) at (1.5,1.4) {$\v_d$};
				\node (c) at (1.5,-1.4) {$v$};
				\node (d) at (3.4,0) {$v_j$};
				\draw[fill=black] (0,0) circle (1.5pt);
				\draw[fill=black] (1.5,-1) circle (1.5pt);
				\draw[fill=black] (1.5,1) circle (1.5pt);
				\draw[fill=black] (3,0) circle (1.5pt);
				\draw[midarrowminus]	  (1.5,1) -- (1.5,-1);

				\draw[midarrow=red]   (1.5,1) -- (0,0);
				\draw[]   (0,0) -- (3,0);
				\draw[midarrow=blue]   (1.5,1) --(3,0);
				\draw[midarrow]	  (1.5,-1) -- (0,0);
				\draw[midarrow]   (1.5,-1) -- (3,0);
				\end{tikzpicture}
				&
				\begin{tikzpicture}[thick, scale=.8]
				\node (a) at (-.45,0) {$\v_s$};
				\node (b) at (1.5,1.4) {$\v_d$};
				\node (c) at (1.5,-1.4) {$v$};
				\node (d) at (3.4,0) {$v_j$};
				\draw[fill=black] (0,0) circle (1.5pt);
				\draw[fill=black] (1.5,-1) circle (1.5pt);
				\draw[fill=black] (1.5,1) circle (1.5pt);
				\draw[fill=black] (3,0) circle (1.5pt);
				\draw[midarrowminus]	  (1.5,1) -- (1.5,-1);

				\draw[midarrow=red]   (1.5,1) -- (0,0);
				\draw[midarrowplus=blue] (3,0)  -- (0,0) ;
				\draw[]   (3,0) -- (1.5,1);
				\draw[midarrow]	  (1.5,-1) -- (0,0);
				\draw[midarrow]   (3,0) -- (1.5,-1);
				\end{tikzpicture}
				\\
				(a) & (b) & (c)
		\end{tabular}
		\caption{Possible orientations for making new candidate sub-graph for Meek rule 2}
		\label{Fig:ProofOfLemma6}
	\end{center}
\end{figure}
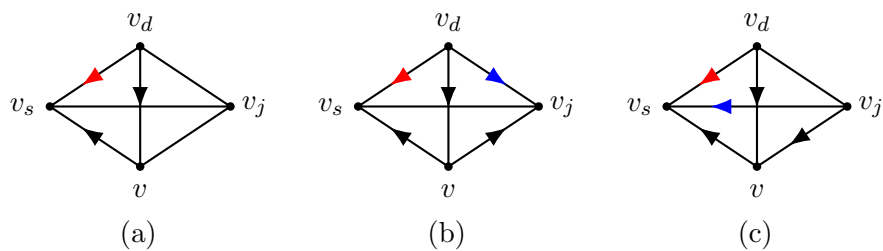

\subsection*{K. Proof of Property 2 in Lemma \ref{lem:MeekPropertiesObserved}}
 The proof of Property 2 is exactly the same as the proof of Property 2 in Lemma \ref{lem:MeekPropertiesPCCG} as adding some forbidden edges does not affect the proof. Note that we do not assume that the graph is chordal or there is no v-structures for proving Property 2 in Lemma \ref{lem:MeekPropertiesPCCG}.

\subsection*{L. Proof of Theorem \ref{thm:decomposition}}
From Property 3 of Lemma \ref{lem:MeekPropertiesPCCG}, we have: $$M_{124}(\E) = M_{14}(M_{2}(\E))$$
Substituting $M_{2}(\E)$ from Lemma \ref{lem:MeekProperties} in the above equation, we have:
$$M_{124}(\E) = M_{14} \left (\underset{\SET{C}_i \in \SET{C}_{M_2}(\v)}{\bigsqcup} M_{2}(\SET{C}_i) \sqcup \E \right ),$$
where the $\SET{C}_{M_2}(\v)$ is set of all candidate sub-graphs for Meek rule 2 in $\E[Neigh(\v)]$. 
From Property 4, we can infer that:
$$M_{14} \left (\underset{\SET{C}_i \in \SET{C}_{M_2}(\v)}{\bigsqcup} M_{2}(\SET{C}_i) \sqcup \E \right ) = M_{14}(\E) \sqcup \left ( \underset{\SET{C}_i \in \SET{C}_{M_2}(\v)}{\bigsqcup} M_{2}(\SET{C}_i) \right ).$$
Thus, we can write: $$M_{124}(\E) = M_{14}(\E)  \Rcup \left ( \underset{\SET{C}_i \in \SET{C}_{M_2}(\v)}{\bigsqcup} M_{2}(\SET{C}_i) \right ).$$
Finally, plugging $M_{14}(\E)$ from Property 2 of Lemma \ref{lem:MeekPropertiesPCCG} into the above equation, we have:
\begin{align*}
M_{124}(\E) = \left ( \underset{e \in \overrightarrow{\E}}{\bigsqcup} M_{14} \left( \{e\} \Rcup \overline{\E} \right ) \right ) \Rcup \left ( \underset{\SET{C}_i \in \SET{C}_{M_2}(\v)}{\bigsqcup} M_{2}(\SET{C}_i) \right ).
\end{align*}
Therefore, the proof is complete.

\subsection*{M. Proof of Proposition \ref{Prop:DP}}
Consider a PCCG $G=(\V,\E)$ with set of nodes $\V$ and set of edges $\E$. Suppose that  $\v_s \rightarrow \v_d$ is the only oriented edge in the set $\E$, i.e., $\overrightarrow{\E} = \{\v_s \rightarrow \v_d\}$. By defining $\SET{T} \triangleq Neigh(\v_d) $, we have:
\begin{align*}
M_{14} \left ( \{\v_s \rightarrow \v_d\} \sqcup \overline{\E}
\right )
= M_{14} \left ( 
\{\v_s \rightarrow \v_d\} \sqcup \E^\prime \sqcup \overline{\E}
\right )
= \E_{M_{14}},
\end{align*}
where $\E^\prime = M_{14}(\{ \v_s \rightarrow \v_d \} \sqcup \overline{\E}[\SET{T}] ) \backslash \{ \v_s \rightarrow \v_d\} $. From Property 2 of Lemma \ref{lem:MeekPropertiesPCCG}, we can write:
\begin{align*}
M_{14} \left ( 
\{\v_s \rightarrow \v_d\} \sqcup \E^\prime \sqcup \overline{\E}
\right ) = 
M_{14} \left (\E^\prime \sqcup \overline{\E} \right )
\sqcup
M_{14} \left (\{\v_s \rightarrow \v_d\} \sqcup \overline{\E} \right ).
\end{align*}
Furthermore, we have:
\begin{align*}
M_{14} \left ( 
\{\v_s \rightarrow \v_d\} \sqcup \overline{\E}
\right ) \backslash \{\v_s \rightarrow \v_d\}
&= \E_{M_{14}}  \backslash \{\v_s \rightarrow \v_d\}
\\ M_{14} \left (\E^\prime \sqcup \overline{\E} \right ) &= \E_{M_{14}}  \backslash \{\v_s \rightarrow \v_d\}.
\end{align*}
Thus, we have:
\begin{align*}
M_{14} \left ( \{\v_s \rightarrow \v_d\} \sqcup \overline{\E}
\right ) = 
M_{14} \left (\E^\prime \sqcup \overline{\E} \right )
\sqcup
\{\v_s \rightarrow \v_d\}.
\end{align*}
Finally, we obtain from Property 2 in Lemma \ref{lem:MeekPropertiesPCCG}:
\begin{align*}
DP [\v_s \rightarrow \v_d]
= \left ( \underset{\v_l \rightarrow \v_k  \in \overrightarrow{\E^\prime} }{\bigsqcup} DP [\v_l \rightarrow \v_k ] 
\right )
\bigsqcup
\left \{ 
\v_s \rightarrow \v_d
\right \}.
\end{align*}

\subsection*{N. Proof of Theorem  \ref{the:TimeComplexityDP}}
We know that calling function in Algorithm \ref{Alg:DPONE}, may cause multiple another calls of this function in a recursive manner. Here, we prove that calling this function with directed edge $\v_s \rightarrow \v_d$ as input cannot occurred inside in calling another function with the same input as $\v_s \rightarrow \v_d$. We prove this by using the following lemma:

\begin{lemma}
	\label{lem:PathFromSource} Given a PCCG $G=(\V,\E)$ with set of nodes $\V$ and set of edges $\E$. For any $v_s\in \SET{V}$ and for any edge $v_r \rightarrow v_k \in M_{14}(\overline{\E} \sqcup \{\v_s \rightarrow \v_d\})$, there exists a directed path from $\v_s$ to $v_r$. 
\end{lemma}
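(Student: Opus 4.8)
The plan is to argue by induction on the order in which edges become oriented during a single run of repeatedly applying Meek rules~1 and~4 to their candidate sub-graphs, starting from $\overline{\E}\sqcup\{\v_s\rightarrow\v_d\}$; because $M_{14}(M_{14}(\cdot))=M_{14}(\cdot)$, the particular run does not matter. Concretely, I would list the seed edge $e_0=(\v_s\rightarrow\v_d)$ together with the edges $e_1,e_2,\dots$ oriented during the run, set $E_t=\{e_0,\dots,e_t\}$, and prove by induction on $t$ the statement that whenever $e_t=(\v_r\rightarrow\v_{r'})$ there is a directed path from $\v_s$ to $\v_r$ using only edges of $E_t$; since the directed edges of $M_{14}(\overline{\E}\sqcup\{\v_s\rightarrow\v_d\})$ are exactly $\{e_0,e_1,\dots\}$, this is equivalent to the lemma. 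I would also record at the outset that $M_{14}(\overline{\E}\sqcup\{\v_s\rightarrow\v_d\})$ is acyclic by the orientation soundness of Theorem~\ref{thm:Orientationsoundness}, so a directed walk from $\v_s$ always contains a directed path and the two notions may be used interchangeably.

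The base case $t=0$ is immediate, since $e_0$ has tail $\v_s$. For the inductive step, $e_t=(\v_r\rightarrow\v_{r'})$ is obtained by applying one of the two rules to a candidate sub-graph whose directed edges already belong to $E_{t-1}$. If Meek rule~1 is used, the candidate sub-graph is $\{\v_a\rightarrow\v_r,\ \v_r\line\v_{r'}\}$ with $\v_a\notin neigh(\v_{r'})$ and $\v_a\rightarrow\v_r\in E_{t-1}$; by the induction hypothesis there is a directed path from $\v_s$ to $\v_a$ inside $E_{t-1}$, and appending the edge $\v_a\rightarrow\v_r$ gives the required path to $\v_r$.

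The Meek-rule-4 case is where the work lies, and I expect it to be the main obstacle. Here the candidate sub-graph (third row of Table~\ref{table:Meek}) supplies a single pre-oriented edge $\v_j\rightarrow\v_k\in E_{t-1}$ and orients $e_t=(\v_i\rightarrow\v_l)$, where $\v_i$ is adjacent to each of $\v_j,\v_k,\v_l$ and $\v_j\not\sim\v_l$; crucially, unlike in the Meek-rule-1 case, the tail $\v_i=\v_r$ need not be the head of any directed edge visible inside the sub-graph, so the induction hypothesis cannot be applied directly. The approach I would take is: (i) using Lemma~\ref{lem:correspondingcandidateMeekrule4}, reduce to the situation in which $\v_k\rightarrow\v_l$ has already been oriented, so that $\v_j\rightarrow\v_k\rightarrow\v_l$ is a length-two directed path inside $E_{t-1}$ and, by the induction hypothesis, $\v_j$, $\v_k$ and $\v_l$ are all reachable from $\v_s$; and (ii) show that at the instant this Meek-rule-4 step fires, one of the edges $\v_k\line\v_i$ or $\v_j\line\v_i$ has in fact already been oriented \emph{into} $\v_i$, after which prepending that edge to the path provided by the induction hypothesis completes the argument. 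Step (ii) is the delicate point: I would prove it by a case analysis on how $\v_j\rightarrow\v_k$ itself was oriented --- it is the seed edge, or was produced by an earlier application of Meek rule~1, or by an earlier application of Meek rule~4 --- and in each branch use acyclicity together with the ``no new v-structure'' property to exclude the configurations in which $\v_i$ could be a local source, tracking the dashed (orientation-irrelevant) edges of the Meek-rule-4 candidate sub-graph exactly as in the exhaustive tables (Tables~\ref{Nocyclelemma} and~\ref{Table:Property5}) used elsewhere in the appendix. This bookkeeping, rather than any single clever idea, is what I anticipate being the bulk --- and the genuinely hard part --- of the proof, since the Meek-rule-4 sub-graph is precisely the one whose ``tail'' can sit outside the already-oriented part of the graph.
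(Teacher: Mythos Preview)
Your induction is set up correctly and your Meek-rule-1 step is fine, but the literal statement you are trying to prove --- a directed path from $v_s$ to the \emph{tail} $v_r$ of every edge $v_r\rightarrow v_k$ in $M_{14}(\overline{\E}\sqcup\{v_s\rightarrow v_d\})$ --- is false, and this is exactly why your step~(ii) cannot be carried out. Take the four-vertex chordal skeleton on $\{v_j,v_k,v_l,v_i\}$ with edges $v_j\line v_k$, $v_k\line v_l$, $v_i\line v_j$, $v_i\line v_k$, $v_i\line v_l$ (and no edge between $v_j$ and $v_l$), and seed $v_j\rightarrow v_k$. Applying $M_{14}$ orients $v_k\rightarrow v_l$ (rule~1) and $v_i\rightarrow v_l$ (rule~4), and nothing else: both $v_j\line v_i$ and $v_k\line v_i$ stay undirected, since rule~1 at either of these edges would require a directed edge whose tail is not adjacent to the other endpoint, and rule~4 has no further candidate sub-graph here. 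Hence $v_i$ has no incoming directed edge at all, so there is no directed path from $v_s=v_j$ to the tail $v_i$ of the edge $v_i\rightarrow v_l$, and in particular neither of the edges you hope to find in~(ii) is oriented. The exhaustive case analysis you outline will therefore reach a branch that simply does not close.

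What the paper's own proof actually establishes --- and what the downstream applications (Theorem~\ref{the:TimeComplexityDP} and Lemma~\ref{lem:entercliqueorientation}) actually use --- is a directed path from $v_s$ to the \emph{head} of each oriented edge; the ``$v_r$'' in the displayed statement is evidently a slip for ``$v_k$''. With that target the Meek-rule-4 step is immediate rather than delicate: when rule~4 fires on the candidate sub-graph with directed edge $v_j\rightarrow v_k$, the same sub-graph contains the rule-1 configuration $\{v_j\rightarrow v_k,\ v_k\line v_l\}$ with $v_j\notin neigh(v_l)$, so $v_k\rightarrow v_l$ is also oriented; by the induction hypothesis there is a directed path from $v_s$ to $v_k$, and appending $v_k\rightarrow v_l$ reaches the head $v_l$ of the new edge $v_i\rightarrow v_l$. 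The obstacle you correctly flagged disappears once the claim is read as path-to-head, and no case analysis on the dashed edges is needed.
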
 
\begin{proof}
    We prove this statement by an induction on the number of steps taken by Meek function $M_{14}$. In the base case, if we apply Meek rule 1, then we orient some edge $\v_d \rightarrow \v_j$. Thus, there is the directed path $\v_s \rightarrow \v_d \rightarrow \v_j$ from $\v_s$ to $\v_j$. Moreover, if we apply Meek rule 4 in the first step, two edges like $\v_d \rightarrow \v_j$ and $\v_k \rightarrow \v_j$ will be oriented (see Table \ref{table:Meek}) and there is the directed path $\v_s \rightarrow \v_d \rightarrow \v_j$ from $\v_s$ to $\v_j$. Now, for the induction step, assume that induction hypothesis holds up to step r, i.e., for any oriented edge like $\v_i \rightarrow \v_r$ from Meek function $M_{1}$, there is a directed path from $\v_s$ to $\v_r$. Suppose that in step r+1, a new edge like $\v_r \rightarrow \v_k$ is oriented by applying Meek rule 1 on some edge $\v_i \rightarrow \v_r$. Since we know that there is a directed path from $\v_s$ to $\v_r$, the same holds from $\v_s$ to $\v_k$. In the case that new edge is oriented by applying Meek function $M_4$ on some previous oriented edge like $\v_i \rightarrow \v_r$, it can be easily seen that (see Table \ref{table:Meek}), there would be a directed edge from $\v_r$ to the head vertex of the newly oriented edge. Based on the induction hypothesis we can infer that there is a directed path from $\v_s$ to head vertex of the new oriented edge and the proof is complete.
\end{proof}
Based on Lemma \ref{lem:PathFromSource}, we will get a directed cycle if such a mentioned recursive call is happened. In this scenario, we will have a directed path from node $\v_s$ to itself. As the underlying graph is a causal DAG, that cannot be happened. Thus, we have not multiple function calls with the same input. Additionally, as we have stored the resulted oriented edges after applying Meek rules on that edge, we did not call that edge again. Therefore, we call this function for each edge once. For filling all entries of DP table, we must call this function $2|\E|$ times. Moreover, we have a nested for-loop in each call which takes $\mathcal{O}(\Delta^2)$ operations. 
Hence, in total, the computational complexity would be in the order of $\mathcal{O}(|\E|\Delta^2)$.

\subsection*{O. Proof of Lemma \ref{Lem:InterventionM14}}
We know $\SET{S}= \{\v_i \rightarrow \v |  \v_i \in \SET{I} \} \sqcup \{ \v \rightarrow \v_o |  \v_o \in \SET{O} \}\subseteq \E$. From Property 2 of Lemma \ref{lem:MeekPropertiesPCCG}, we have:
	\begin{align*} 
	M_{14}(\E) = 
	\left(\underset{\v_i \in \SET{I} }{\bigsqcup} M_{14} \left ( \left\{ \v_i \rightarrow \v \right \} \sqcup \overline{\E} \right )\right) \bigsqcup
	\left(\underset{\v_o \in \SET{O} }{\bigsqcup} M_{14} \left ( \left\{ \v \rightarrow \v_o \right \} \sqcup \overline{\E} \right )\right).
	\end{align*}
We define set of nodes $\SET{O}_c$ as $\{\v_o |  \v_o \in \SET{O}, \SET{I} \subseteq neigh(\v_o) \}$. Hence, we rewrite the above equation as the following:
	\begin{align*}
	M_{14}(\E) = 
	\left(\underset{\v_i \in \SET{I} }{\bigsqcup} M_{14} \left ( \left\{ \v_i \rightarrow \v \right \} \sqcup \overline{\E} \right )\right) &\bigsqcup
	\left(\underset{ \v_o \in \SET{O}_c }{\bigsqcup} M_{14} \left ( \left\{ \v \rightarrow \v_o \right \} \sqcup \overline{\E} \right )\right)\\
	&\bigsqcup
	\left(\underset{\v_o \in \SET{O}\backslash\SET{O}_c }{\bigsqcup} M_{14} \left ( \left\{ \v \rightarrow \v_o \right \} \sqcup \overline{\E} \right )\right).
	\end{align*}
For each node $\v_o \in \SET{O}\backslash\SET{O}_c$, there exists a node $\v_k \in \SET{I}$  such that $\v_k \notin neigh(\v_o)$, thus we have $\v \rightarrow \v_o \in M_{14}( \{ \v_k \rightarrow \v \} \sqcup \overline{\E})$. Therefore, we have the following relation:
	\begin{align*}
	M_{14} \left ( \left\{ \v \rightarrow \v_o \right \} \sqcup \overline{\E} \right ) \subseteq M_{14} \left ( \left\{ \v_k \rightarrow \v \right \} \sqcup \overline{\E} \right ) \subseteq 	\underset{ \v_i \in \SET{I} }{\bigsqcup} M_{14} \left ( \left\{ \v_i \rightarrow \v \right \} \sqcup \overline{\E} \right ).
	\end{align*}
Thus, we have:
	\begin{align*}
	\underset{\v_o  \in \SET{O}\backslash\SET{O}_c }{\bigsqcup} M_{14} \left ( \left\{ \v \rightarrow \v_o \right \} \sqcup \overline{\E} \right ) 
	\subseteq 
	\underset{\v_i \in \SET{I} }{\bigsqcup} M_{14} \left ( \left\{ \v_i \rightarrow \v \right \} \sqcup \overline{\E} \right).
	\end{align*}
Hence, we can write:
	\begin{align*}
	M_{14}(\E) &= 
	\left(\underset{\v_i \in \SET{I} }{\bigsqcup} M_{14} \left ( \left\{ \v_i \rightarrow \v \right \} \sqcup \overline{\E} \right )\right) \bigsqcup
	\left(\underset{\v_o \in \SET{O}_c }{\bigsqcup} M_{14} \left ( \left\{ \v \rightarrow \v_o \right \} \sqcup \overline{\E} \right )\right) \\
	&= \underset{\v_l \rightarrow \v_k  \in \SET{S^\prime} }{\bigsqcup} M_{14} \left ( \left\{ \v_l \rightarrow \v_k \right \} \sqcup \overline{\E} \right ) \\&= \underset{\v_l \rightarrow \v_k \in \SET{S^\prime} }{\bigsqcup} DP [\v_l \rightarrow \v_k],
	\end{align*}
where $\SET{S^\prime} = \{ \v_i \rightarrow \v |  \v_i \in \SET{I} \} \sqcup \{ \v \rightarrow \v_o |  \v_o \in \SET{O}, \SET{I} \subseteq neigh(\v_o) \}$.

\subsection*{P. Proof of Theorem \ref{Thm:InterventionM124}}
Having $\SET{S}= \{\v_i \rightarrow \v |  \v_i \in \SET{I} \} \sqcup \{ \v \rightarrow \v_o |  \v_o \in \SET{O} \}$ and $\overrightarrow{\SET{S}} = \overrightarrow{\E}$, from Theorem \ref{thm:decomposition}, we have:
	\begin{align*}
	M_{124}(\E) = \left ( \underset{e \in \overrightarrow{\E}}{\bigsqcup} M_{14}( \{e\} \Rcup \overline{\E} ) \right ) \Rcup \left ( \underset{\SET{C}_i \in \SET{C}_{M_2}(\v)}{\bigsqcup} M_{2}(\SET{C}_i) \right ).
	\end{align*}
	Substituting the first term with its equivalent term in Lemma \ref{Lem:InterventionM14}, we have:
	\begin{align*}
	&M_{124} ( \E ) = \left ( \underset{\v_l \rightarrow \v_k \in \SET{S^\prime} }{\bigsqcup} DP [\v_l \rightarrow \v_k] \right ) \bigsqcup  \left ( \underset{\SET{C}_i \in \SET{C}_{M_2}(\v)}{\bigsqcup} M_{2}(\SET{C}_i) \right ),
	\end{align*}
where $\SET{S^\prime} = \{ \v_i \rightarrow \v |  \v_i \in \SET{I} \} \sqcup  \{ \v \rightarrow \v_o |  \v_o \in \SET{O}_c\}$ and $\SET{O}_c = \{\v_o |  \v_o \in \SET{O}, \SET{I} \subseteq neigh(\v_o) \}$. By defining $\SET{I}_{neigh(\v_o)}= \{\v_i|\v_i \in \SET{I} \cap neigh(\v_o)\}$, we have the following equation:
	\begin{align*}
	&\underset{\SET{C}_i \in \SET{C}_{M_2}(\v)}{\bigsqcup} M_{2}(\SET{C}_i) =  \left \{\v_i \rightarrow \v_o |  \v_i \in \SET{I}_{neigh(\v_o)},  \v_o \in \SET{O}_c \right \} \bigsqcup  \left \{\v_i \rightarrow \v_o |  \v_i \in \SET{I}_{neigh(\v_o)},  \v_o \in \SET{O}\backslash \SET{O}_c \right \}.
	\end{align*}
From the definition of $\SET{O}_c$, we know that for each $\v_o \in \SET{O}_c$, we have $\SET{I} \subseteq neigh(\v_o)$. Hence, for each $\v_o \in \SET{O}_c$, we have $\SET{I}_{neigh(\v_o)} = \SET{I}$. Thus, for the first term in the above equation, we have:
$$\left \{\v_i \rightarrow \v_o |  \v_i \in \SET{I}_{neigh(\v_o)},  \v_o \in \SET{O}_c \right \} = \left \{\v_i \rightarrow \v_o |  \v_i \in \SET{I},  \v_o \in \SET{O}_c \right \}.$$
Therefore, it suffices to show that:$$ \left \{\v_i \rightarrow \v_o |  \v_i \in \SET{I}_{neigh(\v_o)},  \v_o \in \SET{O}\backslash \SET{O}_c \right \} \subseteq \underset{\v_l \rightarrow \v_k \in \SET{S^\prime} }{\bigsqcup} DP [\v_l \rightarrow \v_k] $$
Having $\v_{i} \in \SET{I}_{neigh(\v_o)}$ and $\v_o \in \SET{O}\backslash \SET{O}_c$, for each edge $\v_i \line \v_o \in \overline{\E}$, there exists a node $\v_{t} \in \SET{I} \backslash \SET{I}_{neigh(\v_o)}$, and a sub-graph $\E[\{\v_i,\v_o,\v_t,\v\}] = \{\v_t \rightarrow \v,\v \rightarrow \v_o,\v_i \rightarrow \v, \v_t \line \v_i, \v_i \line \v_o\}$ in the graph. Thus, we will have:
$$ \v_i \rightarrow \v_o \in 
DP [\v_t \rightarrow \v].$$
Furthermore, we have $\{\v_t \rightarrow \v\} \subset \SET{S^\prime}$. Thus, we can write:
$$ \{\v_i \rightarrow \v_o\} \subseteq
DP [\v_t \rightarrow \v] \subseteq 
\underset{\v_l \rightarrow \v_k \in \SET{S^\prime} }{\bigsqcup} DP [\v_l \rightarrow \v_k].$$
Hence, we can conclude that:
\begin{align*}
&M_{124} (\E ) = \left \{ \underset{\v_l \rightarrow \v_k \in \SET{S^\prime} }{\bigsqcup} DP [\v_l \rightarrow \v_k ] \right \} \bigsqcup  \left (\v_i \rightarrow \v_o |  \v_i \in I,  \v_o \in \SET{O}_c \right )
\end{align*}
where $\SET{S^\prime} =\{ \v_i \rightarrow \v |  \v_i \in \SET{I} \} \sqcup \{ \v \rightarrow \v_o |  \v_o \in \SET{O}, \SET{I} \subseteq neigh(\v_o) \}$.

\subsection*{Q. Proof of Lemma \ref{lem:LBMC}}
We denote the obtained ICCGs with $G=(\V,\E)$. We know that $\overrightarrow{\E} = \SET{S}= \{\v_i \rightarrow \v |  \v_i \in \SET{I} \} \sqcup \{ \v \rightarrow \v_o |  \v_o \in \SET{O} \}$ in which $\SET{I} \subseteq \SET{C}_k$. We divide the problem of calculating the lower bound into two cases. In the first case, all the edges are from nodes in maximal clique toward the intervened node, $\v$, i.e., $\SET{I} = \SET{C}_k$. In the second case, we consider that there is an edge from a node in maximal clique toward node $\v$ and also, there is an edge from intervened node $\v$ toward a node in maximal clique, i.e., $\SET{I} \subset \SET{C}_k$.
\newline
We use Theorem \ref{Thm:InterventionM124} for calculating number of oriented edges in the first case. We have $\SET{I} = \SET{C}_k$. Thus, we know $\SET{O}_c= \emptyset$ and  $\SET{S} =\{ \v_i \rightarrow \v |  \v_i \in \SET{C}_k \}$. Hence, in this case, number of oriented edges after applying Meek function $M_{124}$, $|M_{124}(\E)|$, is equal to the following:
\begin{align*}
&|M_{124}(\E)| =L_{I} =  \left |\underset{\v_i \in \SET{I}}{\bigsqcup} DP [\v_i \rightarrow \v] \right|.
\end{align*}
Now, we will investigate the second case. For sake of simplicity, first, we assume that the set of nodes that have edges from nodes in  maximal clique $\SET{C}_k$ to intervened node $\v$ is known. Then, we calculate the minimum number of oriented edges for any possible orientation of edges between maximal clique $\SET{C}_k$ and intervened node $\v$. After intervention, we have:
\begin{align*}
M_{124}(\E) 
=& M_{124} \left ( \left ( \underset{\v_i \in \SET{I}}{\bigsqcup}\{\v_i \rightarrow \v\} \right )\bigsqcup \left (  \underset{\v_o \in neigh(\v)\backslash\SET{I}}{\bigsqcup}\{\v \rightarrow \v_o\} \sqcup \overline{\E}\right ) \right ) \\
\end{align*}
Using Theorem \ref{thm:decomposition}, we decompose Meek function $M_{124}$ to Meek functions $M_2$ and $M_{14}$:
\begin{align*}
M_{124}(\E) = &M_{14} \left (  \left ( \underset{\v_i \in \SET{I}}{\bigsqcup}\{\v_i \rightarrow \v\} \right )\bigsqcup \left (  \underset{\v_o \in neigh(\v)\backslash\SET{I}}{\bigsqcup}\{\v \rightarrow \v_o\} \sqcup \overline{\E}\right ) \right ) \bigsqcup \\
&M_{2} \left ( \left ( \underset{\v_i \in \SET{I}}{\bigsqcup}\{\v_i \rightarrow \v\} \right )\bigsqcup \left (  \underset{\v_o \in neigh(\v)\backslash\SET{I}}{\bigsqcup}\{\v \rightarrow \v_o\} \sqcup \overline{\E}\right ) \right ). \\
\end{align*}
Based on Property 2 in Lemma \ref{lem:MeekPropertiesPCCG}, we extract mixed edge union from inside of Meek function $M_{14}$:
\begin{align*}
M_{124}(\E) = & \left ( \underset{\v_i \in \SET{I}}{\bigsqcup}M_{14}(\{\v_i \rightarrow \v\} \sqcup \overline{\E})\right)\bigsqcup \left( \underset{\v_o \in neigh(\v)\backslash\SET{I}}{\bigsqcup}M_{14}( \{\v \rightarrow \v_o\} \sqcup \overline{\E}) \right ) \bigsqcup \\ 
M_{2} &\left ( \left ( \underset{\v_i \in \SET{I}}{\bigsqcup}\{\v_i \rightarrow \v\} \right )\bigsqcup \left (  \underset{\v_o \in neigh(\v)\backslash\SET{I}}{\bigsqcup}\{\v \rightarrow \v_o\} \sqcup \overline{\E}\right ) \right ). \\
\end{align*}
We know that $\SET{I} \subseteq \SET{C}_k$, therefore we have:
\begin{align*}
 neigh(\v)\backslash\SET{I} = neigh(\v)\backslash\SET{C}_k \cup \SET{C}_k\backslash\SET{I}.
\end{align*}
 Thus, we can write:
\begin{align*}
 \underset{\v_o \in neigh(\v)\backslash\SET{I}}{\bigsqcup}M_{14}(\{\v \rightarrow \v_o\} \sqcup \overline{\E}) = \left ( \underset{\v_o \in neigh(\v)\backslash\SET{C}_k}{\bigsqcup}M_{14}(\{\v \rightarrow \v_o\} \sqcup \overline{\E}) \right ) \bigsqcup \left (\underset{\v_o \in \SET{C}_k\backslash\SET{I}}{\bigsqcup} M_{14}( \{\v \rightarrow \v_o\} \sqcup \overline{\E})\right ).
\end{align*}
Substituting $M_{14}$ function with the DP value in Proposition \ref{Prop:DP}, we will have:
\begin{align*}
 \underset{\v_o \in neigh(\v)\backslash\SET{I}}{\bigsqcup}M_{14}(\{\v \rightarrow \v_o\} \sqcup \overline{\E})  = \left (\underset{\v_o \in neigh(\v)\backslash\SET{C}_k}{\bigsqcup} DP [\v \rightarrow \v_o ] \right)\bigsqcup
\left (\underset{\v_o \in \SET{C}_k\backslash\SET{I}}{\bigsqcup} DP [\v \rightarrow \v_o ]\right).
\end{align*}
In the other hand, we have:
\begin{align*}
\underset{\v_i \in \SET{I}}{\bigsqcup}M_{14}(\{\v_i \rightarrow \v\} \sqcup \overline{\E}) = \underset{\v_i \in \SET{I}}{\bigsqcup} DP [\v_i \rightarrow \v ].
\end{align*}
Therefore, we can write:
 \begin{align*}
 M_{124}(\E) =&\left(\underset{\v_o \in neigh(\v)\backslash\SET{C}_k}{\bigsqcup} DP [\v \rightarrow \v_o ] \right) \bigsqcup \left(\underset{\v_o \in \SET{C}_k\backslash\SET{I}}{\bigsqcup} DP \left [ \{ \v \rightarrow \v_o  \}\right ] \right) \bigsqcup\\ 
 &\left(\underset{\v_i \in \SET{I}}{\bigsqcup} DP [\v_i \rightarrow \v ] \right) \bigsqcup M_{2} \left ( \left ( \underset{\v_i \in \SET{I}}{\bigsqcup}\{\v_i \rightarrow \v\} \right )\bigsqcup \left (  \underset{\v_o \in neigh(\v)\backslash\SET{I}}{\bigsqcup}\{\v \rightarrow \v_o\} \sqcup \overline{\E}\right ) \right ).
 \end{align*}
According to Lemma \ref{lem:MeekProperties}, we can write:
\begin{align*}
&M_{2} \left ( \left ( \underset{\v_i \in \SET{I}}{\bigsqcup}\{\v_i \rightarrow \v\} \right )\bigsqcup \left (  \underset{\v_o \in neigh(\v)\backslash\SET{I}}{\bigsqcup}\{\v \rightarrow \v_o\} \sqcup \overline{\E}\right ) \right ) = \\
&\left \{\v_i \rightarrow \v_o |  \v_i \in I, \v_o \in  \SET{C}_k\backslash\SET{I} \right \} \sqcup \left \{\v_i \rightarrow \v_o |  \v_i \in I, \v_o \in  neigh(\v) \cap neigh(\v_i)\backslash\SET{C}_k \right \}.
\end{align*}
Therefore, we can obtain the result of applying Meek function $M_{124}$ on set $\E$ with the following equation:
\begin{align*}
&M_{124}(\E) = \SET{T}_1 \sqcup \SET{T}_2 \sqcup \SET{T}_3 \sqcup \SET{T}_4.
\end{align*}
where,
\begin{align*}
&\SET{T}_1 = \underset{\v_o \in neigh(\v)\backslash\SET{C}_k}{\bigsqcup} DP [\v \rightarrow \v_o], \\
&\SET{T}_2 = \left \{\v_i \rightarrow \v_o |  \v_i \in \SET{I}, \v_o \in  \SET{C}_k\backslash\SET{I} \right \},\\
&\SET{T}_3 = \left(\underset{\v_i \in \SET{I}}{\bigsqcup} DP [\v_i \rightarrow \v] \bigsqcup \left \{\v_i \rightarrow \v_o |  \v_i \in I, \v_o \in  neigh(\v) \cap neigh(\v_i)\backslash\SET{C}_k \right \} \right ) \backslash \SET{T}_1, \\ 
&\SET{T}_4 = \left(\underset{\v_o \in \SET{C}_k\backslash\SET{I}}{\bigsqcup} DP [\v \rightarrow \v_o ]\right)\backslash \SET{T}_1.
\end{align*}
Now, our goal is to calculate $|M_{124}(\E)|$. To do so, we want to prove that the intersection of any two sets selected from the sets $\SET{T}_1,\SET{T}_2,\SET{T}_3,\SET{T}_4$ is empty. In order to show this, we first state the following Lemma \ref{lem:entercliqueorientation}.

\begin{lemma}
	\label{lem:entercliqueorientation}
	Consider a UCCG $G=(\V,\E)$ with set of nodes $\V$ and set of edges $\E$. Suppose that there exists a maximal clique $\SET{C}_k$ in the neighborhood of node $\v$. The following relations hold:
	\begin{itemize}
		\item For any  $\v_1 \line \v_2 \in \overline{\E}[\SET{C}_k]$ and $\v_i \in \SET{C}_k$, we have $ \v_1 \rightarrow \v_2 \notin DP [\v_i \rightarrow \v ]$.
		\item For any  $\v_1 \line \v_2 \in \overline{\E}[Neigh(v)]$ and  $\v_o \in neigh(\v)$, we have $\v_1 \rightarrow \v_2 \notin DP [\v \rightarrow \v_o ]\backslash \{\v \rightarrow \v_o \}$.
		\item For any  $\{\v_1 \line \v_2,\v_3 \line \v_4\} \subseteq \{\v \line \v_t|\v_t \in \SET{C}_k\} $, we have $\v_1 \rightarrow \v_2 \notin DP [\v_3 \rightarrow \v_4]$.

	\end{itemize}	
\end{lemma}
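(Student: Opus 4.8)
\emph{Proof proposal.} My plan is to translate all three statements into questions about which orientations are \emph{forced} in the Markov equivalence class of the chordal skeleton $\overline{\E}$, and then to answer them by exhibiting concrete DAGs. The first step is to observe that $DP[\v_s\rightarrow\v_d]=M_{14}(\{\v_s\rightarrow\v_d\}\sqcup\overline{\E})$ in fact equals $M_{1234}(\{\v_s\rightarrow\v_d\}\sqcup\overline{\E})$: since the skeleton is a UCCG there is no v-structure, so no candidate sub-graph for Meek rule $3$ ever arises, and since $\{\v_s\rightarrow\v_d\}\sqcup\overline{\E}$ has a single directed edge there is no candidate sub-graph for Meek rule $2$ either, so Properties 1 and 3 of Lemma~\ref{lem:MeekPropertiesPCCG} collapse $M_{1234}$ to $M_{14}$. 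By Theorem~\ref{thm:GeneralMeekcompleteness}, $DP[\v_s\rightarrow\v_d]$ is then a maximally oriented graph with respect to $\{\v_s\rightarrow\v_d\}$; equivalently, for any edge $\v_a\line\v_b$ of $\overline{\E}$, the orientation $\v_a\rightarrow\v_b$ lies in $DP[\v_s\rightarrow\v_d]$ iff it holds in \emph{every} DAG of the MEC of $\overline{\E}$ that contains $\v_s\rightarrow\v_d$. Hence it suffices, for each claim, to build one DAG in the MEC of $\overline{\E}$ containing the seed edge together with the \emph{reverse} of the queried edge.

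The tool for building such DAGs will be the standard description of the MEC of a chordal graph: the DAGs in the MEC of $\overline{\E}$ are exactly its acyclic orientations with no v-structure, and for any maximal clique $\SET{K}$ of $\overline{\E}$ and any linear order $\sigma$ of $\SET{K}$ there is such a DAG $D_\sigma$ whose restriction to $\SET{K}$ orients $\v_a\rightarrow\v_b$ whenever $\v_a$ precedes $\v_b$ in $\sigma$, and in which the $\sigma$-minimal vertex of $\SET{K}$ is a root (see \cite{Andersson,He15}, and Proposition~6 of \cite{Hauser14}); $D_\sigma$ is obtained from a clique tree of $\overline{\E}$ re-rooted at $\SET{K}$ by ordering $\SET{K}$ by $\sigma$ and appending the remaining vertices of each clique, in clique-tree order, after everything already placed. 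With this in hand the three parts become choices of an order on a single clique. Since $\SET{C}_k\subseteq neigh(\v)$ is a clique and $\v$ is adjacent to all of it, $\SET{C}_k\cup\{\v\}$ is a clique; let $\SET{K}^\star$ be a maximal clique containing it. For the first claim I take $\sigma$ on $\SET{K}^\star$ extending $\v_i\prec\v$ and $\v_2\prec\v_1$ (jointly acyclic because $\v\notin\SET{C}_k$ forces $\v\neq\v_1$), so $D_\sigma$ contains $\v_i\rightarrow\v$ and $\v_2\rightarrow\v_1$. For the third claim, where the two distinct edges are both of the form $\v\line\v_t$ with $\v_t\in\SET{C}_k$, all four endpoints lie in $\SET{K}^\star$ and ``seed oriented'' together with ``queried edge reversed'' is acyclic precisely because the two undirected edges are distinct, so the corresponding $D_\sigma$ finishes it. For the second claim, if $\v_1\line\v_2$ meets $\v$ I use a maximal clique containing $\v$ and the other endpoint, placing $\v$ (or the other endpoint) $\sigma$-early enough so that both the reversed queried edge and $\v\rightarrow\v_o$ (which holds since $\v_o$ is placed after $\v$) are realized; and if $\v_1,\v_2\in neigh(\v)$ with $\v\notin\{\v_1,\v_2\}$, then $\{\v,\v_1,\v_2\}$ is a clique, so ordering a maximal clique $\SET{K}^{\star\star}\supseteq\{\v,\v_1,\v_2\}$ with $\v$ first and $\v_2$ before $\v_1$ makes $\v$ a root of $D_\sigma$ (hence $\v\rightarrow\v_o\in D_\sigma$ for every $\v_o\in neigh(\v)$) while keeping $\v_2\rightarrow\v_1\in D_\sigma$; the symmetric choice handles $\v_2\rightarrow\v_1$. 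The handful of sub-cases in which the queried edge meets the source can also be read off immediately from Lemma~\ref{lem:PathFromSource}: $DP[\v_s\rightarrow\v_d]$ is acyclic, hence contains no edge into $\v_s$, and $\v_s\rightarrow\v_d$ is its only edge with tail $\v_s$.

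The delicate point I expect to have to argue carefully is the clique-tree fact itself, and in particular the addendum that re-rooting a clique tree at $\SET{K}$ and ordering the root clique by $\sigma$ produces a DAG that both restricts to $\sigma$ on $\SET{K}$ and has a $\sigma$-minimal root-clique vertex as a global root: this is exactly where the running-intersection property enters, to show that every vertex outside $\SET{K}$ is placed after all of $\SET{K}$ and that the set of in-neighbours created for each vertex sits inside a clique (so no v-structure is introduced). Everything else is bookkeeping repeated across the cases: checking that the pair of order relations imposed is acyclic — which always reduces to the center vertex $\v$ being distinct from the clique vertices $\v_i,\v_1,\v_2$ — and checking that the seed edge, which I have only required to be realizable in \emph{some} DAG of the MEC, is realized by the chosen $\sigma$.
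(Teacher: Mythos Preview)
Your approach is correct and takes a somewhat different route from the paper's. Both arguments ultimately reduce the three claims to the existence of certain DAGs in the MEC of $\overline{\E}$, which the paper obtains by citing Proposition~6 of \cite{Hauser14}. The difference is in how that reduction is made. The paper argues each item by contradiction: assuming the queried orientation lies in $DP[\cdot]$, it invokes Lemma~\ref{lem:PathFromSource} to produce a directed path from the seed's head to the queried edge's head, and then combines this path with a second ``valid orientation'' at $\v$ (from \cite{Hauser14}) to close a directed cycle inside what should be a DAG. You instead use completeness (your first paragraph) to identify $DP[\cdot]$ with the set of orientations common to \emph{all} consistent DAGs, so that exhibiting a single DAG with the reverse of the queried edge finishes the job; you then build that DAG by a clique-tree/perfect-elimination-ordering construction. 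Your route is cleaner conceptually and does not need the path-from-source lemma for the main cases, but it obliges you to argue the clique-tree re-rooting fact carefully, whereas the paper simply cites \cite{Hauser14}.

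One small caution: the assertion that ``$\v_s\rightarrow\v_d$ is its only edge with tail $\v_s$'' does not follow from Lemma~\ref{lem:PathFromSource} as stated (that lemma only gives a directed path from $\v_s$ to the tail, and the trivial path suffices when the tail is $\v_s$ itself). The claim is true, but you should either appeal to the \emph{proof} of Lemma~\ref{lem:PathFromSource}, where every non-seed oriented edge is reached via a path through $\v_d$ (so a second edge out of $\v_s$ would force a cycle), or simply fold those sub-cases into your main clique-tree argument, which already handles them without additional work.
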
 

\begin{proof}
	It can be shown that there exist a DAG in the Markov equivalence class of graph $G$, such that this DAG consists of some edges that are oriented from nodes in a clique in the neighborhood of node $\v$ to this node (see Proposition 6 in \cite{Hauser14}). We call such orientations in which there exist a DAG in the MEC as valid orientations. Based on this fact, we prove the relations by contradiction. For proving the first relation, assume that we have  $ \v_1 \rightarrow \v_2 \in DP [\v_i \rightarrow \v ]$. Thus, according to Lemma \ref{lem:PathFromSource}, there exists a directed path from node $\v$ to node $\v_2$. As $v_2$ and $v_i$ are in the maximal clique $\SET{C}_k$, we can have valid orientations of $v_2\rightarrow v$ and $v_i\rightarrow v$ without making a new v-structure. Hence, there would be a directed cycle $v\rightarrow \cdots \rightarrow v_2 \rightarrow v$ which is a contradiction. Therefore, there exists no DAG and we can imply that: $ \v_1 \rightarrow \v_2 \notin DP [\v_i \rightarrow \v ]$.
	
	
	In order to show the second relation, suppose that we have  $ \v_1 \rightarrow \v_2 \in DP [\v \rightarrow \v_o ]$. We partition the problem in three cases: $\v_2 = \v$, $\v_2 = \v_o$ and $\v_2 \in neigh(\v)\backslash \v_o$. In the first case, according to Lemma \ref{lem:PathFromSource}, there will be a directed path $\v \rightarrow \v_o ... \rightarrow \v$ which is a directed cycle. In the second case, according to Lemma \ref{lem:PathFromSource}, there will be a directed path $\v_o \rightarrow ... \rightarrow \v_1 \rightarrow \v_o$ which is a directed cycle. In the third case, having the valid orientations $v_2\rightarrow v$ and $v \rightarrow v_o$, we have a directed path $\v \rightarrow \v_o ...  \rightarrow \v_2  \rightarrow \v$ which is a directed cycle and there exists no DAG. This is a contradiction.
	
	For proving the third relation, assume we have  $\v_1 \rightarrow \v_2 \in DP [\v_3 \rightarrow \v_4]$. This means that orienting the edge $\v_3 \rightarrow \v_4$ orients the edge $\v_1 \rightarrow \v_2$, and we cannot have a DAG including directed edges $\v_3 \rightarrow \v_4$ and $\v_2 \rightarrow \v_1$. However, according to Proposition 6 in \cite{Hauser14}, such orientations are valid orientations and there exists a DAG, including those directed edges. Thus, this is a contradiction and the proof is complete.
\end{proof}

According to Lemma \ref{lem:entercliqueorientation}, we will show that intersection of sets $\SET{T}_1,\SET{T}_2,\SET{T}_3,\SET{T}_4$ is empty. Hence, the number of elements in $M_{124}(\E)$, i.e., $|M_{124}(\E) |$, can be computed by adding the number of elements in each of the sets $\SET{T}_1,\SET{T}_2,\SET{T}_3,\SET{T}_4$.

First, it can be seen that $\overline{\SET{T}}_2 \subseteq \overline{\E}[\SET{C}_k]$. Furthermore, based on the second relation in Lemma \ref{lem:entercliqueorientation}, we cannot find any edge $\v_1 \rightarrow \v_2 \in \SET{T}_1$ for nodes $\v_1,\v_2\in \SET{C}_k$. Therefore, we have $\overline{\SET{T}}_1 \cap \overline{\E}[\SET{C}_k] = \emptyset$. Additionally, we already removed all $\SET{T}_1$ elements from the sets $\SET{T}_3$ and $\SET{T}_4$. Thus, we can write:
\begin{align*}
&\SET{T}_1 \cap \left ( \SET{T}_2 \sqcup \SET{T}_3 \sqcup \SET{T}_4 \right) = \emptyset.
\end{align*}
In the following we prove that $\overline{\SET{T}}_3 \cap \overline{\E}[\SET{C}_k] = \emptyset$ and $\overline{\SET{T}}_4 \cap \overline{\E}[\SET{C}_k] = \emptyset$. Based on the second relation in Lemma \ref{lem:entercliqueorientation}, we cannot find any edge $\v_1 \rightarrow \v_2 \in \SET{T}_4$ for set of nodes $\v_1,\v_2\in \SET{C}_k$. Moreover, we can define the following two sets $\SET{T}_{3}^1$ and $\SET{T}_{3}^2$ such that $\SET{T}_3= \SET{T}_{3}^1\sqcup \SET{T}_{3}^2$:
\begin{align*}
&\SET{T}_{3}^1 = \left(\underset{\v_i \in \SET{I}}{\bigsqcup} DP [\v_i \rightarrow \v ]  \right ) \backslash \SET{T}_1, \\ 
&\SET{T}_{3}^2 = \left( \bigsqcup \left \{\v_i \rightarrow \v_o |  \v_i \in I, \v_o \in  neigh(\v) \cap neigh(\v_i)\backslash\SET{C}_k \right \} \right ) \backslash \SET{T}_1. \\ 
\end{align*}
We know that $\overline{\SET{T}}_{3}^2 \cap \overline{\E}[\SET{C}_k] = \emptyset$. It suffices to show $\overline{\SET{T}}_{3}^1 \cap \overline{\E}[\SET{C}_k] = \emptyset$. Again based on the first relation in Lemma \ref{lem:entercliqueorientation}, we cannot find any edge $\v_1 \rightarrow \v_2 \in \SET{T}_{3}^1$ for nodes $\v_1,\v_2 \in \SET{C}_k$. Hence, we have $\overline{\SET{T}}_3 \cap \overline{\E}[\SET{C}_k] = \emptyset$ and we can write:
\begin{align*}
&\SET{T}_2 \cap \left ( \SET{T}_3 \sqcup \SET{T}_4 \right) = \emptyset. \\
\end{align*}
Finally, we will show that: $
\SET{T}_3 \cap \SET{T}_4 = \emptyset.$
To do so, we prove the following two relations:
\begin{align}
&\overline{\SET{T}}_3 \subseteq \overline{\E}[Neigh(v)] \backslash \{ \v \line \v_o |\v_o \in \SET{C}_k\backslash\SET{I}\},\\
&\overline{\SET{T}}_4 \subseteq \{ \v \line \v_o |\v_o \in \SET{C}_k\backslash\SET{I}\}\sqcup \overline{\E}\backslash\overline{\E}[Neigh(v)].
\end{align}
For proving the relation (5), consider set $\SET{T}_3$. We know $\overline{\SET{T}}_3^2 \subseteq \overline{\E}[Neigh(v)] \backslash \{ \v \line \v_o |\v_o \in \SET{C}_k\backslash\SET{I}\}$. Thus, it suffices to show: $\overline{\SET{T}}_3^1  \subseteq \overline{\E}[Neigh(v)] \backslash \{ \v \line \v_o |\v_o \in \SET{C}_k\backslash\SET{I}\}$. Based on Proposition \ref{Prop:DP}, in the first step of DP calculation for each edge $\v_i \rightarrow \v$, we orient the edges inside the sub-graph  $\overline{\E}[Neigh(v)]$. The edges that are in $DP[v_i\rightarrow v]$ and $\overline{\E}[Neigh(v)]$, can be categorized into two types. In the first type, at least one of the endpoint of the edge is $v$. According to third relation in Lemma \ref{lem:entercliqueorientation}, for these edges like $v\rightarrow v_2$,   we cannot orient edges $v\line v_2$ for any $\v_2 \in \SET{C}_k$. Thus, we conclude that: $M_{14}(\{\v \rightarrow \v_2\} \sqcup \overline{\E}) \subseteq \SET{T}_1$. 
For the second type of the edges, none of the endpoints of edges is the node $v$. For this type of edges like $v_1\rightarrow v_2$, we cannot have $\{\v_1,\v_2\} \subseteq \SET{C}_k$ due to the first relation in Lemma \ref{lem:entercliqueorientation}. Furthermore, we cannot have $v_2 \in \SET{C}_k$. Since if we have $v_2 \in \SET{C}_k$, then $v_1\not\in \SET{C}_k$. Moreover, based on Proposition 6 in \cite{Hauser14}, by considering the valid orientation $v_2 \rightarrow \v$, we will have a cycle as $\{\v \rightarrow \v_1,\v_1 \rightarrow \v_2,\v_2 \rightarrow \v\}$. Additionally, for each edge of this type there is always an edge $v\rightarrow v_2$, where $v_2 \in neigh(v)\backslash\SET{C}_k$.
Therefore, it suffices to show that, $\E_1 \backslash \E_1[Neigh(v)] \subseteq M_{14}(\{\v \rightarrow \v_2\} \sqcup \overline{\E})$, where $\E_1 = M_{14}(\{\v_1 \rightarrow \v_2\} \sqcup \overline{\E})$. We depict all possible skeletons in Table \ref{Table:5}. Note that we consider the cases in which applying Meek function orients edges outside the sub-graph  $\overline{\E}[Neigh(v)]$. Therefore, in all of these cases $\v_3 \notin neigh(\v)$. Additionally, in all of these cases, the red edges are the ones that have been oriented as a result of applying Meek function $M_{14}$ on $\v_1 \rightarrow \v_2 $, and as can be seen, these edges are also oriented as a result of applying Meek functions $M_{14}$ on $\v \rightarrow \v_2 $. Based on the Proposition \ref{Prop:DP}, for discovering more edges' orientation outside the sub-graph  $\overline{\E}[Neigh(v)]$, we must compute DP function in the sub-graph $\overline{\E}[Neigh(v_3)]$ ,where only red edges are oriented. Thus, as we depict all possible cases, we conclude that every edges that will be oriented  outside the sub-graph  $\overline{\E}[Neigh(v)]$ as a result of $M_{14}(\{\v_1 \rightarrow \v_2\} \sqcup \overline{\E})$ is also in $M_{14}(\{\v \rightarrow \v_2\} \sqcup \overline{\E}) \subseteq \SET{T}_1$. Thus, we have $\overline{\SET{T}}_3 \subseteq \overline{\E}[Neigh(v)]$.
 In addition, based on the third relation in Lemma \ref{lem:entercliqueorientation}, no edge that is between node $\v$ and clique $\SET{C}_k$ will be oriented in $\SET{T}_3^1$. Thus, we can write:
\begin{align*}
\overline{\SET{T}}_3 \subseteq \overline{\E}[Neigh(v)] \backslash \{ \v \line \v_o |\v_o \in \SET{C}_k\backslash\SET{I}\}
\end{align*}
 

\begin{table}[]
	\begin{center}
		\caption{Different orientations for applying Meek function on $\v_i \rightarrow \v$}
		\label{Table:5}
		\begin{tabular}{|c|c|c|}
			\hline	
			\multicolumn{1}{|c|}{Case 1} & \multicolumn{1}{|c|}{Case 2} & \multicolumn{1}{|c|}{Case 3} \\ 
			\hline
			
			\begin{tikzpicture}[thick, scale=.6]
			\node (a) at (1,2.35) {$\v_i$};
			\node (a) at (-.45,1) {$\v_1$};
			\node (a) at (2.45,1) {$\v$};
			\node (a) at (-1.45,0) {$\v_4$};
			\node (a) at (1.6,0) {$\v_2$};
			\node (a) at (0,-1.45) {$\v_3$};
			\draw[] (1,2) circle (1.5pt);
			\draw[] (0,1) circle (1.5pt);
			\draw[] (2,1) circle (1.5pt);
			\draw[] (-1,0) circle (1.5pt);
			\draw[] (1,0) circle (1.5pt);
			\draw[] (0,-1) circle (1.5pt);
			\draw[midarrow]				(1,2) -- (2,1);
			\draw[]				(1,2) -- (0,1);
			\draw[midarrow]				(2,1) -- (1,0);
			\draw[]		(0,1) -- (2,1);
			\draw[]				(1,2) -- (2,1);
			\draw[midarrow]				(0,1) -- (1,0);
			\draw[midarrow=red]				(1,0) -- (0,-1);
			\draw[midarrow=red]				(-1,0) -- (0,-1);
			\draw[]				(-1,0) -- (1,0);
			\draw[midarrow]				(0,1) -- (-1,0);
			\end{tikzpicture}
			
			&
			
			\begin{tikzpicture}[thick, scale=.6]
			\node (a) at (1,2.35) {$\v_i$};
			\node (a) at (-.45,1) {$\v_1$};
			\node (a) at (2.45,1) {$\v$};
			\node (a) at (-1.45,0) {$\v_4$};
			\node (a) at (1.6,0) {$\v_2$};
			\node (a) at (0,-1.45) {$\v_3$};
			\draw[] (1,2) circle (1.5pt);
			\draw[] (0,1) circle (1.5pt);
			\draw[] (2,1) circle (1.5pt);
			\draw[] (-1,0) circle (1.5pt);
			\draw[] (1,0) circle (1.5pt);
			\draw[] (0,-1) circle (1.5pt);
			\draw[midarrow]				(1,2) -- (2,1);
			\draw[]				(1,2) -- (0,1);
			\draw[midarrow]				(2,1) -- (1,0);
			\draw[]		(0,1) -- (2,1);
			\draw[]				(1,2) -- (2,1);
			\draw[midarrowminus]				(0,1) -- (1,0);
			\draw[midarrow=red]				(1,0) -- (0,-1);
			\draw[midarrow=red]				(-1,0) -- (0,-1);
			\draw[]				(-1,0) -- (1,0);
			\draw[]				(0,1) -- (-1,0);
			\draw[]				(-1,0) -- (2,1);
			\end{tikzpicture}
			&
			\begin{tikzpicture}[thick, scale=.6]
			\node (a) at (1,2.35) {$\v_i$};
			\node (a) at (-.45,1) {$\v_1$};
			\node (a) at (2.45,1) {$\v$};
			\node (a) at (1.6,0) {$\v_2$};
			\node (a) at (0,-1.45) {$\v_3$};
			\draw[] (1,2) circle (1.5pt);
			\draw[] (0,1) circle (1.5pt);
			\draw[] (2,1) circle (1.5pt);
			\draw[] (1,0) circle (1.5pt);
			\draw[] (0,-1) circle (1.5pt);
			\draw[midarrow]				(1,2) -- (2,1);
			\draw[]				(1,2) -- (0,1);
			\draw[midarrow]				(2,1) -- (1,0);
			\draw[]		(0,1) -- (2,1);
			\draw[]				(1,2) -- (2,1);
			\draw[midarrow]				(0,1) -- (1,0);
			\draw[midarrow=red]				(1,0) -- (0,-1);
			\end{tikzpicture}
			\\
			\hline	
		\end{tabular}
	\end{center}
\end{table}

For relation (6), based on Lemma \ref{lem:entercliqueorientation}, the result of applying Meek function $M_{14}$ on set $\{\v \rightarrow \v_o\} \cup \overline{\E}$ cannot orient any edge inside the sub-graph $\overline{\E}[Neigh(v)]\backslash \{\v \rightarrow\v_o\}$. According to these two relations, we can write:
\begin{align*}
&\SET{T}_3 \cap \SET{T}_4 = \emptyset.
\end{align*}
To wrap up, the following equation holds:
\begin{align*}
&\SET{T}_1 \cap \left ( \SET{T}_2 \sqcup \SET{T}_3 \sqcup \SET{T}_4 \right) = \emptyset, \\
&\SET{T}_2 \cap \left ( \SET{T}_3 \sqcup \SET{T}_4 \right) = \emptyset, \\
&\SET{T}_3 \cap \SET{T}_4 = \emptyset.
\end{align*}
Hence, we will have:
\begin{align*}
|M_{124}(\E)| = |\SET{T}_1|+|\SET{T}_2| +|\SET{T}_3|+|\SET{T}_4|.
\end{align*}
Now, we can exactly compute the number of oriented edges after applying Meek function $M_{124}$ on set $\E$. We can compute $\SET{T}_1$ from the DP table. $\SET{T}_2$ can also be obtained easily. Now, we want to find the minimum value for the size of sets $\SET{T}_3$ and $\SET{T}_4$. First, we consider the set $\SET{T}_3$. Recall that the definitions of $P_j$ and $Q_j$ are:
\begin{align*}
&P_j \triangleq \underset{ |\SET{I}|=j,\SET{I}\subset \SET{C}_k\;\;\; } {min}\underset{\v_i \in \SET{I}} {max \;}\left | \left (DP [\v_i \rightarrow \v ] \underset{\v_o \in neigh(\v_i)\cap neigh(\v)\backslash\SET{C}_k}{\bigsqcup}  \{ \v_i \rightarrow \v_o  \}\right) \backslash\SET{T}_1 \right| \\
&Q_j \triangleq \underset{|\SET{O}|=j,\SET{O}\subset \SET{C}_k\;\;\; } {min}\underset{\v_o \in \SET{O}} {max \;} | DP [\v \rightarrow \v_o]\backslash\SET{T}_1|.
\end{align*}

Now, it can be seen that:
\begin{align*}
\underset{\v_i \in \SET{I}} {max \;} &\left | \left (DP [\v_i \rightarrow \v ] \underset{\v_o \in neigh(\v_i)\cap neigh(\v)\backslash\SET{C}_k}{\bigsqcup}  \{ \v_i \rightarrow \v_o  \}\right) \backslash\SET{T}_1 \right| \leq \\
&\left |  \left(\underset{\v_i \in \SET{I}}{\bigsqcup} DP [\v_i \rightarrow \v ] \bigsqcup \left \{\v_i \rightarrow \v_o |  \v_i \in I, \v_o \in  neigh(\v) \cap neigh(\v_i)\backslash\SET{C}_k \right \} \right ) \backslash \SET{T}_1 \right |.
\end{align*}

But the right hand side of the above inequality is $|\SET{T}_3|$. Hence, we can imply that $P_j\leq |\SET{T}_3|$. Similarly, we can show that $Q_j\leq |\SET{T}_4|$.


Finally, we will have:
\begin{align*}
&\left| \SET{T}_1 \right| = |\SET{R}|,\\
&\left| \SET{T}_2\right | = |\SET{I}|(|\SET{C}_k|-|\SET{I}|),\\
&\left| \SET{T}_3 \right| \geq P_j, \\ 
&\left|\SET{T}_4  \right| \geq Q_j.
\end{align*}
Thus, when the set $\SET{I}$ is known, we will have:
 \begin{align*}
&|M_{124}(\E)| \geq |\SET{R}| + P_{|I|} + Q_{(|\SET{C}_k|-|I|)} + |I|(|\SET{C}_k|-|I|)+(|\SET{C}_k|-2).
\end{align*}
Note that for obtaining $P_{|I|}$ and $Q_{(|\SET{C}_k|-|I|)}$, we just consider two edges between the $\SET{C}_k$ and $v$. Therefore, the last term is for those remaining $|\SET{C}_k|-2$ edges that are between clique and node $\v$ and they  will definitely be oriented after intervention. Note that, based on the second and third relation, these edges have not been considered in computing $Q_j$ and $P_j$ values and also they have not been oriented in set $\SET{R}$. If we consider all possible sets for set $\SET{I}$, where $|\SET{I}|=l$, we will have:
\begin{align*}
&|M_{124}(\E)| \geq L_{C} = |\SET{R}| +  \underset{l = \{1,...,|\SET{C}_k|-1\} } {\min    } P_l + Q_{|\SET{C}_k|-l} + |l|(|\SET{C}_k|-|l|)+(|\SET{C}_k|-2).
\end{align*}
Finally we can infer that:
\begin{align*}
&L(\SET{C}_k,v) = \min(L_{I} ,L_{C}).
\end{align*}

\subsection*{R. Proof of Theorem \ref{thm:LowerNode}}
After intervention on node $\v$, we will discover orientations of edges in the neighborhood of intervened node $\v$. We denote the obtained ICCG by $G^\prime=(\V,\E^\prime)$. Additionally, we denote set of nodes that have edges toward node $\v$ with $\SET{I}$. Furthermore, we denote the set of all maximal cliques in the neighborhood of node $\v$ by $\SET{C}(\v)$. As edges that are orientated from intervention do not generate a new v-structure in the graph, there exists a maximal clique $\SET{C}_k \in \SET{C}(\v)$, such that $\SET{I} \subseteq \SET{C}_k$.

After intervention, two cases can be considered: (a) All edges are outgoing from node $\v$ toward the nodes in the neighborhood of node $\v$ (b) At least there exists one edge from a node in the neighborhood of node $\v$ to node $\v$. In the first case, i.e., $\SET{I} = \emptyset$, using Theorem \ref{Thm:InterventionM124}, the number of oriented edges after intervention is equal to the following equation:
\begin{center}
	\begin{align*}
	|M_{124}(\E^\prime)| = \Bigg| \underset{\v_o \in neigh(\v)}{\bigsqcup} DP [\v \rightarrow \v_o]  \Bigg|.
	\end{align*}
\end{center}

In the second case, i.e., $\SET{I} \subseteq \SET{C}_k$ and $|\SET{I}|>0$, based on Lemma \ref{lem:LBMC}, the lower bound on number of oriented edges can be written as follows:
\begin{center}
	\begin{align*}
	|M_{124}(\E^\prime)| \geq \underset{\SET{C}_k \in \SET{C}(\v)}{min} L(\SET{C}_k,\v).
	\end{align*}
\end{center}

Hence, the lower bound on number of oriented edges after intervention on node $\v$, $L(v)$, can be calculated as the following:
\begin{center}
	\begin{align*}
	&|M_{124}(\E^\prime)| \geq L(v) = min \Bigg(   \Bigg| \underset{\v_o \in neigh(\v)}{\bigsqcup} DP [\v \rightarrow \v_o]  \Bigg|, 
	\underset{\SET{C}_k \in \SET{C}(\v)}{min} L(\SET{C}_k,\v)
	\Bigg).
	\end{align*}
\end{center}

\subsection*{S. Proof of Theorem \ref{thm:Consistency}}
We know the PCCG $G$ consists of the combination of both directed and undirected edges. For the ``only if" part, we know that there is a causal DAG with the same skeleton as graph $G$, including directed edges in $\E$. We can imply that there is no cycle in $\E$ and all edges in $\E$ are consistent. For the "if" part, we know that there is no cycle in $\E$ and the set $\E$ is consistent. We want to show that there will be a causal DAG with the same skeleton as graph $G$, including directed edges in $\E$.

To prove this, we will use an algorithm that constructs a DAG from our graph $G$, if the mentioned constraints are satisfied. Algorithm \ref{Alg:DAGConstruction} takes $G$ as its input and returns a causal DAG, including no v-structure if the constraints are satisfied. This algorithm guarantees that generated DAG contains all directed edges in $\E$ and has the same skeleton as $\overline{\E}$ and it has no v-structure. In Line 5 in Algorithm \ref{Alg:DAGConstruction}, we apply Meek functions $M_{124}$ on set $\E$. Through Lines 6-9, we orient undirected edges that are existed in set $\E$. In Line 7, we select an arbitrary node, and in Line 8, we orient all undirected edges in the neighborhood of this node as out-going edges. Thus, all the edges in the neighborhood of this node will be oriented and we denote the set of these oriented edges by $\SET{S}$. In Line 9, we obtain the result of applying Meek functions $M_{124}$ on set $\SET{S} \sqcup \overline{\E}$. Also, we add the oriented edges to the output DAG $\mathscr{G}$.

\begin{algorithm}
	\caption{Causal DAG Construction Algorithm}
	\label{Alg:DAGConstruction}
	\begin{algorithmic}[1]
		\State \textbf{Input:} PCCG $G=(\V,\E)$		
		\State \textbf{Output:} DAG $\mathscr{G}=(\V_\mathscr{G},\E_\mathscr{G})$

		\Function{$PDAG2DAG$}{$G$}
		\State $\V_\mathscr{G} \leftarrow \V$
		\State $\E_{\mathscr{G}} \leftarrow M_{124}(\E)$
		\While{$|\E| \neq | \overrightarrow{\E}_\mathscr{G}|$} 
		\State $v \leftarrow $ an arbitrary node from $\{v_l|   \exists\v_k \in {\V}_\mathscr{G},v_l \line v_k \in {\E}_\mathscr{G} \}$
		\State $\SET{S}\leftarrow\{\v_i\rightarrow \v|\v_i \rightarrow \v \in \E_\mathscr{G} \}\sqcup \{\v \rightarrow \v_o|\v \rightarrow \v_o \in \E_\mathscr{G} \text{ or } \v \line \v_o \in \E_\mathscr{G}\}$
		\State $\E_\mathscr{G} \leftarrow M_{124}({\E}_\mathscr{G}\sqcup \SET{S})$
		\EndWhile 
		\State \Return $\mathscr{G}$
		\EndFunction
	\end{algorithmic}
\end{algorithm}

Based on our assumptions, we know that PDAG $G$ has no cycle. Additionally, the following lemma expresses it has no v-structure.

\begin{lemma}
	\label{lem:vstructureconsistency} For any graph $G=(\V,\E)$, there is no v-structure in $\E$ if $\E$ is a consistent set.
\end{lemma}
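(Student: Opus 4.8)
The plan is to argue by contradiction: assume $\E$ is a consistent set yet contains a v-structure, and extract a violation of the consistency condition directly from Meek rule 1.

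First I would unwind the definitions. A v-structure in $\E$ is an induced sub-graph $\v_1 \rightarrow \v_2 \leftarrow \v_3$ with $\v_1 \notin neigh(\v_3)$; in particular both $\v_1 \rightarrow \v_2$ and $\v_3 \rightarrow \v_2$ lie in $\overrightarrow{\E}$, and $\v_1 \line \v_2,\ \v_3 \line \v_2 \in \overline{\E}$ while there is no edge between $\v_1$ and $\v_3$ in $\overline{\E}$. Recall that $DP[\v_s \rightarrow \v_d] = M_{14}(\{\v_s \rightarrow \v_d\} \sqcup \overline{\E})$, and that a Meek function always returns a set containing the directed edges it was given, so trivially $\v_1 \rightarrow \v_2 \in DP[\v_1 \rightarrow \v_2]$.

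Next I would locate the offending orientation inside $DP[\v_3 \rightarrow \v_2]$. In the graph $\{\v_3 \rightarrow \v_2\} \sqcup \overline{\E}$ the triple on nodes $\v_3,\v_2,\v_1$ is exactly the candidate sub-graph for Meek rule 1, namely $\v_3 \rightarrow \v_2 \line \v_1$ with $\v_1 \notin neigh(\v_3)$. Hence applying $M_1$ — and therefore $M_{14}$ — orients $\v_2 \rightarrow \v_1$, so $\v_2 \rightarrow \v_1 \in M_{14}(\{\v_3 \rightarrow \v_2\} \sqcup \overline{\E}) = DP[\v_3 \rightarrow \v_2]$.

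Finally I would apply the definition of consistent edges to the pair $\v_1 \rightarrow \v_2,\ \v_3 \rightarrow \v_2 \in \overrightarrow{\E}$: with the choice $\v_k \rightarrow \v_l := \v_1 \rightarrow \v_2 \in DP[\v_1 \rightarrow \v_2]$, consistency demands $\v_l \rightarrow \v_k = \v_2 \rightarrow \v_1 \notin DP[\v_3 \rightarrow \v_2]$, which contradicts the previous step. Since $\E$ is a consistent set, every pair of its directed edges is consistent, so no v-structure can occur. The argument is short; the only points requiring care are that the $DP$ table and the notion of consistency are taken with respect to the skeleton $\overline{\E}$ (so seeding with a single directed edge is legitimate), and that the non-adjacency of the two parents in a v-structure is precisely the side condition that triggers Meek rule 1. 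There is essentially no computational obstacle.
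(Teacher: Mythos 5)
Your proof is correct and follows essentially the same route as the paper's: assume a v-structure, apply Meek rule 1 to one of the two converging edges to obtain an orientation of the shared edge's neighbor that conflicts with the other converging edge, and contradict consistency. The paper seeds the DP entry with the other parent edge (orienting $\v_j \rightarrow \v_k$ from $DP[\v_i \rightarrow \v_j]$ and conflicting with $\v_k \rightarrow \v_j$ trivially sitting in its own DP entry), which is just the mirror image of your argument; your write-up is the more explicit of the two.
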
 
	
\begin{proof}
We prove this by contradiction. Suppose a v-structure like $\{v_i \rightarrow \v_j,\v_j \leftarrow \v_k\}$ exists in $\E$. Thus, we have $v_j \rightarrow \v_k \in DP [v_i \rightarrow \v_j]$. This is a contradiction because $\E$ is a consistent set.
\end{proof}
It suffices to show no new cycle and no new v-structure will be generated during running Algorithm \ref{Alg:DAGConstruction}. First, we will show that no v-structure will be appeared in the output graph of Algorithm \ref{Alg:DAGConstruction}. 

 To do so, we will prove that orienting further edges, either by applying Meek function $M_{124}$ on set $\E$ or adding new directed edges such as those in the constructing set $\SET{S}$ in Algorithm \ref{Alg:DAGConstruction}, does not generate any new v-structure. Before providing the proof, we state the following lemma that helps in proving mentioned statement.

\begin{figure}[ht]
	\begin{center}
		\begin{tabular}{ccc}
			
			\\
			\begin{tabular}{c}
				\begin{tikzpicture}[thick, scale=.6]
				\node (a) at (-.45,0) {$\v_s$};
				\node (b) at (1.5,1.4) {$\v_d$};
				\node (d) at (3.4,0) {$v_j$};
				\draw[fill=black] (0,0) circle (1.5pt);
				\draw[fill=black] (1.5,1) circle (1.5pt);
				\draw[fill=black] (3,0) circle (1.5pt);
				\draw[midarrow]   (0,0) -- (1.5,1);
				\draw[midarrow=blue]			  (1.5,1) -- (3,0);
				\end{tikzpicture}
			\end{tabular}	
			&
			
			\begin{tabular}{c}
				\begin{tikzpicture}[thick, scale=.6]
				\node (a) at (-.45,0) {$\v_s$};
				\node (b) at (1.5,1.4) {$\v_d$};
				\node (d) at (3.4,0) {$v_j$};
				\draw[fill=black] (0,0) circle (1.5pt);
				\draw[fill=black] (1.5,1) circle (1.5pt);
				\draw[fill=black] (3,0) circle (1.5pt);
				\draw[midarrow,blue] (1.5,1) -- (0,0);
				\draw[midarrow]		 (3,0) -- 	  (1.5,1);
				\end{tikzpicture}
			\end{tabular}	
			&
			
			\begin{tabular}{c}
				\begin{tikzpicture}[thick, scale=.6]
				\node (a) at (-.45,0) {$\v_s$};
				\node (b) at (1.5,1.4) {$\v_d$};
				\node (c) at (1.5,-1.4) {$v_i$};
				\node (d) at (3.4,0) {$v_j$};
				\draw[fill=black] (0,0) circle (1.5pt);
				\draw[fill=black] (1.5,-1) circle (1.5pt);
				\draw[fill=black] (1.5,1) circle (1.5pt);
				\draw[fill=black] (3,0) circle (1.5pt);
				\draw[midarrow]   (0,0) -- (1.5,1);
				\draw[]	  (0,0) -- (1.5,-1);
				\draw[midarrow,blue]			  (1.5,1) -- (3,0);
				\draw[midarrow,blue]			  (1.5,-1) -- (3,0);
				\draw[]			  (1.5,-1) -- (1.5,1);
				\end{tikzpicture}
			\end{tabular}	
			\\
			(a) & (b) & (c)
			\\
					\begin{tabular}{c}
				\begin{tikzpicture}[thick, scale=.6]
				\node (a) at (-.45,0) {$\v_s$};
				\node (b) at (1.5,1.4) {$\v_d$};
				\node (c) at (1.5,-1.4) {$v_i$};
				\node (d) at (3.4,0) {$v_j$};
				\draw[fill=black] (0,0) circle (1.5pt);
				\draw[fill=black] (1.5,-1) circle (1.5pt);
				\draw[fill=black] (1.5,1) circle (1.5pt);
				\draw[fill=black] (3,0) circle (1.5pt);
				\draw[midarrow,blue]   (1.5,1) -- (0,0);
				\draw[midarrow,blue]   (1.5,-1) -- 	  (0,0);
				\draw[midarrow]			(3,0)   --   (1.5,1);
				\draw[]			  (1.5,-1) -- (3,0);
				\draw[]			  (1.5,-1) -- (1.5,1);
				\end{tikzpicture}
			\end{tabular}
			&
			
			\begin{tabular}{c}
				\begin{tikzpicture}[thick, scale=.6]
				\node (a) at (-.45,0) {$\v_s$};
				\node (b) at (1.5,1.4) {$\v_d$};
				\node (c) at (1.5,-1.4) {$v_i$};
				\node (d) at (3.4,0) {$v_j$};
				\draw[fill=black] (0,0) circle (1.5pt);
				\draw[fill=black] (1.5,-1) circle (1.5pt);
				\draw[fill=black] (1.5,1) circle (1.5pt);
				\draw[fill=black] (3,0) circle (1.5pt);
				\draw[midarrow,blue]   (1.5,1) -- (0,0);
				\draw[midarrow,blue]   (1.5,-1) -- 	  (0,0);
				\draw[]			(3,0)   --   (1.5,1);
				\draw[midarrow]  (3,0)  -- (1.5,-1);
				\draw[]			  (1.5,-1) -- (1.5,1);
				\end{tikzpicture}
			\end{tabular}
	
			&
			\\
			(d) & (e) & 
		\end{tabular}
		\caption{reverse orientation of edges oriented by applying function $M_1$ or $M_4$}
		\label{Fig:OppositeDirection}
	\end{center}
\end{figure}
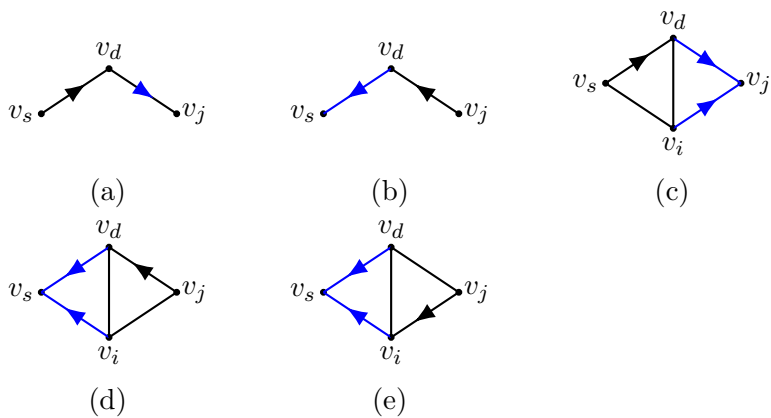

\begin{lemma}
	\label{lem:OppositeDirection} Consider a graph $G=(\V,\E)$ with set of nodes $\V$ and set of edges $\E$. If we have $\v_i \rightarrow \v_j \in DP [\v_k \rightarrow \v_l]$, we will have $\v_l \rightarrow \v_k \in DP [\v_j \rightarrow \v_i]$.
\end{lemma}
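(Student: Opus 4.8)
The plan is to prove the statement by induction on the number of Meek‑rule applications that $M_{14}$ performs before it orients $\v_i \rightarrow \v_j$ when run on $\{\v_k \rightarrow \v_l\} \sqcup \overline{\E}$ (recall $DP[\v_s \rightarrow \v_d] = M_{14}(\{\v_s \rightarrow \v_d\} \sqcup \overline{\E})$). Two auxiliary facts will be used throughout. The first is a ``reachability is transitive'' statement: if $e' \in DP[e]$ then $DP[e'] \subseteq DP[e]$. This holds because $DP[e] = M_{14}(\{e\} \sqcup \overline{\E})$ already contains $e'$ and has skeleton $\overline{\E}$, so by monotonicity of $M_{14}$ together with idempotence ($M_{14}(M_{14}(\cdot)) = M_{14}(\cdot)$) we get $DP[e'] = M_{14}(\{e'\} \sqcup \overline{\E}) \subseteq M_{14}(M_{14}(\{e\} \sqcup \overline{\E})) = DP[e]$. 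The second fact is the elementary ``mirror'' behaviour of Meek rules $1$ and $4$ depicted in Figure \ref{Fig:OppositeDirection}.

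Base case ($0$ steps): here $\v_i \rightarrow \v_j = \v_k \rightarrow \v_l$, and since the seed edge is always present in the output, $\v_l \rightarrow \v_k = \v_j \rightarrow \v_i \in DP[\v_j \rightarrow \v_i]$ trivially.

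Inductive step: consider the rule that orients $\v_i \rightarrow \v_j$ last; since $M_{14}$ uses only rules $1$ and $4$, there are two cases. If it is Meek rule $1$, it fires on a directed edge $\v_p \rightarrow \v_i$ (oriented in strictly fewer steps) with $\v_i \line \v_j \in \overline{\E}$ and $\v_p \notin neigh(\v_j)$. Seeding instead with $\v_j \rightarrow \v_i$, Meek rule $1$ orients $\v_i \rightarrow \v_p$, because its premises $\v_i \line \v_p \in \overline{\E}$ and $\v_j \notin neigh(\v_p)$ are exactly those above; hence $\v_i \rightarrow \v_p \in DP[\v_j \rightarrow \v_i]$. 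The induction hypothesis applied to $\v_p \rightarrow \v_i$ gives $\v_l \rightarrow \v_k \in DP[\v_i \rightarrow \v_p]$, and combining with the transitivity fact yields $\v_l \rightarrow \v_k \in DP[\v_j \rightarrow \v_i]$ (Figure \ref{Fig:OppositeDirection}(a)--(b)). If the last rule is Meek rule $4$, it fires using a directed two‑path $\v_s \rightarrow \v_d \rightarrow \v_j$ (both edges oriented in strictly fewer steps) and a node $\v_i$ adjacent to $\v_s$, $\v_d$ and $\v_j$, with $\v_i \line \v_j \in \overline{\E}$ and $\v_s \notin neigh(\v_j)$. Seeding with $\v_j \rightarrow \v_i$: Meek rule $1$ first orients $\v_i \rightarrow \v_s$ (valid since $\v_i \line \v_s \in \overline{\E}$ and $\v_j \notin neigh(\v_s)$), producing the two‑path $\v_j \rightarrow \v_i \rightarrow \v_s$; then Meek rule $4$ with fourth node $\v_d$ (adjacent to $\v_j$, $\v_i$, $\v_s$, and with $\v_j \notin neigh(\v_s)$) orients $\v_d \rightarrow \v_s$, so $\v_d \rightarrow \v_s \in DP[\v_j \rightarrow \v_i]$. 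The induction hypothesis applied to $\v_s \rightarrow \v_d$ gives $\v_l \rightarrow \v_k \in DP[\v_d \rightarrow \v_s]$, and transitivity again yields $\v_l \rightarrow \v_k \in DP[\v_j \rightarrow \v_i]$ (Figure \ref{Fig:OppositeDirection}(c)--(e)).

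The step I expect to be most delicate is the Meek rule $4$ case: one has to make sure every premise needed for the reversed rule‑$1$‑then‑rule‑$4$ firing is supplied by the premises of the forward firing (in particular the non‑adjacencies that rule out new v‑structures), and --- if one insists on phrasing rule $4$ through its candidate sub‑graph of Figure \ref{Fig:Alg}(a) rather than its ``corresponding'' sub‑graph --- to split into the sub‑cases according to which of the extra vertices is adjacent to which, exactly as enumerated in Figure \ref{Fig:OppositeDirection}. Everything else is bookkeeping, and the whole argument needs neither chordality nor the absence of v‑structures, matching the generality ``for any graph $G$'' in the statement.
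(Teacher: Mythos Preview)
Your proof is correct and follows essentially the same approach as the paper's. Both arguments establish the elementary reversal property for a single application of rule~1 and rule~4 (precisely the pictures in Figure~\ref{Fig:OppositeDirection}), then chain these one-step reversals along the sequence of rule applications that produced $\v_i \rightarrow \v_j$ from the seed $\v_k \rightarrow \v_l$. Your version simply makes the induction and the transitivity step $DP[e'] \subseteq DP[e]$ explicit, whereas the paper compresses the chaining into the single sentence ``we can reverse the sequence of applying Meek functions on each candidate sub-graph.''
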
 

\begin{proof}
	First, we prove that this lemma holds for candidate sub-graphs for Meek rule 1 and 4. Then, we extend this result for applying Meek function $M_{14}$ on a general graph. In Figure \ref{Fig:OppositeDirection}, the blue edges are oriented as the result of applying Meek function on black edges. We depict the candidate sub-graph for Meek rule 1 in the Figure \ref{Fig:OppositeDirection}(a). In this graph, the directed edge $\v_s \rightarrow \v_d$ orients the edge $\v_d \rightarrow \v_j$. According to Figure \ref{Fig:OppositeDirection}(b), if we orient the edge $\v_d \rightarrow \v_j$ in reverse direction, i.e., $\v_j \rightarrow \v_d$, the edge $\v_s \rightarrow \v_d$ will be oriented in the reverse direction, i.e., $\v_d \rightarrow \v_s$. 
	
	The candidate sub-graph for Meek rule 4 is depicted in Figure \ref{Fig:OppositeDirection}(c). In this sub-graph, by applying Meek function $M_4$, directed edge $\v_s \rightarrow \v_d$ will orient two edges $\v_i \rightarrow \v_j$ and $\v_d \rightarrow \v_j$. According to the graphs in Figure \ref{Fig:OppositeDirection}(d)(Figure \ref{Fig:OppositeDirection}(e)), if we orient the edge $\v_d \rightarrow \v_j$($\v_i \rightarrow \v_j$) in reverse direction, i.e., $\v_j \rightarrow \v_d$($\v_j \rightarrow \v_i$), the edge $\v_s \rightarrow \v_d$ will be oriented in the reverse direction, i.e., $\v_d \rightarrow \v_s$. Hence, we proved that this lemma holds for candidate sub-graphs of Meek function $M_1$ and $M_4$.
	
Based on the above observation, we will show the statement in the lemma holds in a general graph. Assume we know $\v_i \rightarrow \v_j \in DP [\v_k \rightarrow \v_l]$. The edge $\v_i \rightarrow \v_j$ has been oriented in a repetitive sequence of applying Meek function $M_1$ and $M_4$ in the corresponding candidate sub-graphs. Based on the above results, we can reveres the sequence of applying Meek functions on each candidate sub-graphs in order to orient the edge $\v_l \line \v_k$ as $\v_l \rightarrow \v_k$, and the proof is complete.
\end{proof}

Based on the Lemma \ref{lem:vstructureconsistency}, if we show that the set of directed edges after execution of Algorithm \ref{Alg:DAGConstruction} is consistent, we can conclude no v-structure exists in the output DAG. To do so, we prove following three facts: 1) The set of directed edges as a result of applying Meek function $M_{124}$ in Line 5 is consistent.
2) The set $\SET{S}$ is consistent with already directed edges. 3) The set $M_{124}({\E}_\mathscr{G}\sqcup \SET{S})$ is consistent.

1) We prove this by induction on the number of steps taken by applying Meek function $M_{124}$. For the base case, it is obvious that the edges in $\E$ are consistent. For the induction step, suppose that the directed edges up to step $r$ are consistent. We show that the directed edges will be remained consistent after step $r+1$.
We prove this by contradiction. Assume that we orient new edge $\v_i \rightarrow \v_j$ in step $r+1$ by applying one of the Meek functions and there exists an already directed edge $\v_r \rightarrow \v_t$, such that for some edges $\v_l \line \v_k \in \overline{\E}$  we have $\v_l \rightarrow \v_k \in DP [\v_i \rightarrow \v_j]$, but $\v_k \rightarrow \v_l \in DP [\v_r \rightarrow \v_t]$. Two cases can be considered:
\begin{itemize}
	\item In the first case, we have: $\v_i \rightarrow \v_j \in M_{14}(\E)$. Thus, there exists an edge $\v_q \rightarrow \v_w$ such that $\v_i \rightarrow \v_j \in DP [\v_q \rightarrow \v_w]$ and as a result $\v_l \rightarrow \v_k \in DP [\v_q \rightarrow \v_w]$. This is a contradiction, because we know the edges $\v_q \rightarrow \v_w$ and $\v_r \rightarrow \v_t$ are consistent.
	\item In this case, there exists a Meek rule 2 candidate sub-graph, such that we have:  $\v_i \rightarrow \v_j \in M_{2}(\{\v_i \rightarrow \v_m,\v_m \rightarrow \v_j,\v_i \line \v_j\})$. It can be shown that if $\v_l \rightarrow \v_k \in DP [\v_i \rightarrow \v_j]$, we will have $\v_l \rightarrow \v_k \in DP [\v_i \rightarrow \v_m] \sqcup DP [\v_m \rightarrow \v_j]$ according to Property 4 in Lemma \ref{lem:MeekPropertiesPCCG}. This is a contradiction, because the edges $\v_i \rightarrow \v_m$, $\v_m \rightarrow \v_j$ and $\v_r \rightarrow \v_t$ are consistent.
	\end{itemize}
	
	2) In this case, we want to prove that the oriented edges in $\SET{S}$ are consistent with already directed edges. We prove this by contradiction.  Assume we orient the edge  $\v \line \v_j$ as $\v \rightarrow \v_j$ in the procedure of orienting edges in $\SET{S}$ and there exists an already directed edge $\v_r \rightarrow \v_t$, such that $\v_l \rightarrow \v_k \in DP [\v \rightarrow \v_j]$ and $\v_k \rightarrow \v_l \in DP [\v_r \rightarrow \v_t]$. Based on the Lemma \ref{lem:OppositeDirection}, if we have $\v_l \rightarrow \v_k \in DP [\v \rightarrow \v_j]$, we will have $\v_j \rightarrow \v \in DP [\v_k \rightarrow \v_l]$. Having the relations $\v_j \rightarrow \v \in DP [\v_k \rightarrow \v_l]$ and $\v_k \rightarrow \v_l \in DP [\v_r \rightarrow \v_t]$, $\v_j \line \v$ should be already oriented as $\v_j \rightarrow \v$ when we apply Meek function $M_{124}$ in Line 5 and Line 9, which is a contradiction.
	
	3) In part 2, we stated that the union of set $\SET{S}$ and set ${\E}_\mathscr{G}$ is consistent. Thus, based on part 1, oriented edges as a result of applying Meek function $M_{124}$ on ${\E}_\mathscr{G} \sqcup \E$ is also consistent. Therefore, the proof is complete.

Next, we will show that no cycle will be generated in the output graph $\E_{\mathscr{G}}$ during the execution of Algorithm \ref{Alg:DAGConstruction}.
In the following lemma, we first prove that we do not have any directed cycle in this graph if there is no directed cycle of length three.

\begin{lemma}
	\label{lem:CycleInChordal} There is no directed cycle in a chordal graph if there exists no directed cycle of length three in that chordal graph.
\end{lemma}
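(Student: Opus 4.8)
The plan is to prove Lemma 38 (the statement ``There is no directed cycle in a chordal graph if there exists no directed cycle of length three in that chordal graph'') by induction on the length of a hypothetical shortest directed cycle, using chordality to shorten it. The statement should of course be read in context: we are dealing with a PDAG whose skeleton is chordal, and we want to rule out directed cycles assuming no directed triangle exists.

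\textbf{Setup and strategy.} Suppose, for contradiction, that the graph contains at least one directed cycle. Among all directed cycles, pick one of minimal length $m$. By hypothesis $m \neq 3$, and clearly $m \geq 3$ (a directed $2$-cycle would contradict the standing assumption of at most one edge between two nodes), so $m \geq 4$. Write the cycle as $\v_{1} \rightarrow \v_{2} \rightarrow \cdots \rightarrow \v_{m} \rightarrow \v_{1}$. The nodes $\v_1,\dots,\v_m$ together with the cycle edges form a cycle of length $m \geq 4$ in the underlying skeleton $\overline{\E}$. Since the skeleton is chordal, this skeleton cycle has a chord: there exist indices $p$ and $q$ with $1 \le p < q \le m$, $q - p \geq 2$, and $(p,q) \neq (1,m)$, such that $\v_p$ and $\v_q$ are adjacent in the skeleton.

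\textbf{The shortening step.} The chord edge between $\v_p$ and $\v_q$ is either directed or undirected in the PDAG. If it is directed, it splits the $m$-cycle into two strictly shorter closed walks along the skeleton: $\v_p \rightarrow \v_{p+1} \rightarrow \cdots \rightarrow \v_q$ together with the chord, and $\v_q \rightarrow \v_{q+1} \rightarrow \cdots \rightarrow \v_m \rightarrow \v_1 \rightarrow \cdots \rightarrow \v_p$ together with the chord. Depending on the orientation of the chord $\v_p \,\line\, \v_q$, exactly one of these two closed walks becomes a \emph{directed} cycle: if the chord is $\v_q \rightarrow \v_p$, the first walk $\v_p \rightarrow \cdots \rightarrow \v_q \rightarrow \v_p$ is a directed cycle of length $(q-p)+1 < m$; if the chord is $\v_p \rightarrow \v_q$, the second walk is a directed cycle of length $(m-q)+p+1 < m$. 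Either way we contradict minimality of $m$. The remaining case is that the chord $\v_p \,\line\, \v_q$ is undirected. Here I would argue that this situation cannot arise in the graphs produced by Algorithm 33, or more directly, that any \emph{undirected} chord still cannot occur once we observe that the edges of the directed cycle themselves force, via soundness/acyclicity considerations, an orientation on the chord — but the cleaner route, and the one I expect the authors take, is to invoke the fact (from the context of PCCG/PDAG with no v-structure) that whenever $\v_p \,\line\, \v_q$ is undirected while $\v_p \rightarrow \v_{p+1} \rightarrow \cdots \rightarrow \v_q$ is a directed path whose endpoints are adjacent, Meek-type reasoning (essentially Meek rule 1 applied repeatedly, or a minimal-counterexample argument along the path) orients the chord, reducing to the directed-chord case above.

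\textbf{Main obstacle.} The routine part is the length-reduction bookkeeping; the genuinely delicate point is handling the undirected chord. I expect this is where the argument needs care: one must either (i) restrict attention to the precise class of graphs in which the lemma is applied — namely the intermediate graphs of Algorithm 33, where at the moment of interest the relevant edges may already be oriented — or (ii) give a sub-induction on the path length showing that a directed path $\v_p \rightarrow \cdots \rightarrow \v_q$ between skeleton-adjacent endpoints, inside a chordal graph with no directed triangle, must make the chord directed (pick the shortest such path; chordality gives a chord of the path-plus-edge cycle; a directed triangle on three consecutive-ish vertices is excluded by hypothesis, and any other chord shortens the path, so eventually one reaches a triangle, forcing the orientation and then a directed triangle unless the chord points the ``safe'' way). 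I would write the final proof as the outer minimal-cycle induction, dispatching the undirected-chord case by this inner shortest-path argument, and conclude that no directed cycle of minimal length $m \geq 4$ can exist, hence no directed cycle exists at all.
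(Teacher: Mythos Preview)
Your approach is essentially the paper's: take a directed cycle, use chordality of the skeleton to find a chord, and argue that the chord splits the cycle into a strictly shorter directed cycle, iterating down to length three. The paper phrases this as a direct descent (``repeat until a cycle of size three'') rather than via a minimal counterexample, but the content is identical.

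Where you diverge is in your treatment of the undirected chord. The paper's proof simply writes ``based on the direction of edges between nodes $\v_i$ and $\v_j$, we know one of the paths $(\v_1,\dots,\v_i,\v_j,\dots,\v_1)$ or $(\v_i,\dots,\v_j,\v_i)$ will be a directed cycle with length less than $N$'' --- i.e., it tacitly assumes the chord is directed and does not discuss the undirected case at all. Your worry is legitimate (indeed, a $4$-cycle $a\rightarrow b\rightarrow c\rightarrow d\rightarrow a$ with an undirected diagonal $a\line c$ has chordal skeleton, no directed triangle, yet a directed $4$-cycle), but from the paper's standpoint you are being more careful than the authors: they do not supply the Meek-type or context-specific argument you sketch. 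If you want to match the paper, you can drop the undirected-chord discussion; if you want a fully rigorous statement, you would need to add the kind of inner argument you outline, or restrict the lemma to the specific intermediate graphs of the DAG-construction algorithm.
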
 
	
\begin{proof}
	Given a graph $G=(\V,\E)$ with set of nodes $\V$ and set of edges $\E$. Suppose there is a directed cycle $(\v_1,...,\v_i,...,\v_j,...,\v_1)$ with length $N$ in graph $G$. Without loss of generality, we assume that there is a chord between nodes $\v_i$ and $\v_j$. Thus, based on the direction of edges between nodes $\v_i$ and $\v_j$, we know one of the paths $(\v_1,...,\v_i,\v_j,...,\v_1)$ or $(\v_i,...,\v_j,\v_i)$ will be a directed cycle with length less than $N$. Thus, we can obtain a cycle with length less than $N$ from the original directed cycle. We can repeat this procedure until end up with a directed cycle of size three. Hence we can conclude that a cycle of size three exists in a choral graph with directed cycle.
\end{proof}

We know there is no cycle of length 3 in the set $\E$, and we will show that no such cycle will be generated during the execution of Algorithm 6. To do so, we need to prove three facts: 1) No cycle will be generated in Line 5.
2) No cycle will be generated during constructing $\SET{S}$ in Line 8. 3) No cycle will be created as the result of applying $M_{124}$ on ${\E}_\mathscr{G}\sqcup \SET{S}$. We will prove them in the sequel.

 Recall that the directed edges in $\E_{\mathscr{G}}$ remain consistent during execution of Algorithm \ref{Alg:DAGConstruction}. We will use this statement in the following proofs. 
 

1) We prove this by induction on the number of steps taken by applying Meek function $M_{124}$. For the base case, it is obvious that there is no cycle of length three in the set $\E$. For the induction step, suppose that there is no directed cycle of length three up to step $r$. We show that no such a cycle will be generated in step $r+1$.
We show this by contradiction. First, consider Meek function $M_2$. Applying this function on a Meek candidate sub-graph does not generate a cycle on that sub-graph. Therefore, we only consider the case in which the oriented edge as the result of applying Meek function $M_2$ in a candidate sub-graph makes cycle in another sub-graph. In that case we will have the following structure as $\{\v_i \rightarrow \v_j,\v_j \rightarrow \v_k,\v_k \rightarrow \v_l,\v_l \rightarrow \v_i,\v_j \line \v_l\}$. In such sub-graph, we have $\v_l \rightarrow \v_k \in DP[\v_i \rightarrow \v_j]$. This means the mentioned orientations violates the consistency, which is a contradiction.


The all sub-graphs that are able to make a new cycle after applying Meek function $M_1$ or $M_4$ are depicted in Table \ref{Table:ConstencycycleM14}. We check each of these cases and show that none of them can make a cycle. Case 1 and 2 belong to Meek function $M_1$, and the remaining two cases are  for Meek function $M_4$. Black edges are already oriented and the red edge is the one that applying specific Meek function on that edge will orient green edges. We intend to check whether the green edge makes a cycle or not. As the first case violates the consistency, it cannot be happened. In case 2, no cycle is generated. In case 3, similar to case 1, the consistency is violated. Finally, no cycle will be generated in case 4. 

\begin{table}[]
	\begin{center}
		\caption{Different cases that make cycle after applying Meek function $M_1$ or $M_4$}
		\label{Table:ConstencycycleM14}
		\begin{tabular}{|c|c|c|c|}
			\hline	
			\multicolumn{1}{|c|}{Case 1} &\multicolumn{1}{|c|}{Case 2} & \multicolumn{1}{|c|}{Case 3} & \multicolumn{1}{|c|}{Case 4} \\ 
			\hline	
			\multicolumn{2}{|c|}{Meek function $M_1$}  & \multicolumn{2}{|c|}{Meek function $M_4$} \\ 
			\hline
	
			\begin{tikzpicture}[thick, scale=.6]
			\node (a) at (0,1.45) {$\v_1$};
			\node (a) at (-1.45,0) {$\v_4$};
			\node (a) at (1.6,0) {$\v_2$};
			\node (a) at (0,-1.45) {$\v_3$};
			\draw[] (0,1) circle (1.5pt);
			\draw[] (-1,0) circle (1.5pt);
			\draw[] (1,0) circle (1.5pt);
			\draw[] (0,-1) circle (1.5pt);
			\draw[midarrow=red]				(0,1) -- (1,0);
			\draw[midarrow=green]				(1,0) -- (0,-1);
			\draw[midarrow]				(0,-1) -- (-1,0);
			\draw[midarrowminus]				(-1,0) -- (1,0);
			\draw[midarrowminus=green]			(1,0) -- 	(-1,0);
			\end{tikzpicture}
			
			&
					
			\begin{tikzpicture}[thick, scale=.6]
			\node (a) at (0,1.45) {$\v_1$};
			\node (a) at (-1.45,0) {$\v_4$};
			\node (a) at (1.6,0) {$\v_2$};
			\node (a) at (0,-1.45) {$\v_3$};
			\draw[] (0,1) circle (1.5pt);
			\draw[] (-1,0) circle (1.5pt);
			\draw[] (1,0) circle (1.5pt);
			\draw[] (0,-1) circle (1.5pt);
			\draw[midarrow=red]				(0,1) -- (1,0);
			\draw[midarrow=green]				(1,0) -- (0,-1);
			\draw[midarrow]				(0,-1) -- (-1,0);
			\draw[midarrow=green]			(1,0) -- 	(-1,0);
			\end{tikzpicture}
			
			&
		\begin{tikzpicture}[thick, scale=.6]
			\node (a) at (0,1.45) {$\v_1$};
			\node (a) at (-1.45,0) {$\v_4$};
			\node (a) at (1.6,0) {$\v_2$};
			\node (a) at (0,-1.45) {$\v_3$};
			\draw[] (0,1) circle (1.5pt);
			\draw[] (-1,0) circle (1.5pt);
			\draw[] (1,0) circle (1.5pt);
			\draw[] (0,-1) circle (1.5pt);
			\draw[midarrow=red]				(0,1) -- (1,0);
			\draw[midarrowminus=green]				(1,0) -- (0,-1);
			\draw[midarrowminus]				(0,-1) -- (-1,0);
			\draw[midarrowminus=green]			(-1,0)	 -- (0,-1);
			\draw[midarrow]				(-1,0) -- (1,0);
			\draw[]				(0,1) -- (-1,0);
			\end{tikzpicture}
			
			&
		\begin{tikzpicture}[thick, scale=.6]
			\node (a) at (0,1.45) {$\v_1$};
			\node (a) at (-1.45,0) {$\v_4$};
			\node (a) at (1.6,0) {$\v_2$};
			\node (a) at (0,-1.45) {$\v_3$};
			\draw[] (0,1) circle (1.5pt);
			\draw[] (-1,0) circle (1.5pt);
			\draw[] (1,0) circle (1.5pt);
			\draw[] (0,-1) circle (1.5pt);
			\draw[midarrow=red]				(0,1) -- (1,0);
			\draw[midarrow=green]				(1,0) -- (0,-1);
			\draw[midarrow=green]				(-1,0) -- (0,-1);
			\draw[midarrow]				(-1,0) -- (1,0);
			\draw[]				(0,1) -- (-1,0);
			\end{tikzpicture}
			
			\\
			\hline	
		\end{tabular}
	\end{center}
\end{table}

2) In this case, we orient some edges connected to node $v$. Thus, for making a  cycle $C$ by orienting some edges during the $\SET{S}$ construction, two case can be considered. In addition of already directed edges, one (the first case) or two (the second case) edges are oriented in $C$ in the procedure of $\SET{S}$ construction. The first case cannot be occurred because it means we have a Meek candidate sub-graph $M_2$ while the Meek functions $M_{124}$ in Line 5 or Line 9 has already been applied. In the second case, if both edges are oriented, they are out-going edges and cannot make a cycle.

3) In this part, we prove that the set of directed edges in the result of applying Meek function $M_{124}$ on $\SET{S}\sqcup {\E}_\mathscr{G}$ cannot generate a cycle. To do so, we recall from part 1 that applying Meek function $M_{124}$ on a set with the consistent directed edges cannot make a cycle. As we showed, the union of set $\SET{S}$ and set ${\E}_\mathscr{G}$ is consistent, no cycle exists in the result of applying Meek function $M_{124}$ on this set.

All in all, we proved that for any $\E$ as an input that is satisfying our assumptions, the set of directed edges after each iteration of the algorithm are subset or equal of a causal DAG in true MEC. As the number of edges are finite, the algorithm terminates and return a causal DAG in true MEC and the proof is complete.

\newpage
\vskip 0.2in

\bibliography{main}

\end{document}